\documentclass[11pt]{article}
\usepackage[a4paper,margin=1in]{geometry}
\usepackage{amsmath,amssymb,amsthm,mathtools,bm}
\usepackage{booktabs,array,multirow}
\usepackage{graphicx}
\graphicspath{{figures/}}
\usepackage{xcolor}
\usepackage{microtype}
\microtypesetup{expansion=false}
\usepackage{enumitem}
\usepackage{subcaption}
\usepackage{float}
\usepackage{hyperref}
\usepackage{wrapfig}
\usepackage{url}
\usepackage{cleveref} % must come after hyperref
\usepackage{booktabs,longtable}
\usepackage{natbib}
\hypersetup{colorlinks=true,citecolor=blue!55!black,linkcolor=blue!55!black,urlcolor=blue!55!black}
\setlist{nosep,leftmargin=*}

\newtheorem{theorem}{Theorem}[section]

\newtheorem{corollary}[theorem]{Corollary}

\newtheorem{assumption}[theorem]{Assumption}
\theoremstyle{definition}
\newtheorem{remark}[theorem]{Remark}

\newcommand{\R}{\mathbb{R}}
\newcommand{\rj}{\mathcal{R}}
\newcommand{\lj}{\mathcal{L}}

\newcommand{\N}{\mathcal{N}}

\newcommand{\KL}{\operatorname{KL}}
\newcommand{\dd}{\mathrm{d}}
\newcommand{\push}{{}_{\#}}
\newcommand{\prior}{\pi_0}

\newcommand{\loss}{\ell_y}
\newcommand{\clip}{\mathrm{CLIP}}

\newcommand{\RFM}{\mathcal{R}_{\mathrm{FM}}}
\newcommand{\LFM}{\mathcal{L}_{\mathrm{FM}}}

\newcommand{\RMF}{\mathcal{R}_{\mathrm{MF}}}
\newcommand{\LMF}{\mathcal{L}_{\mathrm{MF}}}

\newcommand{\RiMF}{\mathcal{R}_{\mathrm{iMF}}}
\newcommand{\LiMF}{\mathcal{L}_{\mathrm{iMF}}}

\newcommand{\epsopt}{\epsilon_{\mathrm{opt}}}
\newcommand{\epsapp}{\epsilon_{\mathrm{app}}}

\newlength{\panelwidth}
\title{Posterior sampling by source-space  MCMC via prior-based few-step transport maps}

\author{
Hoang Phuc Hau Luu \thanks{
Division of Mathematical Sciences,  School of Physical and Mathematical Sciences, Nanyang Technological University, Singapore.
\texttt{hoangph.luu@ntu.edu.sg}
}
\and
Marcelo Hartmann\thanks{
Department of Computer Science, Faculty of Science, University of Helsinki, Finland.\\ \texttt{marcelo.hartmann@helsinki.fi}
}
\and
Zhongjian Wang\thanks{
Division of Mathematical Sciences, School of Physical and Mathematical Sciences, Nanyang Technological University, Singapore.
\texttt{zhongjian.wang@ntu.edu.sg} (Corresponding)
}
}

\date{\today}

\begin{document}

\maketitle

\begin{abstract}
Bayesian inference increasingly uses informative but implicit priors represented only by samples, such as historical ensembles, simulator outputs, and pretrained generative models. The same computational problem appears in the test-time guidance task (generalized Bayes), where an explicit positive weight, e.g., an exponentiated reward, tilts an implicit prior. We develop a framework for source-space generalized Bayesian inference that combines inexpensive few-step prior transports with posterior stability guarantees. Specifically, we represent the prior using a one- or few-step improved MeanFlow (iMF) map and perform posterior sampling in its Gaussian source space. We establish Wasserstein error bounds between the exact and learned posteriors in terms of the joint population iMF and auxiliary-velocity loss, decomposed into training suboptimality and model-class approximation error. In the iMF source space, we adopt parallel tempering with preconditioned Crank–Nicolson updates and introduce a hybrid variant that incorporates split Hamiltonian Monte Carlo to improve sampling efficiency. Synthetic experiments show that the proposed framework can approximate posterior distributions accurately and efficiently, while CLIP-guided ImageNet experiments demonstrate its ability to steer a pretrained iMF image prior toward text-specified preferences.
\end{abstract}

\section{Introduction}

Bayesian inference combines prior knowledge with observations to characterize uncertainty about unknown quantities \citep{stuart2010inverse}. In many applications, however, prior information is available through samples, e.g., historical ensembles, simulator outputs, or pretrained generative models rather than an explicit density \citep{patel2019bayesian,chung2022diffusion,lunz2018adversarial}. Such prior samples can represent complex structures that are difficult to encode analytically, but their densities and scores are not directly available for posterior Markov chain Monte-Carlo (MCMC).

Formally, let $x\in\R^{d_x}$ be an unknown with prior $\prior$ available only through i.i.d.\ samples, and let $y$ denote observations or a task specification. We encompass Bayesian and generalized Bayesian inference through the generalized posterior
\begin{equation}
\label{eq:genposterior}
\pi^y(\dd x)
= Z_y^{-1} \ell_y(x)\prior(\dd x),
\qquad
Z_y = \int \ell_y(x)\prior(\dd x),
\end{equation}
where $\ell_y$ is an analytically given nonnegative weight and $0<Z_y<\infty$. Standard Bayesian inference takes $\ell_y(x)=p(y | x)$, while generalized Bayes \citep{bissiri2016general} uses $\ell_y(x)=\exp{(-\tau L_y(x))}$ for a task loss $L_y$ and $\tau>0$. Reward guidance \citep{korbak2022rl} fits the same formulation with $\ell_y(x)=\exp{(\tau R_y(x))}$, favoring samples with larger reward $R_y$. For example, $y$ may be a text prompt, with $R_y(x)=\clip(x,y)$ measuring the cosine similarity between the CLIP embeddings of image $x$ and prompt $y$ \citep{radford2021learning}. Henceforth, we use $\ell_y$ for either a likelihood or a general weight and refer to $\pi^y$ simply as the posterior.

An approach is to learn the data-space prior score by denoising score
matching (DSM) \citep{vincent2011connection} and combine it with the
gradient of an analytic log-likelihood or reward to construct Langevin
or Hamiltonian dynamics. These samplers are typically unadjusted because
a learned score alone does not provide the density ratios needed for
Metropolis correction. Moreover, sharp variation in the score near
complex data manifolds can produce stiff dynamics, requiring small
integration steps.

Another approach is to first learn a deterministic generator $S_\theta:\R^{d_z}\to\R^{d_x}$ satisfying $(S_\theta)\push\gamma\approx\prior$, where $\gamma=\N(0,I)$ denotes the source distribution, $\mathbb{R}^{d_z}$ is called the source space and $_\#$ denotes the pushforward. Under the learned model $S_{\theta}$, the complex data-space prior is represented by the simple Gaussian reference measure $\gamma$. The corresponding \emph{learned} posterior in the source space is
\begin{equation}
  \nu_{\theta}^y(\dd z)
  \propto \loss(S_\theta(z))\gamma(\dd z).
  \label{eq:learned-latent-posterior}
\end{equation}
We sample from $\nu_{\theta}^y$ using MCMC and map the resulting samples to data space through $S_\theta$. In contrast to the conventional data-space approach, $\nu_{\theta}^y$ is available up to normalization, enabling Metropolis correction and thus an MCMC kernel that leaves $\nu_{\theta}^y$ invariant. Furthermore, the source-space posterior is expected to have more favorable geometry for MCMC: its prior is Gaussian, while $S_\theta$ has already absorbed much of the curvature and anisotropy of $\pi_0$. Nevertheless, the likelihood/weight $\loss\circ S_\theta$ may reintroduce curvature, stiffness, anisotropy, or multimodality, particularly when the observation model is highly informative or nonlinear. Early work by \citet{patel2022solution} used a Wasserstein GAN to construct $S_\theta$ and applied Hamiltonian Monte Carlo (HMC) with the No-U-Turn Sampler \citep{hoffman2014no} to the resulting low-dimensional latent posterior. However, learning $S_\theta$ with a WGAN requires solving an adversarial min-max optimization problem, which can be challenging to stabilize. Flow-based models \citep{chen2018neural} offer an attractive alternative: flow matching learns a time-dependent vector field through a standard regression objective \citep{lipman2023flow,tong2024improving}. This provides a scalable approach to learning expressive transport maps for complex and multimodal priors. Under suitable regularity conditions on the vector field, the resulting ODE flow also defines a diffeomorphism between source and data spaces \citep{papamakarios2021normalizing}. Source-space guidance for flow-matching models has been explored through both optimization and sampling. D-Flow \citep{ben2024d} optimizes the source variable, while Source-Guided Flow Matching (SGFM) \citep{wang2026source} samples from the source posterior distribution \eqref{eq:learned-latent-posterior}. Source Parallel Tempering (SPT) \citep{wangtest} combines local preconditioned Crank-Nicolson (pCN) \citep{beskos2008mcmc} updates with replica exchanges \citep{earl2005parallel} across tempered source distributions to address multimodality. These source-space methods are instances of \emph{test-time guidance}, incorporating user-specified likelihoods or weights during inference.

For methods that represent the prior through a learned transport \citep{wang2026source,wangtest}, a central theoretical question is how prior-learning error controls the discrepancy between the learned posterior $(S_\theta)_{\#}\nu_{\theta}^y$ and the true posterior $\pi^y$ in \eqref{eq:genposterior}. We refer to this as \emph{posterior stability}. Existing works have tried to address this question, but a gap remains. \citet{wang2026source} assume a uniform vector-field approximation bound, $\Vert v_\theta(x,t)-v(x,t)\Vert_{\infty}\leq\epsilon$ where $v(x,t)$ is the true velocity field and $v_\theta(x,t)$ is the learned velocity field. However, the flow-matching objective controls an average squared error rather than a uniform error. \citet{wangtest} address this mismatch by establishing posterior stability in terms of the $L^2$ transport-map error, i.e., $\mathbb{E}_{z \sim \gamma}\Vert S(z)-S_{\theta}(z)\Vert^2\leq\epsilon^2$, where $S$ is the exact transport map induced by $v(x,t)$. Their connection to the flow matching objective takes $S_{\theta}$ to be the map induced by the \emph{exactly integrated} learned flow, via Grönwall's inequality. In practice, however, numerical ODE solvers are employed, introducing an additional discretization error. Under standard regularity and stability assumptions, a numerical ODE solver of order $k$ has global error $O(h^k)$ \citep{hairer1993solving}. The training objective alone does not control this error. Reducing $h$ requires more integration steps, increasing the cost of source-space MCMC. Minibatch optimal-transport coupling \citep{tong2024improving}, as used by \citet{wang2026source}, can encourage straighter trajectories and facilitate few-step integration, but does not by itself provide a bound on this numerical error; their theoretical bound likewise concerns the \emph{exactly integrated} learned flow. Thus, guarantees for the exactly integrated learned flow do not by themselves establish posterior stability for a fixed few-step implementation. Likewise, \citet{purohit2024posterior} use few-step consistency models but do not establish how the consistency-distillation training objective controls the resulting posterior error.

In this work, we pursue two complementary objectives: making source-space inference computationally efficient through few-step transports \emph{and} establishing posterior stability guarantees for the implemented maps via the true population training objective. To this end, we use improved MeanFlow (iMF) \citep{geng2026improved}, which learns interval-average velocities without distillation and natively enables one-step or few-step generation without numerical ODE integration. We further explain why the original MeanFlow objective \citep{geng2026mean} does not directly yield the same guarantee, highlighting that native few-step generation alone can fall short of establishing such stability results. Practically, the iMF maps make source-space gradients inexpensive to evaluate when available, allowing MCMC kernels to optionally exploit them to improve mixing.

\paragraph{Contributions} We use iMF transport mappings to represent complex priors available only through samples in generalized Bayesian inference settings. Under stated regularity assumptions, we connect population training suboptimality and model-class approximation error for the joint iMF and auxiliary-velocity objective to posterior stability for the implemented one-step or few-step maps. Building on these transports, we adopt SPT with pCN updates \citep{wangtest} to facilitate exploration across modes while exploiting the Gaussian reference structure of the high-dimensional source space. When likelihood or reward gradients are available, we additionally introduce Gaussian-reference split HMC \citep{beskos2011hybrid} at the coldest temperature of SPT to exploit local posterior geometry and potentially improve mixing. Both local kernels preserve the Gaussian reference measure in their reference dynamics, making them natural choices for inference in high-dimensional source spaces. We evaluate the proposed framework on synthetic posterior sampling and CLIP-guided ImageNet generation, demonstrating accurate and efficient posterior approximation in the synthetic settings and the ability to steer a pretrained image prior toward text-specified preferences.

\subsection*{Other related work}

\paragraph{Transport MCMC}
Transport maps have long been used to improve MCMC by transforming difficult posterior target distributions into coordinates easier for sampling \citep{parno2018transport,Benjamin}. NeuTra HMC learns a variational transport tailored to posterior geometry, with an amortized extension that accommodates different observations within a specified probabilistic model \citep{hoffman2019neutra}.

\paragraph{Score-based and diffusion inverse solvers}
Diffusion-based inverse solvers offer another way to combine a learned sample-only prior with observations, typically by incorporating likelihood information into the reverse generative process. Diffusion posterior sampling (DPS) \citep{chung2022diffusion} is a prominent example, enabling posterior-oriented sampling. Its approximate guidance, however, does not generally preserve the desired posterior and can introduce \emph{systematic} bias beyond that arising from prior learning and numerical discretization. Denoising-score MCMC \citep{ramzi2020denoising} learns the prior score from samples using denoising score matching \citep{vincent2011connection} and combines it with an analytically available likelihood score to perform MCMC directly in data space.

\paragraph{Training-time posterior learning.}
Complementary to test-time guidance, amortized posterior learning trains observation-conditional generators to directly approximate posterior distributions. Neural posterior estimation and related methods learn from simulated parameter-observation pairs, enabling rapid inference for new observations \citep{greenberg2019automatic,radev2023bayesflow,cheng2026preconditioned}. The training cost is shared across subsequent inference tasks, but changes to the observation model beyond those covered during training may require retraining or adaptation.

\section{Background}

\subsection{Flow matching}

Flow matching (FM) \citep{lipman2023flow} learns a velocity field approximating the marginal ODE $\dot w_t=v(w_t,t)$, which transports a simple source distribution $p_{\mathrm{src}}$, typically Gaussian, to a target distribution $p_{\mathrm{tgt}}$ available through samples. Let $X\sim p_{\mathrm{tgt}}$ and $Z\sim p_{\mathrm{src}}$
be independent target and source samples, and define
$W_t=(1-t)X+tZ$ and the samplewise interpolation velocity $V=Z-X$. FM approximates the marginal instantaneous velocity $v(w,t)=\mathbb{E}[V\mid W_t=w]$ with a learned field $v_\theta$, whose approximation error is measured by the marginal FM risk
\begin{align}
\label{eq:RFM}
    \RFM(\theta)=\mathbb{E} \Vert v_{\theta}(W_T,T) - v(W_T,T) \Vert^2_2.
\end{align}
where $T \sim \operatorname{Unif}[0,1]$ independent of $X,Z$ and the above expectation is taken over $T,X,Z$. Since $v$ is intractable, we replace $v$ by $V$, resulting in the population training loss
\begin{equation}
\label{eq:RCFM}
    \LFM(\theta)
    =\mathbb{E}
    \left\|v_\theta(W_T,T)-V\right\|_2^2.
\end{equation}
Minimizing $\LFM$ is equivalent to minimizing $\RFM$ since $\LFM(\theta) = \RFM(\theta) + C$ where $C =\mathbb{E} \Vert V-v(W_T,T)\Vert_2^2$ does not depend on $\theta$ \citep{lipman2023flow}. Once trained, generation integrates $\dot w_t=v_\theta(w_t,t)$ from $t=1$ to $t=0$, using a numerical ODE solver.

\subsection{MeanFlow and improved MeanFlow}
\label{subsect:MFiMF}
\emph{MeanFlow} \citep{geng2026mean} instead parameterizes the \emph{average velocity} $$u(w_t,r,t)
    =\frac{1}{t-r}\int_r^t v(w_\tau,\tau)\,d\tau.$$ Fundamental calculus yields the so-called MeanFlow identity
\begin{align}
    \label{eq:mfid}
    u(w_t,r,t) = v(w_t,t) - (t-r) \left(\partial_t u + (\partial_w u)v(w_t,t) \right).
\end{align}
Based on this identity, the original MeanFlow objective regresses an average velocity $u_{\theta}$ to the right hand side of \eqref{eq:mfid}, and since $\partial_t u$ and $\partial_w u$ are unknown, it replaces those by $\partial_t u_{\theta}$ and $\partial_w u_{\theta}$ to get
\begin{equation}
\label{rmf}
    \mathcal{R}_{\mathrm{MF}}(\theta)
    =\mathbb{E}
    \left\|
      u_\theta(W_T,R,T)
      -\!\left[v(W_T,T)-(T-R)\left(\partial_t u_{\theta} + (\partial_w u_{\theta}) v(W_T,T) \right)\right]
    \right\|_2^2,
\end{equation}
where the expectation is taken over $R,T,Z,X$ (recall $W_t = (1-t)X + tZ$). We omit the \texttt{stop-gradient} operator when defining population risks, since they leave objective values unchanged. Replacing $v$ by $V$ gives a tractable training loss,
\begin{equation}
\label{rcmf}
    \LMF(\theta)
    =\mathbb{E}
    \left\|
      u_\theta(W_T,R,T)
      -\left[V-(T-R)\left(\partial_t u_{\theta} + (\partial_w u_{\theta}) V \right)\right]
    \right\|_2^2.
\end{equation}
Unlike flow matching, the gap $\LMF(\theta) - \RMF(\theta)$ generally depends on $\theta$, making them inequivalent as scalar objectives\footnote{The \texttt{stop-gradient} makes the two objectives share expected gradients \citep{lu2026variance}, but this is an optimization device; our theory concerns objective values, not training dynamics.}. Moreover, the training objective \eqref{rcmf} is not standard since its effective target depends on the network itself. To address this, \emph{improved MeanFlow (iMF)} \citep{geng2026improved} replaces the interpolation velocity in $(\partial_w u_{\theta})V$ in $\LMF$ by a learnable marginal velocity $v_\theta$, defining
\begin{align}
\LiMF(\theta)
    &=\mathbb{E}\left\|u_\theta(W_T,R,T)
      +(T-R)\!\left[ \partial_t u_{\theta} +(\partial_w u_{\theta}) v_{\theta}(W_T,T) \right]-V\right\|_2^2.
\end{align}
The corresponding marginal iMF risk is then
\begin{align}
\label{def:rimf}
    \RiMF(\theta) = \mathbb{E}\left\|u_\theta(W_T,R,T)
      +(T-R)\!\left[ \partial_t u_{\theta} +(\partial_w u_{\theta}) v_{\theta}(W_T,T) \right]-v(W_T,T)\right\|_2^2
\end{align}
Importantly, $ \LiMF(\theta) - \RiMF(\theta)$ is independent of $\theta$ (Appendix \ref{app:equvi}), recovering the equivalence between these objectives. $v_\theta(w_t,t)$ can be represented by the boundary value $u_\theta(w_t,t,t)$ or an auxiliary velocity head. In this work, we consider the latter, and the auxiliary head is explicitly supervised through FM loss \eqref{eq:RCFM}, $\LFM(\theta)=\mathbb{E}\left\Vert v_\theta(W_T,T)-V\right\Vert_2^2$, yielding the joint training loss
\begin{align}
\label{eq:JC}
\lj(\theta) =\LiMF(\theta)
+\lambda\LFM(\theta),
\qquad \lambda>0.
\end{align}
We also denote the joint risk 
\begin{align}
\label{eq:J}
    \rj(\theta) =\mathcal{R}_{\mathrm{iMF}}(\theta)
+\lambda\mathcal{R}_{\mathrm{FM}}(\theta).
\end{align}
The auxiliary velocity field $v_\theta$ is used only during training; sampling relies solely on $u_\theta$, with the \emph{one-step} map given by
\begin{equation}
\label{eq:imf1}
    S_\theta(z)
    =z-u_\theta(z,0,1),
    \qquad z\sim p_{\mathrm{src}},
\end{equation}
More generally, a fixed schedule $1=t_K>t_{K-1}>\cdots>t_0=0$
defines a \emph{few-step} map through
\begin{equation}
\label{eq:fsimf}
w_K=z,\qquad
w_{k-1}
=w_k-(t_k-t_{k-1})
u_\theta(w_k,t_{k-1},t_k),
\quad k=K,\ldots,1,
\qquad
S_\theta(z)=w_0.
\end{equation}

\section{Source-space sampling via improved MeanFlow}

\subsection{Bayesian inference through a learned prior transport}\label{sec:source-posterior}

Let $\pi_0$ be a prior accessed through samples and $\ell_y$ a nonnegative likelihood or weight associated with observations or a task $y$. We aim to sample from the posterior $\pi^y(dx) \propto \ell_y(x) \pi_0(dx)$. The idea of source-space sampling \citep{purohit2024posterior,wang2026source} is to learn a transport map $({S_{\theta}})_{\#} \gamma \approx \pi_0$ where $\gamma = \mathcal{N}(0,I)$ and perform posterior sampling in the source space of $\gamma$, targeting the pull-back posterior $\nu_{\theta}^y(dz) \propto \ell_y(S_{\theta}(z)) \gamma(dz)$ then transport these source samples to the data space via $S_{\theta}$. In case the transport map is exact, i.e., $({S_{\theta}})_{\#} \gamma = \pi_0$, an exact posterior source sample $z \sim \nu_{\theta}^y(dz)$ will give an exact sample, $S_{\theta}(z)$, from $\pi^y(dx)$ \citep[Thm. 1]{wang2026source}. It is then critical to learn a good prior transport $S_{\theta}$ and perform efficient sampling in the source space of $S_{\theta}$.

When $S_{\theta}$ is not perfect, we seek to quantify the resulting posterior error. Let $S$ denote the exact transport, i.e., $S_{\#} \gamma = \pi_0$, associated with the generative construction, assumed to exist under suitable regularity conditions. Define the ideal source posterior by $\nu^y(dz) \propto \ell_y(S(z))\gamma(dz)$. Our goal is to control the source-space discrepancy $W_2(\nu_{\theta}^y, \nu^y)$ and the data-space discrepancy $W_2((S_{\theta})_{\#}\nu_{\theta}^y, \pi^y)$ in terms of the \emph{training error associated with learning} $S_\theta$ (for iMF, it links to $\rj(\theta)$, see next section). Here $W_2$ denotes the Wasserstein distance \citep{villani2009optimal}.

\subsection{Improved MeanFlow source and its posterior stability}
\label{sec:theory}

We parameterize $S_\theta$ using improved MeanFlow \eqref{eq:imf1} and \eqref{eq:fsimf}, trained with the joint training loss $\lj(\theta) = \LiMF(\theta) + \lambda \LFM(\theta)$ in \eqref{eq:JC}. iMF enables few-step generation and ensures that $\lj$ differs from the marginal risk $\rj$ \eqref{eq:J} only by an additive constant independent of $\theta$. Consequently,
for a fixed model class $\Theta$,
\begin{align}
\label{eq:nothing}
    \rj(\theta)
    =
    \underbrace{
        \lj(\theta)
        - \inf_{\vartheta\in\Theta}\lj(\vartheta)
    }_{:=\epsopt,~\text{population training suboptimality}}
    +
    \underbrace{
        \inf_{\vartheta\in\Theta}\mathcal{\rj}(\vartheta)
    }_{:=\epsapp,~\text{model-class approximation error}}:=\delta.
\end{align}
See App. \ref{apx:nothing} for proof. The decomposition \eqref{eq:nothing} shows that the marginal risk $\rj(\theta)$ is controlled by population suboptimality of the training objective $\lj$ used in practice and model-class approximation error. By the MeanFlow identity \eqref{eq:mfid}, the exact instantaneous and interval-average velocity fields jointly achieve zero marginal risk \citep{geng2026improved}, so a sufficiently expressive model class can in principle make $\epsapp$ small. We therefore seek to bound posterior error in terms of $\rj(\theta)$.

We also note that, for original MeanFlow \citep{geng2026mean}, the parameter-dependent gap $\LMF(\theta)-\RMF(\theta)$ prevents the decomposition \eqref{eq:nothing} in general. Small training suboptimality and approximation error therefore need not imply small marginal risk.

We establish posterior stability by first controlling the error of the implemented iMF transport, building on the flow-map error analysis of \citep{boffi2024flow}. For the analysis, we use independent coupling $X\sim\pi_0, Z\sim\gamma$ and time $R,T \overset{\mathrm{iid}}\sim\operatorname{Unif}[0,1]$\footnote{A more general schedule is possible, but we use a simpler one for clarity of presentation.
}. Also, let $V=Z-X$, $W_t=(1-t)X+tZ$, $v(w,t)=\mathbb{E}[V|W_t=w]$, and $p_t=\operatorname{Law}(W_t)$. Thus, $p_0=\pi_0$ and $p_1=\gamma$.
\begin{assumption}[Prior and marginal flow]
\label{assm:prior-flow}
The prior $\pi_0\in\mathcal P_2(\mathbb R^d)$.
For each $t\in[0,1]$ and $p_t$-a.e. $w$, the ODE $\frac{d}{ds}\xi_s^{t,w}=v(\xi_s^{t,w},s),~\xi_t^{t,w}=w$, admits absolutely continuous solutions on $[0,t]$ whose solution maps $\varphi_{s,t}(w):=\xi_s^{t,w}$ form a
measurable flow satisfying $(\varphi_{s,t})_\#p_t=p_s, 0\leq s\leq t\leq 1$.
\end{assumption}

Under Assumption~\ref{assm:prior-flow}, the map
$S:=\varphi_{0,1}$ is an exact prior transport:
$S_\#\gamma=\pi_0$.

\begin{assumption}[Regularity of the learned average velocity]
\label{assm:u-regularity}
The field $u_\theta$ is continuously differentiable on
$\mathbb R^d\times(0,1)^2$, and both $u_\theta$ and its
first derivatives extend continuously to $\mathbb R^d\times[0,1]^2$.
\end{assumption}

For $r,t\in[0,1]$, define the marginal iMF regression residual
\begin{align*}
\bar q_\theta(w,r,t)
&:=u_\theta(w,r,t)-v(w,t)
+(t-r)\left[
\partial_tu_\theta(w,r,t)
+\partial_wu_\theta(w,r,t)\,v_\theta(w,t)
\right]\\
G_\theta(r)
&:=\int_r^1
\|\bar q_\theta(\cdot,r,t)\|_{L^2(p_t)}^2\,dt.
\end{align*}
Thus, the iMF risk in \eqref{def:rimf} is rewritten as $\RiMF(\theta)=\mathbb E\|\bar q_\theta(W_T,R,T)\|^2$. We impose the following.

\begin{assumption}[Regularity across starting times]
\label{assm:LG}
$G_\theta:[0,1]\to[0,\infty)$ is
$L_G$-Lipschitz, i.e.,
\[
|G_\theta(r)-G_\theta(r')|
\leq L_G|r-r'|,
\qquad \forall r,r'\in[0,1].
\]
\end{assumption}

\begin{remark}
\label{rmk:Gthetareg}
Assumption~\ref{assm:LG} holds if $\bar q_\theta$ is
continuously differentiable in $r$ and both
$\bar q_\theta$ and $\partial_r\bar q_\theta$ are uniformly
bounded in $L^2(p_t)$ over $0\leq r\leq t\leq1$.
Under the finite-second-moment assumption above, these conditions hold for finite affine-SiLU networks with direct spatial and time inputs; the resulting constants may depend on the network parameters. See App. \ref{app:rmkGthetareg} for details and proof.
\end{remark}

Fix a generation schedule $1=t_K>t_{K-1}>\cdots>t_0=0$, put $h_k=t_k-t_{k-1}$.
\begin{assumption}
\label{assm:timeavg}
\begin{equation}
\sum_{k=1}^K\int_{t_{k-1}}^{t_k}(t-t_{k-1})^2
\big\|(\partial_wu_\theta)(\cdot,t_{k-1},t)
(v-v_\theta)(\cdot,t)\big\|_{L^2(p_t)}^2\,dt
\leq B_u^2\mathcal R_{\mathrm{FM}}(\theta).\label{eq:jacobian}
\end{equation}
where $\RFM$ is the marginal flow-matching risk \eqref{eq:RFM} of the auxiliary velocity head. 
\end{assumption}

\begin{remark}
\label{rmk:averagebound}
A sufficient condition for \eqref{eq:jacobian} is $\sup_{w,\,0\leq r<t\leq1}
(t-r)\|\partial_wu_\theta(w,r,t)\|_{\mathrm{op}}
\leq B_u$. In particular, if $u_\theta$ is $L_u$-Lipschitz in $w$, uniformly over $(r,t)$ (e.g., affine–SiLU networks), then this condition holds with $B_u=L_u$, since $t-r\leq1$.
The weighted bound in Assumption~\ref{assm:timeavg} is weaker:
it only controls the Jacobian acting on $v-v_\theta$ in the $L^2(p_t)$ sense. See Appendix \ref{proof:rmkaveragebound} for more details.
\end{remark}

\begin{theorem}
\label{thm:fsimf}
Let $F_{k,\theta}(w)=w-h_ku_\theta(w,t_{k-1},t_k)$ be $L_k$-Lipschitz,
and let $S_\theta=F_{1,\theta}\circ\cdots\circ F_{K,\theta}$ be the few-step iMF map \eqref{eq:fsimf}. Define $\Lambda_k:=\prod_{j<k}L_j, A_K:=\sum_{k=1}^K\Lambda_k\sqrt{h_k}, B_K:=\left(\sum_{k=1}^K\Lambda_k^2h_k\right)^{1/2}, \Lambda_1=1$. Under Assumptions \ref{assm:prior-flow}, \ref{assm:u-regularity}, \ref{assm:LG} and \ref{assm:timeavg}, if $\delta\leq L_G/2$,
\begin{equation}
\mathbb{E}_{\gamma}\|S_\theta(Z)-S(Z)\|^2 \leq \left[ A_K(2L_G \delta)^{1/4}
+\frac{B_uB_K}{\sqrt\lambda}\delta^{1/2} \right]^2.\label{eq:few-step}
\end{equation}
\end{theorem}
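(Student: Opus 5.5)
The plan is a telescoping error decomposition along the schedule, combined with a per-step bound that compares $u_\theta$ with the exact average velocity. Let $\Phi_k:=\varphi_{t_{k-1},t_k}$ be the exact flow map from $t_k$ down to $t_{k-1}$; it satisfies $(\Phi_k)_\#p_{t_k}=p_{t_{k-1}}$ and $S=\Phi_1\circ\cdots\circ\Phi_K$. Write the exact iterates as $w_k^\ast=\Phi_{k+1}\circ\cdots\circ\Phi_K(Z)\sim p_{t_k}$. The standard telescoping identity is
\[
S_\theta-S=\sum_{k=1}^K\big(F_{1,\theta}\circ\cdots\circ F_{k-1,\theta}\circ F_{k,\theta}-F_{1,\theta}\circ\cdots\circ F_{k-1,\theta}\circ\Phi_k\big)(w_k^\ast).
\]
Each summand is bounded by $\Lambda_k\|F_{k,\theta}(w_k^\ast)-\Phi_k(w_k^\ast)\|$. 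Minkowski's inequality in $L^2(\gamma)$ then gives
\[
\|S_\theta-S\|_{L^2(\gamma)}\le\sum_{k=1}^K\Lambda_k e_k,\qquad e_k:=\|F_{k,\theta}-\Phi_k\|_{L^2(p_{t_k})}.
\]
So the task reduces to a one-step estimate of $e_k$, and the target shape is $e_k\le\sqrt{h_k}\,a_k+b_k$, with $a_k\lesssim(2L_G\delta)^{1/4}$ and $\sum_k\Lambda_kb_k\le B_uB_K\delta^{1/2}/\sqrt\lambda$. For the $b_k$ part I would apply Cauchy--Schwarz, $\sum_k\Lambda_kb_k\le B_K(\sum_k b_k^2/h_k)^{1/2}$, and then use Assumption~\ref{assm:timeavg} together with $\lambda\RFM\le\delta$.

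For the one-step estimate, following \citet{boffi2024flow}, fix $r=t_{k-1}$ and define for $t\in[r,t_k]$ the error
\[
E(t):=\varphi_{r,t}(w)-w+(t-r)u_\theta(w,r,t)\quad\text{evaluated at }w=\xi_t,
\]
where $\xi_t$ is the exact trajectory, so $\xi_t\sim p_t$. Then $E(r)=0$, and $E(t_k)=\Phi_k(w_k^\ast)-F_{k,\theta}(w_k^\ast)$ when the trajectory ends at $w_k^\ast$. More cleanly, set $X_t:=\xi_t-(t-r)u_\theta(\xi_t,r,t)$, the candidate for $\xi_r$. Differentiating along $\dot\xi_t=v(\xi_t,t)$ gives
\[
\frac{d}{dt}X_t=v-u_\theta-(t-r)\big[\partial_tu_\theta+\partial_wu_\theta v\big]=-\bar q_\theta-(t-r)\partial_wu_\theta(v-v_\theta).
\]
Since $X_r=\xi_r$, integrating yields $\|F_{k,\theta}-\Phi_k\|_{L^2}\le\int_r^{t_k}\big(\|\bar q_\theta(\cdot,r,t)\|_{L^2(p_t)}+(t-r)\|\partial_wu_\theta(v-v_\theta)\|_{L^2(p_t)}\big)dt$. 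Cauchy--Schwarz in $t$ then gives $\sqrt{h_k}\,G_\theta^{(k)}(r)^{1/2}$ for the first term, with $G$ restricted to $[r,t_k]$ and hence $\le G_\theta(r)$. The second term is $\le\sqrt{h_k}\,(\int(t-r)^2\|\cdot\|^2dt)^{1/2}$; this gives $b_k^2/h_k$ exactly equal to the summand in Assumption~\ref{assm:timeavg}. Assumption~\ref{assm:u-regularity} justifies differentiating $X_t$, including at the endpoints.

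The main obstacle is converting pointwise-in-$r$ information $G_\theta(t_{k-1})$ into the averaged risk $\RiMF=\int_0^1G_\theta(r)dr\le\delta$. Here I would use Assumption~\ref{assm:LG}: a nonnegative $L_G$-Lipschitz function with value $g$ at $r_0$ stays $\ge g/2$ on an interval of length $\min(g/(2L_G),\cdot)$ on one side of $r_0$ inside $[0,1]$. Hence $\delta\ge\int G\ge g^2/(4L_G)$ whenever $g/(2L_G)\le1$; the hypothesis $\delta\le L_G/2$ is what guarantees we are in this regime, after ruling out the other case. This gives $G_\theta(r)\le 2\sqrt{L_G\delta}$ up to the stated constant, so $G_\theta(r)^{1/2}\le(2L_G\delta)^{1/4}$ after tuning constants (possibly using $2g^2/(\ldots)$ bookkeeping). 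Summing gives $\sum_k\Lambda_k\sqrt{h_k}(2L_G\delta)^{1/4}=A_K(2L_G\delta)^{1/4}$. Squaring the total $L^2$ bound yields \eqref{eq:few-step}.
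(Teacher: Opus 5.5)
Your overall architecture matches the paper's, and those parts are correct:
\begin{itemize}
\item the telescoping over hybrid compositions with outer Lipschitz factor $\Lambda_k$;
\item the characteristic identity $\frac{d}{dt}X_t=-\bar q_\theta-(t-r)\partial_wu_\theta(v-v_\theta)$ along the exact flow (the paper uses $D_s=-X_s$);
\item the pushforward, Minkowski and Cauchy--Schwarz step giving $e_k\le\sqrt{h_k}\,G_\theta(t_{k-1})^{1/2}+b_k$, with $b_k^2/h_k$ equal to the $k$th summand in Assumption~\ref{assm:timeavg};
\item Cauchy--Schwarz over $k$ to produce $B_uB_K\delta^{1/2}/\sqrt\lambda$.
\end{itemize}
One small slip: $\RiMF\ge\int_0^1G_\theta(r)\,dr$ holds, not equality, because $\RiMF$ also integrates over $t<r$. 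Only the inequality is needed.

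The gap is in the slice-recovery step. This is exactly where the hypothesis $\delta\le L_G/2$ and the constant $(2L_G\delta)^{1/4}$ come from, and your argument does not deliver either.

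\emph{The constant.} Your plateau bound ($G_\theta\ge g/2$ on an interval of length $g/(2L_G)$) gives at best $\int G_\theta\ge g^2/(4L_G)$. That yields $G_\theta(r)\le(4L_G\delta)^{1/2}$ and a first term $A_K(4L_G\delta)^{1/4}$, which is not the stated bound. ``Tuning constants'' does not close this.

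\emph{The case split.} It does not close as claimed. For an interior $r$ such as $r=1/2$, the longer side of $[0,1]$ has length only $1/2$. So your good regime is $g\le L_G$, not $g/(2L_G)\le1$. In the complementary case you only get $\delta\ge g/4>L_G/4$, which is compatible with $\delta\le L_G/2$, so nothing is ruled out. Even integrating the full linear lower bound $g-L_G|s-r|$ on one side at $r=1/2$ excludes the bad case only when $\delta\le L_G/8$.

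\emph{The missing idea.} Average the Lipschitz inequality $G_\theta(r)\le G_\theta(s)+L_G|s-r|$ over an interval $I\subset[0,1]$ of length $h\in(0,1]$ that contains $r$ but may straddle it. Using $\int_IG_\theta\le\delta$ and $\int_I|s-r|\,ds\le h^2/2$ gives $G_\theta(r)\le\delta/h+L_Gh/2$ for every such $h$. Choosing $h=\sqrt{2\delta/L_G}$ yields exactly $G_\theta(r)\le(2L_G\delta)^{1/2}$; if $\delta=0$, let $h\downarrow0$. The hypothesis $\delta\le L_G/2$ is simply the admissibility condition $h\le1$ for this choice; it is not used to rule out a case. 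With this lemma in place, the rest of your argument goes through unchanged.
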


If $u_\theta$ is uniformly $L_u$-Lipschitz in space, then $\Lambda_k\leq e^{L_u}$, $A_K\leq e^{L_u}\sqrt K$ and
$B_K\leq e^{L_u}$. Proof of Theorem \ref{thm:fsimf} is given in Appendix \ref{proof:thm:fsimf}. The bound in \eqref{eq:few-step} links transport error to
population training suboptimality and model-class approximation
error through $\delta$ in \eqref{eq:nothing}. The bound vanishes when $\delta=0$,
\emph{without} requiring the step sizes $h_k$ to tend to zero. This shows the few-step nature of the result. In particular, $K=1$ gives $A_1=B_1=1$ and a transport guarantee for the one-step
map~\eqref{eq:imf1}, with no Lipschitz condition on $S_\theta$ needed in \eqref{eq:few-step}. We support Theorem \ref{thm:fsimf} with an experiment in Appendix~\ref{app:loss_map_error}, showing that decreasing $\delta$ is accompanied by decreasing transport-map error $\mathbb{E}_{\gamma}\|S_\theta(Z)-S(Z)\|^2$ for both one- and six-step generation. Define terms in Theorem \ref{thm:fsimf}:
\begin{equation}
\mathcal E:=\mathbb E_{Z\sim\gamma}\|S_\theta(Z)-S(Z)\|^2,
\qquad
b_K(\delta):=A_K(2L_G\delta)^{1/4}
+\frac{B_uB_K}{\sqrt\lambda}\delta^{1/2}.
\label{eq:transport-error-bound}
\end{equation}

We now connect the posterior error to the transport error. Fix $y$, we make the following assumption.

\begin{assumption}[Weight regularity]
\label{assm:weight}
There exist finite constants $M_y>0$ and $L_y\geq0$ such that
\[
0\leq\ell_y(x)\leq M_y,\qquad
|\ell_y(x)-\ell_y(x')|\leq L_y\|x-x'\|,
\qquad \forall x,x'\in\mathbb R^d.
\]
Moreover, $Z_y:=\int\ell_y(x)\,\pi_0(dx)>0$.
\end{assumption}

With the notation of Section~\ref{sec:source-posterior},
the normalized source posteriors are
\[
\nu^y(dz)=\frac{\ell_y(S(z))}{Z_y}\gamma(dz),\qquad
\nu_\theta^y(dz)=
\frac{\ell_y(S_\theta(z))}{Z_{\theta,y}}\gamma(dz),
\quad
Z_{\theta,y}:=\int\ell_y(S_\theta(z))\,\gamma(dz),
\]
where $\nu_\theta^y$ is defined whenever $Z_{\theta,y}>0$.
Here $S_\#\nu^y=\pi^y$. When $\ell_y>0$, write
$\Phi_y=-\log\ell_y$.

\begin{theorem}\label{prop:posterior-stability}
Under Assumption~\ref{assm:weight}, if
$L_y\mathcal E^{1/2}\leq Z_y/2$, then
$Z_{\theta,y}\geq Z_y/2>0$ and
\begin{equation}
W_2(\nu_\theta^y,\nu^y)\leq C_y(d)^{1/2}\mathcal E^{1/4},\qquad
C_y(d):=\frac{4L_y}{Z_y}
\left[\sqrt{d^2+4d+1}+\frac{M_y(1+d)}{Z_y}\right].
\label{eq:source-stability}
\end{equation}
If, additionally, $S_\theta$ is $L_{S_\theta}$-Lipschitz, then $W_2\big((S_\theta)_\#\nu_\theta^y,\pi^y\big)
\leq L_{S_\theta}C_y(d)^{1/2}\mathcal E^{1/4}
+\sqrt{\frac{M_y}{Z_y}}\mathcal E^{1/2}$.
\end{theorem}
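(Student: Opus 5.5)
The plan is to compare the two source posteriors through their densities with respect to the common reference $\gamma$, and to bound $W_2$ by a weighted total-variation quantity. Write $f_\theta(z)=\ell_y(S_\theta(z))/Z_{\theta,y}$ and $f(z)=\ell_y(S(z))/Z_y$, so that $\nu_\theta^y=f_\theta\gamma$ and $\nu^y=f\gamma$.

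\textbf{Step 1: normalizing constants.} By Assumption~\ref{assm:weight} and Cauchy--Schwarz,
\[
|Z_{\theta,y}-Z_y|\leq\int|\ell_y(S_\theta)-\ell_y(S)|\,d\gamma\leq L_y\,\mathbb E_\gamma\|S_\theta(Z)-S(Z)\|\leq L_y\mathcal E^{1/2}.
\]
Under the hypothesis $L_y\mathcal E^{1/2}\leq Z_y/2$ this gives $Z_{\theta,y}\geq Z_y/2>0$, so $\nu_\theta^y$ is well defined.

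\textbf{Step 2: weighted TV bound.} I will invoke the standard estimate $W_2^2(\mu,\nu)\leq 2\int\|z-z_0\|^2\,d|\mu-\nu|$ (Villani, Thm.~6.15) with $z_0=0$. I then enlarge the weight to $1+\|z\|^2$, since this is the weight that produces the constants in $C_y(d)$. Next I split
\[
|f_\theta-f|\leq \frac{|\ell_y(S_\theta)-\ell_y(S)|}{Z_{\theta,y}}+\ell_y(S)\Big|\frac1{Z_{\theta,y}}-\frac1{Z_y}\Big|.
\]

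\textbf{Step 3: the two terms.}
\begin{itemize}
\item For the first term, Cauchy--Schwarz together with $\mathbb E_\gamma(1+\|Z\|^2)^2=1+2d+(d^2+2d)=d^2+4d+1$ gives a bound of $\frac{2}{Z_y}L_y\sqrt{d^2+4d+1}\,\mathcal E^{1/2}$.
\item For the second term, I use $\ell_y\leq M_y$, $\int(1+\|z\|^2)\,d\gamma=1+d$, and Step~1 to get $\big|1/Z_{\theta,y}-1/Z_y\big|\leq 2L_y\mathcal E^{1/2}/Z_y^2$. This yields the bound $\frac{2L_yM_y(1+d)}{Z_y^2}\mathcal E^{1/2}$.
\end{itemize}
Multiplying the sum of the two terms by $2$ gives exactly $W_2^2\leq C_y(d)\,\mathcal E^{1/2}$, which is \eqref{eq:source-stability}.

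\textbf{Step 4: data space.} I use $\pi^y=S_\#\nu^y$ and the triangle inequality:
\[
W_2\big((S_\theta)_\#\nu_\theta^y,\pi^y\big)\leq W_2\big((S_\theta)_\#\nu_\theta^y,(S_\theta)_\#\nu^y\big)+W_2\big((S_\theta)_\#\nu^y,S_\#\nu^y\big).
\]
The first term is at most $L_{S_\theta}W_2(\nu_\theta^y,\nu^y)$, because pushing forward by a Lipschitz map contracts $W_2$ by its Lipschitz constant. For the second term I use the coupling $z\mapsto(S_\theta(z),S(z))$ under $z\sim\nu^y$, which gives
\[
\int\|S_\theta-S\|^2\,\frac{\ell_y(S)}{Z_y}\,d\gamma\leq\frac{M_y}{Z_y}\,\mathcal E.
\]
Taking square roots gives the $\sqrt{M_y/Z_y}\,\mathcal E^{1/2}$ term.

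\textbf{Main obstacle.} The only delicate point is choosing the weighted-TV inequality and the weight $1+\|z\|^2$ so that the moments match $\sqrt{d^2+4d+1}$ and $1+d$. If Villani's form with weight $\|z\|^2$ is used instead, the bound is the same or smaller, so the stated constant still holds; the remaining steps are routine.
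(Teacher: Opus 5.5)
Your proposal is correct and follows essentially the same route as the paper: the same normalizing-constant estimate, the same split of the density difference with weight $1+\|z\|^2$ and Gaussian moments $1+d$ and $d^2+4d+1$, the bound $W_2^2\leq 2\int\|z\|^2\,d|\mu-\nu|$, and the same triangle inequality through $(S_\theta)_\#\nu^y$ in data space. The differences are cosmetic: you bound $|f_\theta-f|$ pointwise where the paper uses the dual weighted-variation norm, and you cite Villani's Theorem 6.15 where the paper builds the coupling explicitly.
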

Proof is in Appendix~\ref{proof:posterior-stability}. Combining these results, we derive the following.

\begin{corollary}
\label{cor:end-to-end}
Suppose the hypotheses of Theorem~\ref{thm:fsimf} and
Assumption~\ref{assm:weight} hold. Let $L_{S_\theta}$ be
a Lipschitz constant of $S_\theta$, and let $C_y(d)$ be
as in Theorem~\ref{prop:posterior-stability}.
If $\delta\leq L_G/2, L_yb_K(\delta)\leq Z_y/2$, then $Z_{\theta,y}\geq Z_y/2>0$ and
\begin{align}
W_2(\nu_\theta^y,\nu^y)
&\leq C_y(d)^{1/2}b_K(\delta)^{1/2},
\label{eq:end-to-end-source}\\
W_2\big((S_\theta)_\#\nu_\theta^y,\pi^y\big)
&\leq L_{S_\theta}C_y(d)^{1/2}b_K(\delta)^{1/2}
+\sqrt{\frac{M_y}{Z_y}}b_K(\delta).
\label{eq:end-to-end-data}
\end{align}

\end{corollary}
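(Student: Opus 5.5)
The corollary follows by composing Theorem~\ref{thm:fsimf} with Theorem~\ref{prop:posterior-stability}; the only real work is checking hypotheses and using monotonicity in $\mathcal E$. First, under the hypotheses of Theorem~\ref{thm:fsimf} and the condition $\delta\leq L_G/2$, the bound \eqref{eq:few-step} gives $\mathcal E\leq b_K(\delta)^2$, i.e.\ $\mathcal E^{1/2}\leq b_K(\delta)$, where $\mathcal E$ and $b_K$ are as in \eqref{eq:transport-error-bound}. Note that $b_K(\delta)\geq 0$ because all constants $A_K,B_K,B_u,L_G,\lambda$ are nonnegative, so taking square roots is legitimate.

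Next, I verify the hypothesis of Theorem~\ref{prop:posterior-stability}. From the assumption $L_y b_K(\delta)\leq Z_y/2$ and $\mathcal E^{1/2}\leq b_K(\delta)$, together with $L_y\geq 0$, I get $L_y\mathcal E^{1/2}\leq L_y b_K(\delta)\leq Z_y/2$. Theorem~\ref{prop:posterior-stability} then yields $Z_{\theta,y}\geq Z_y/2>0$. In particular $\nu_\theta^y$ is well defined. It also gives the source bound $W_2(\nu_\theta^y,\nu^y)\leq C_y(d)^{1/2}\mathcal E^{1/4}$. Since $t\mapsto t^{1/4}$ is nondecreasing on $[0,\infty)$ and $C_y(d)\geq0$, substituting $\mathcal E^{1/4}\leq b_K(\delta)^{1/2}$ gives \eqref{eq:end-to-end-source}.

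For the data-space statement, the Lipschitz hypothesis on $S_\theta$ is assumed with constant $L_{S_\theta}$, so the second part of Theorem~\ref{prop:posterior-stability} applies. It gives $W_2((S_\theta)_\#\nu_\theta^y,\pi^y)\leq L_{S_\theta}C_y(d)^{1/2}\mathcal E^{1/4}+\sqrt{M_y/Z_y}\,\mathcal E^{1/2}$. Both terms are nondecreasing in $\mathcal E$ with nonnegative coefficients. Replacing $\mathcal E^{1/4}$ by $b_K(\delta)^{1/2}$ and $\mathcal E^{1/2}$ by $b_K(\delta)$ yields \eqref{eq:end-to-end-data}.

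There is no genuine obstacle here. The one point to state carefully is that the smallness condition in Theorem~\ref{prop:posterior-stability} is phrased in terms of the unknown $\mathcal E$, while the corollary's condition is phrased in terms of the computable bound $b_K(\delta)$. The implication between them relies on $\mathcal E^{1/2}\leq b_K(\delta)$ and $L_y\geq0$. The case $L_y=0$ is trivially covered, since then the condition holds automatically.
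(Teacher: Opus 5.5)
Your proposal is correct and follows the paper's argument: the paper proves the corollary simply by substituting the transport bound $\mathcal E^{1/2}\leq b_K(\delta)$ from Theorem~\ref{thm:fsimf} into Theorem~\ref{prop:posterior-stability}. Your version also spells out two steps the paper leaves implicit: that $L_y b_K(\delta)\leq Z_y/2$ implies the smallness hypothesis $L_y\mathcal E^{1/2}\leq Z_y/2$, and that both stability bounds are nondecreasing in $\mathcal E$.
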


\subsection{Source parallel tempering with improved MeanFlow}
\label{sec:sampler}

Given the learned iMF, we aim to sample from $\nu_\theta^y$. Although the source reference
is Gaussian, the pulled-back likelihood can induce multimodality,
and the source retains the data dimension. We use source parallel tempering \citep{wangtest} to improve
mixing between modes, together with local kernels like pCN and Gaussian-reference split HMC
to address high dimensionality. Both admit formulations directly
on infinite-dimensional Hilbert spaces, motivating their use
for high-dimensional Gaussian-reference targets
\citep{beskos2008mcmc,cotter2013mcmc,beskos2011hybrid}.

Parallel tempering \citep{geyer1991markov} runs replicas
at inverse temperatures $0=\beta_0<\cdots<\beta_J=1$, targeting
$\nu_{\theta,\beta}^y(\dd z)
\propto \ell_y(S_\theta(z))^\beta\gamma(\dd z)$.
Local MCMC kernels explore each target, while Metropolis swaps
between adjacent replicas promote transitions between modes. Each sampling sweep locally updates every replica once, then attempts one Metropolis swap per adjacent temperature pair. The coldest replica  ($\beta=1$) targets  $\nu_\theta^y$. The two variants below differ in their local MCMC kernels.

\paragraph{iMF+SPT+pCN.}
We use pCN at every replica as in \citep{wangtest}. Writing $\Phi_y=-\log\ell_y$ and
$\Psi_\theta=\Phi_y\circ S_\theta$, the proposal is
$z'=\cos(h_\beta)z+\sin(h_\beta)\xi$,
where $\xi\sim\gamma$ and $h_\beta\in(0,\pi/2]$,
with acceptance probability
$1\wedge\exp\{-\beta[\Psi_\theta(z')-\Psi_\theta(z)]\}$.
At $\beta=0$, every proposal is accepted; and choosing
$h_0=\pi/2$ gives an independent Gaussian draw $z'=\xi\sim\gamma$.

\paragraph{iMF+SPT+hybrid.}
When $\Psi_\theta$ is differentiable, we replace the \emph{coldest}
pCN update ($\beta=1$) with split HMC
\citep{beskos2011hybrid} to exploit local posterior geometry,
while retaining pCN at hotter replicas to limit computational
cost.  Refresh $p\sim\mathcal \gamma$ and define the Hamiltonian $H(z,p)=\tfrac12\|z\|^2+\tfrac12\|p\|^2+\Psi_\theta(z)$.
Starting from $(z_0,p_0)=(z,p)$, apply $L$ steps of Strang's splitting:

\begin{equation}
\begin{aligned}
p_{\ell+1/2}
&=p_\ell-\tfrac{h}{2}\nabla\Psi_\theta(z_\ell),
\quad \rhd~\text{half momentum kick}\\
\begin{pmatrix}
z_{\ell+1}\\
\widetilde p_{\ell+1}
\end{pmatrix}
&=
\begin{pmatrix}
\cos(h)I & \sin(h)I\\
-\sin(h)I & \cos(h)I
\end{pmatrix}
\begin{pmatrix}
z_\ell\\
p_{\ell+1/2}
\end{pmatrix},
\quad \rhd~\text{exact harmonic rotation}\\
p_{\ell+1}
&=\widetilde p_{\ell+1}
-\tfrac{h}{2}\nabla\Psi_\theta(z_{\ell+1}),\quad \rhd~\text{half momentum kick}
\end{aligned}
\label{eq:split-proposal}
\end{equation}
Accept $(z',p')=(z_L,-p_L)$ with probability
$1\wedge\exp\{H(z,p)-H(z',p')\}$; otherwise retain $z$.

\paragraph{Sampler geometric convergence.}
\citet{wangtest} establish geometric $W_2$ convergence to the learned posterior for SPT with all pCN updates. We extend this guarantee to SPT+hybrid. Let $\nu_n=\operatorname{Law}(z_n)$, where $z_n$ is the coldest replica's state after $n$ sampling sweeps. Under the conditions in Appendix~\ref{app:sampler-convergence}, including
a source weight bounded above and away from zero, the cold marginal
of either sampler satisfies $W_2(\nu_n,\nu_\theta^y)\leq C\rho^n$
for some $C<\infty$ and $\rho\in(0,1)$. Together with posterior stability (Theorem \ref{prop:posterior-stability}, Corollary \ref{cor:end-to-end}), this yields an end-to-end error
bound with a geometrically decaying sampling term and a transport
term that vanishes with $\delta$; see
Corollary~\ref{cor:end-to-end-convergence}.

\section{Experiments}
\label{sec:experiments}
We evaluate the proposed samplers on synthetic inference problems and CLIP-guided ImageNet generation. Experiments were conducted on LUMI\footnote{\url{https://lumi-supercomputer.eu/}} using one Slurm-allocated GPU device per run, with AMD Instinct MI250X hardware, eight CPU cores, and 64 GB of host memory. Our code will be made available soon.

\subsection{Synthetic experiments}

\paragraph{Problem setting.}
We consider two priors on $x\in\mathbb{R}^{32}$, each formed
from 16 independent two-dimensional blocks. For
$u_j,v_j\overset{\mathrm{iid}}{\sim}\mathcal{N}(0,1)$, the priors are
\[
x_{2j-1}=u_j,\qquad
x_{2j}=
\begin{cases}
0.62(u_j^2-1)+0.30v_j, & \text{Banana},\\
1.05\sin(1.65u_j)+0.15v_j, & \text{Sine},
\end{cases}
\quad j=1,\ldots,16.
\]
Both problems use linear observations with independent Gaussian
noise, giving the likelihood
\[
p(y\mid x)=\prod_{j=1}^{16}
\mathcal{N}\!\left(y_j;\,x_{2j}+0.35x_{2j-1},\,0.2^2\right).
\]
We condition on the fixed observations $y_1=y_2=2$ and
$y_j=-1$ for $j=3,\ldots,16$.

\paragraph{Prior learning and sampling.}
We evaluate iMF+SPT+pCN and iMF+SPT+hybrid against two baselines: FM+SPT+pCN and diffusion posterior sampling (DPS) \citep{chung2022diffusion}. The two iMF methods share a six-step transport. The FM baseline uses a 100-step RK4 transport. For each problem, the iMF, FM, and diffusion prior models are trained on the same fixed dataset of 20,000 prior samples. Each method returns 6,000 samples. Figure~\ref{fig:prior} compares samples from the learned priors with those from the true banana and sine priors. See Appendix \ref{app:synthetic} for more details.

\paragraph{Results.}
The blockwise structure of the problems enables accurate numerical posterior references for assessing approximation quality. Table~\ref{tab:synthetic_results} summarizes the results, and Figures~\ref{fig:post_banana} and~\ref{fig:post_sine} compare posterior sample histograms from each method with numerical references for the true banana and sine. Using the same learned iMF transport, the hybrid sampler increases mean coordinate ESS (effective sample size) by 3.24$\times$ on banana and 4.44$\times$ on sine, and coordinate ESS per second by 2.25$\times$ and 3.13$\times$, respectively, compared with iMF+SPT+pCN. It also achieves the lowest mean sliced Wasserstein and binned TV errors, although the differences in sliced Wasserstein error are small relative to the variability across seeds. With its configured 100-step RK4 transport, FM+SPT+pCN requires 33.7$\times$ and 34.5$\times$ the sampling time of the iMF hybrid on banana and sine, respectively. DPS exhibits substantially larger distributional errors, with systematic deviations from the reference posterior visible in the histograms.

\begin{table}[t]
    \centering
    \caption{
        Synthetic posterior sampling. See Appendix~\ref{app:synthetic} for more details.
    }
    \label{tab:synthetic_results}
    \small
    \setlength{\tabcolsep}{3pt}
    \begin{tabular}{lrrrrr}
        \toprule
        Method
        & $\mathrm{SW}_2\downarrow$
        & Bin TV $\downarrow$
        & Coord. ESS $\uparrow$
        & Coord. ESS/s $\uparrow$
        & Time (s) $\downarrow$ \\
        \midrule
        \multicolumn{6}{l}{\textit{Banana}} \\
        DPS
        & $0.462\pm0.071$ & $0.311\pm0.002$ & --- & --- & $\mathbf{38.4\pm1.0}$ \\
        FM+SPT+pCN
        & $0.219\pm0.081$ & $0.106\pm0.008$ & $372\pm16$
        & $0.052\pm0.002$ & $7087\pm45$ \\
        iMF+SPT+pCN
        & $0.154\pm0.012$ & $0.109\pm0.006$ & $346\pm4$
        & $2.37\pm0.12$ & $146.1\pm6.0$ \\
        iMF+SPT+hybrid
        & $\mathbf{0.142\pm0.031}$ & $\mathbf{0.085\pm0.005}$
        & $\mathbf{1122\pm48}$
        & $\mathbf{5.33\pm0.08}$ & $210.4\pm5.8$ \\
        \midrule
        \multicolumn{6}{l}{\textit{Sine}} \\
        DPS
        & $0.931\pm0.025$ & $0.286\pm0.001$ & --- & --- & $\mathbf{38.9\pm0.2}$ \\
        FM+SPT+pCN
        & $0.104\pm0.021$ & $0.129\pm0.014$ & $395\pm16$
        & $0.0526\pm0.0023$ & $7505\pm78$ \\
        iMF+SPT+pCN
        & $0.093\pm0.016$ & $0.131\pm0.010$ & $378\pm35$
        & $2.47\pm0.32$ & $153.3\pm5.8$ \\
        iMF+SPT+hybrid
        & $\mathbf{0.088\pm0.011}$ & $\mathbf{0.127\pm0.004}$
        & $\mathbf{1679\pm47}$
        & $\mathbf{7.73\pm0.31}$ & $217.3\pm5.4$ \\
        \bottomrule
    \end{tabular}
\end{table}
\subsection{CLIP-guided ImageNet generation}
\label{sec:clip_guidance_experiment}
We evaluate CLIP-guided $256\times256$ class-conditional ImageNet generation: for a fixed ImageNet class (e.g., bald eagle) and text prompt, the goal is to sample images from that class that favor high image-text alignment. We use the cosine similarity from a frozen pretrained CLIP model \citep{radford2021learning} as the guidance reward. Let $z \sim \mathcal{N}(0,I)$ and let $G_c(z)$ denote a frozen pretrained transport from the Gaussian source distribution to images in class $c$. For a text prompt $p$, the learned source posterior is $\nu^p_{\theta}(z) \propto \exp\!\left(-\frac{1}{2}\|z\|^2 + \tau \clip(G_c(z),p)\right)$ with $\tau=200$.

\paragraph{Comparison}
We use the pretrained iMF for class-conditional ImageNet generation \citep{geng2026improved}. We compare iMF+SPT+pCN and iMF+SPT+hybrid against their multi-step counterpart SiT+SPT+pCN, where SiT \citep{ma2024sit} is a pretrained class-conditional flow-matching prior integrated using 125 Heun steps. These methods use eight temperature replicas and eight independent ladders. For iMF-based methods, we run 250 adaptation sweeps, 250 burn-in sweeps, and retain 1,000 post-burn-in samples per ladder, yielding 8,000 retained samples per method. SiT+SPT+pCN is extremely slow in this experiment, we evaluate it under a matched per-prompt sampling-time budget equal to the measured hybrid sampling time (more than $10$ hours), with an additional one-hour allowance for initialization, scoring, and image export. We also include best-of-$K$ \citep{karthik2023if}, $K=12$, as a reward-based selection
baseline, retaining the image with the highest CLIP reward
among $K$ learned-prior independent samples. We produce $8,000$ best-of-K images.

\paragraph{Results.} Table~\ref{tab:clip_guidance_selected} reports CLIP rewards and sampling costs for two class-prompt pairs, while Figures~\ref{fig:goldenautum} and~\ref{fig:eaglemoun} show samples ($R$ is the reward) from two ladders per method. JPEG compression is for display only and does not affect reported rewards. The two iMF methods achieve comparable rewards and both outperform SiT+SPT+pCN. Their samples more faithfully depict a \emph{running} golden retriever amid \emph{fallen autumn leaves}, whereas SiT+SPT+pCN often misses the autumn setting within its sampling budget of 10-11 hours. For the bald eagle prompt, all three methods depict eagles with \emph{outstretched wings}, but the iMF methods more consistently include \emph{snow-capped mountains} whereas SiT+SPT+pCN depicts the eagle against the plain blue sky. In these examples, the hybrid sampler offers no clear improvement in reward or sample quality over iMF+SPT+pCN. Finally, best-of-$K$ achieves the lowest rewards, suggesting guidance is more effective than independent prior draws. The full results are given in Appendix \ref{fullimagenet}.

\section{Conclusion}
We develop few-step iMF source-space inference, linking population
training suboptimality and approximation error to posterior stability
and end-to-end error bounds under stated assumptions. Hybrid SPT
improves efficiency on synthetic benchmarks, while one-step iMF
reduces CLIP-guided ImageNet sampling costs relative to multi-step
SiT. Quantifying how model-class approximation error $\epsapp$ decreases with
capacity while controlling the regularity constants remains open.

\begin{table*}[t]
\centering
\small
\caption{CLIP reward and measured sampling cost.}
\label{tab:clip_guidance_selected}
\begin{tabular}{lccc}
\toprule
Method & Mean CLIP reward $\uparrow$ & Sampling time (h) & Mean sweep time (s) \\
\midrule
\multicolumn{4}{l}{\textbf{Golden retriever:} \textit{``A photograph of a golden retriever running through fallen autumn leaves."}} \\
Best-of-K & 0.2864 & 2.49 & -- \\
SiT+SPT+pCN & 0.3122 & 10.51 & 370.89 \\
iMF+SPT+pCN & 0.3809 & 2.40 & 5.75 \\
iMF+SPT+hybrid & \textbf{0.3824} & 10.95 & 26.28 \\
\midrule
\multicolumn{4}{l}{\textbf{Bald eagle:} \textit{``A photograph of a bald eagle soaring above snow-capped mountains, wings spread wide."}} \\
Best-of-K & 0.3278 & 2.49 & -- \\*
SiT+SPT+pCN   & 0.3481 & 10.70 & 370.27 \\
iMF+SPT+pCN    & \textbf{0.3936} & 2.44  & 5.85   \\
iMF+SPT+hybrid & 0.3914 & 11.15 & 26.76  \\
\bottomrule
\end{tabular}
\end{table*}

\begin{figure}[H]
\centering

% Left: golden retriever
\begin{minipage}[t]{0.47\linewidth}
    \vspace{0pt}
    \captionsetup{type=figure}
    \centering

    \begin{subfigure}{\linewidth}
        \centering
        \includegraphics[width=\linewidth]{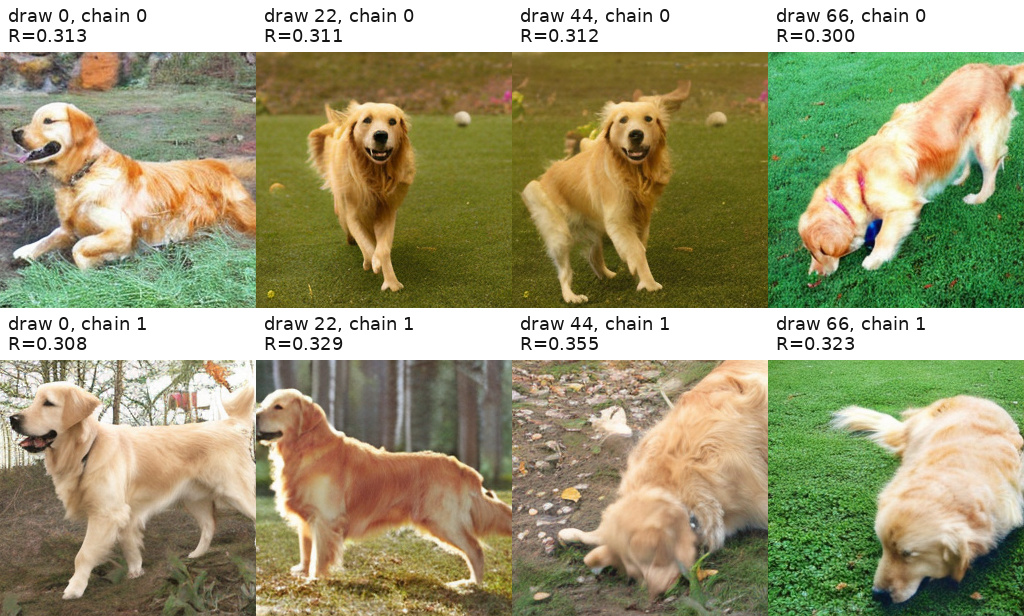}
        \caption{SiT + SPT + pCN.}
        \label{fig:third}
    \end{subfigure}

    \par\medskip

    \begin{subfigure}{\linewidth}
        \centering
        \includegraphics[width=\linewidth]{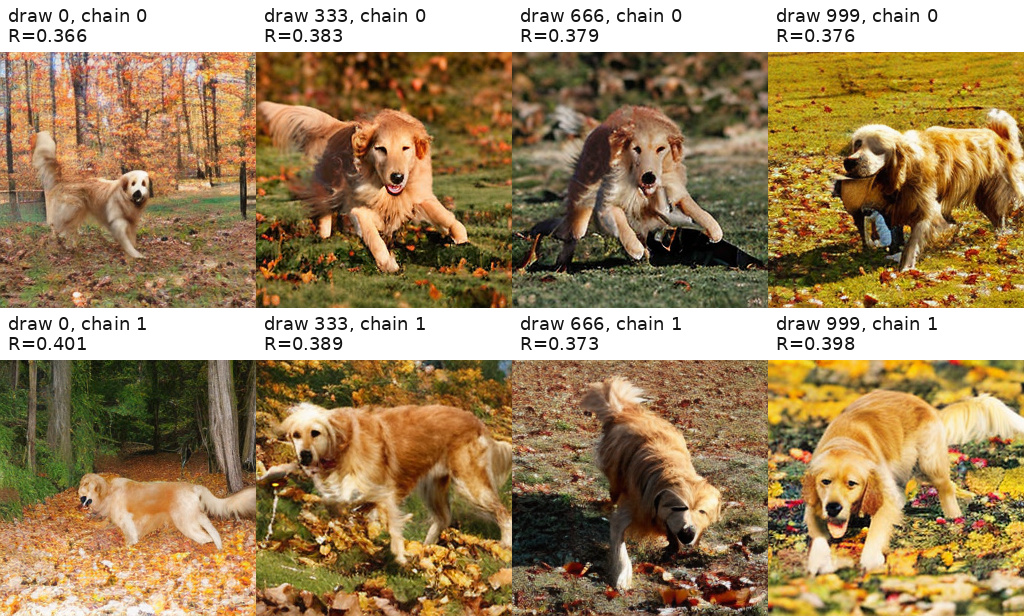}
        \caption{iMF + SPT + pCN.}
        \label{fig:second}
    \end{subfigure}

    \par\medskip

    \begin{subfigure}{\linewidth}
        \centering
        \includegraphics[width=\linewidth]{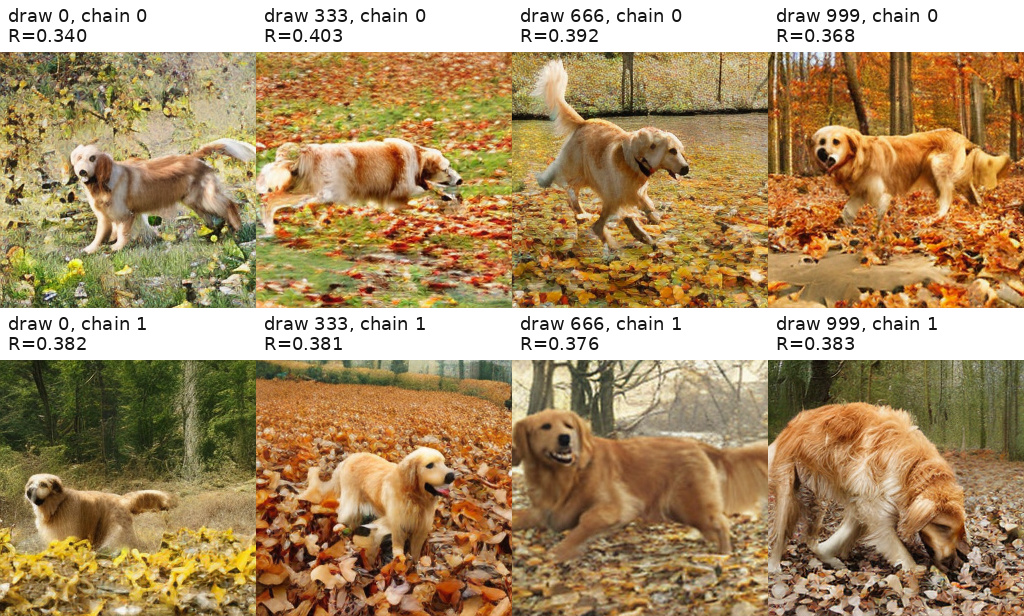}
        \caption{iMF + SPT + hybrid.}
        \label{fig:first}
    \end{subfigure}

    \caption{Class: golden retriever, prompt: \textit{``A photograph of a golden retriever running through fallen autumn leaves.''}}
    \label{fig:goldenautum}
\end{minipage}%
\hfill
% Right: bald eagle
\begin{minipage}[t]{0.47\linewidth}
    \vspace{0pt}
    \captionsetup{type=figure}
    \centering

    \begin{subfigure}{\linewidth}
        \centering
        \includegraphics[width=\linewidth]{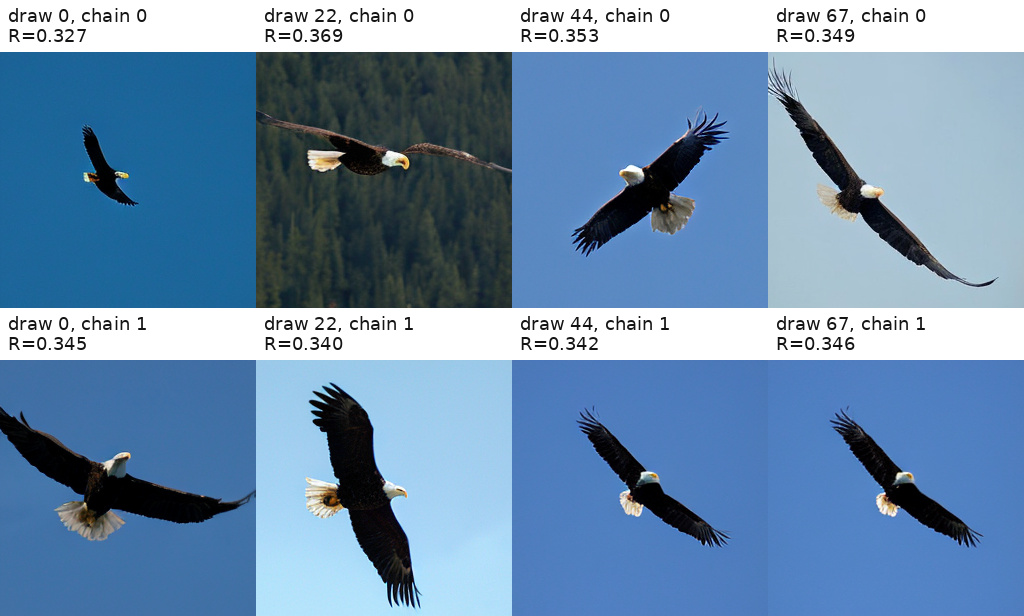}
        \caption{SiT + SPT + pCN.}
        \label{fig:eagle-third}
    \end{subfigure}

    \par\medskip

    \begin{subfigure}{\linewidth}
        \centering
        \includegraphics[width=\linewidth]{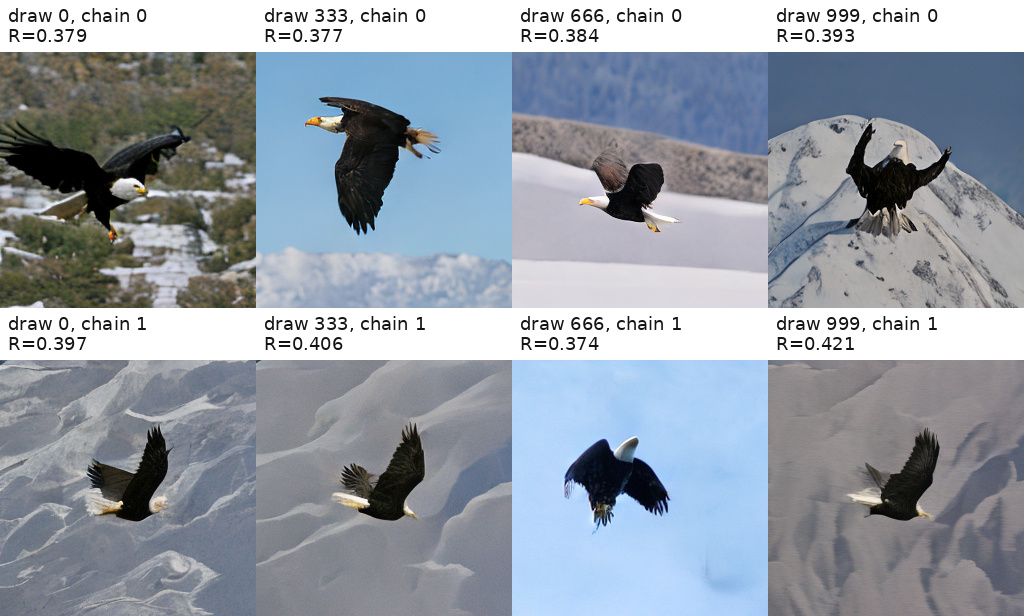}
        \caption{iMF + SPT + pCN.}
        \label{fig:eagle-second}
    \end{subfigure}

    \par\medskip

    \begin{subfigure}{\linewidth}
        \centering
        \includegraphics[width=\linewidth]{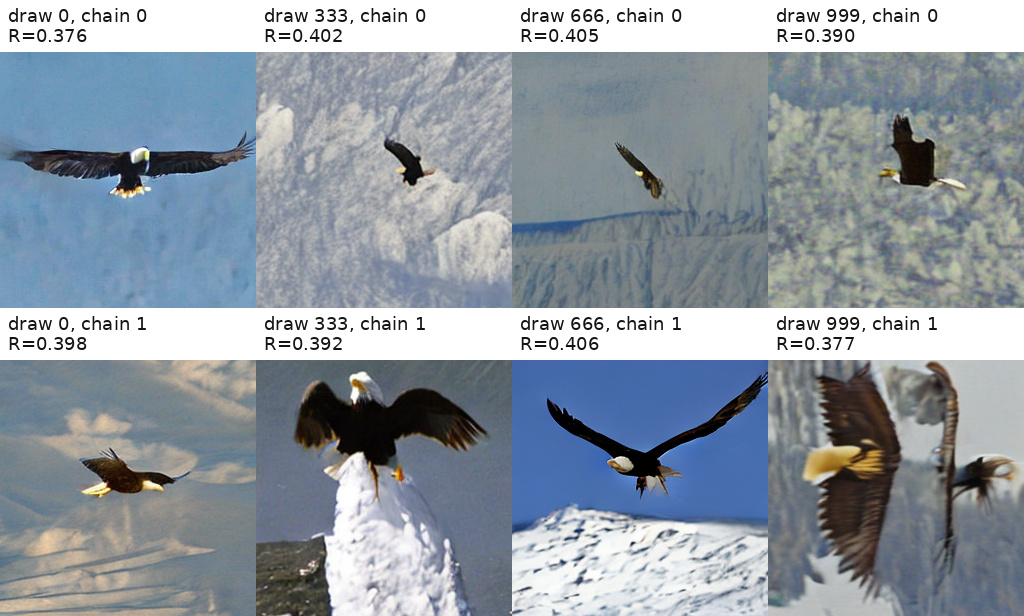}
        \caption{iMF + SPT + hybrid.}
        \label{fig:eagle-first}
    \end{subfigure}

    \caption{Class: bald eagle, prompt: \textit{``A photograph of a bald eagle soaring above snow-capped mountains, wings spread wide.''}}
    \label{fig:eaglemoun}
\end{minipage}

\end{figure}

\newpage 

\subsection*{AI use statement}

%(This section is \textbf{required} and does not count toward the page limit.)

We used generative AI tools (GPT6-Astra, Claude Opus 5, Codex, Copilot Pro+) to assist with language editing; literature search and paper summarization; algorithm implementation, comparative experiments, and code testing; preliminary (unusable) drafts and templates; and developing and checking mathematical arguments; and reviewing the manuscript. We did not use generative AI to generate synthetic datasets. The authors reviewed the AI-assisted material, including text, code, and mathematical arguments. We take full responsibility for the final manuscript, its claims, and all accompanying artifacts, including those produced with generative AI assistance.

\subsection*{Acknowledgement} This work is supported by NTU-SUG and Singapore Ministry of Education (MOE) AcRF Tier 1 RG17/24 and the Research Council of Finland grant number 369502. We acknowledge computational resources on LUMI supercomputer, owned by the EuroHPC Joint Undertaking and hosted by CSC (Finland) and the LUMI consortium.

\bibliography{ref}

@article{bissiri2016general,
  title={A general framework for updating belief distributions},
  author={Bissiri, Pier Giovanni and Holmes, Chris C and Walker, Stephen G},
  journal={Journal of the Royal Statistical Society Series B: Statistical Methodology},
  volume={78},
  number={5},
  pages={1103--1130},
  year={2016},
  publisher={Oxford University Press}
}

@article{beskos2011hybrid,
  title={Hybrid {M}onte {C}arlo on {Hi}lbert spaces},
  author={Beskos, Alexandros and Pinski, Frank J and Sanz-Serna, Jes{\'u}s Mar{\i}a and Stuart, Andrew M},
  journal={Stochastic Processes and their Applications},
  volume={121},
  number={10},
  pages={2201--2230},
  year={2011},
  publisher={Elsevier}
}

@inproceedings{chung2022diffusion,
  title={Diffusion Posterior Sampling for General Noisy Inverse Problems},
  author={Chung, Hyungjin and Kim, Jeongsol and Mccann, Michael Thompson and Klasky, Marc Louis and Ye, Jong Chul},
  booktitle={The Eleventh International Conference on Learning Representations},
  year={2023},
}

@article{cotter2013mcmc,
  title={{MCMC} methods for functions: modifying old algorithms to make them faster},
  author={Cotter, Simon L and Roberts, Gareth O and Stuart, Andrew M and White, David},
  journal={Statistical Science},
  pages={424--446},
  year={2013},
  publisher={JSTOR}
}

@article{geng2026mean,
  title={Mean flows for one-step generative modeling},
  author={Geng, Zhengyang and Deng, Mingyang and Bai, Xingjian and Kolter, Zico and He, Kaiming},
  journal={Advances in Neural Information Processing Systems},
  volume={38},
  pages={75460--75482},
  year={2026}
}

@inproceedings{geng2026improved,
  title={Improved mean flows: On the challenges of fastforward generative models},
  author={Geng, Zhengyang and Lu, Yiyang and Wu, Zongze and Shechtman, Eli and Kolter, J Zico and He, Kaiming},
  booktitle={Proceedings of the IEEE/CVF Conference on Computer Vision and Pattern Recognition},
  pages={30467--30476},
  year={2026}
}

@article{hoffman2019neutra,
  title={Neutra-lizing bad geometry in hamiltonian monte carlo using neural transport},
  author={Hoffman, Matthew and Sountsov, Pavel and Dillon, Joshua V and Langmore, Ian and Tran, Dustin and Vasudevan, Srinivas},
  journal={arXiv preprint arXiv:1903.03704},
  year={2019}
}

@inproceedings{lipman2023flow,
  title={Flow Matching for Generative Modeling},
  author={Lipman, Yaron and Chen, Ricky T. Q. and Ben-Hamu, Heli and Nickel, Maximilian and Le, Matt},
  booktitle={The Eleventh International Conference on Learning Representations},
  year={2023}
}

@article{parno2018transport,
  title={Transport map accelerated markov chain monte carlo},
  author={Parno, Matthew D and Marzouk, Youssef M},
  journal={SIAM/ASA Journal on Uncertainty Quantification},
  volume={6},
  number={2},
  pages={645--682},
  year={2018},
  publisher={SIAM}
}

@article{ramzi2020denoising,
  title={Denoising score-matching for uncertainty quantification in inverse problems},
  author={Ramzi, Zaccharie and Remy, Benjamin and Lanusse, Francois and Starck, Jean-Luc and Ciuciu, Philippe},
  journal={arXiv preprint arXiv:2011.08698},
  year={2020}
}

@inproceedings{radford2021learning,
  title={Learning transferable visual models from natural language supervision},
  author={Radford, Alec and Kim, Jong Wook and Hallacy, Chris and Ramesh, Aditya and Goh, Gabriel and Agarwal, Sandhini and Sastry, Girish and Askell, Amanda and Mishkin, Pamela and Clark, Jack and others},
  booktitle={International conference on machine learning},
  pages={8748--8763},
  year={2021},
  organization={PmLR}
}

@inproceedings{wang2026source,
  title={Source-guided flow matching},
  author={Wang, Zifan and Harting, Alice and Barreau, Matthieu and Zavlanos, Michael and Johansson, Karl H},
  booktitle={International Conference on Learning Representations},
  volume={2026},
  pages={106484--106514},
  year={2026}
}

@inproceedings{wangtest,
  title={Test-Time Guidance for Flow-Based Generative Models via Parallel Tempering on Source Distributions},
  author={Wang, Shih-Hsin and Keller, Joel and Transue, Taos and Brown, Drake Benjamin and Strohmer, Thomas and Wang, Bao},
  booktitle={International Conference on Machine Learning},
  year={2026}
}

@article{patel2022solution,
  title={Solution of physics-based Bayesian inverse problems with deep generative priors},
  author={Patel, Dhruv V and Ray, Deep and Oberai, Assad A},
  journal={Computer Methods in Applied Mechanics and Engineering},
  volume={400},
  pages={115428},
  year={2022},
  publisher={Elsevier}
}

@article{vincent2011connection,
  title={A connection between score matching and denoising autoencoders},
  author={Vincent, Pascal},
  journal={Neural computation},
  volume={23},
  number={7},
  pages={1661--1674},
  year={2011},
  publisher={MIT Press}
}

@article{tong2024improving,
  title   = {Improving and Generalizing Flow-Based Generative Models
             with Minibatch Optimal Transport},
  author={Tong, Alexander and Fatras, Kilian and Malkin, Nikolay and Huguet, Guillaume and Zhang, Yanlei and Rector-Brooks, Jarrid and Wolf, Guy and Bengio, Yoshua},
  journal = {Transactions on Machine Learning Research},
  year    = {2024}
}

@article{beskos2008mcmc,
  title={{MCMC} methods for diffusion bridges},
  author={Beskos, Alexandros and Roberts, Gareth and Stuart, Andrew and Voss, Jochen},
  journal={Stochastics and Dynamics},
  volume={8},
  number={03},
  pages={319--350},
  year={2008},
  publisher={World Scientific}
}

@article{earl2005parallel,
  title={Parallel tempering: Theory, applications, and new perspectives},
  author={Earl, David J and Deem, Michael W},
  journal={Physical Chemistry Chemical Physics},
  volume={7},
  number={23},
  pages={3910--3916},
  year={2005},
  publisher={The Royal Society of Chemistry}
}

@inproceedings{greenberg2019automatic,
  title={Automatic posterior transformation for likelihood-free inference},
  author={Greenberg, David and Nonnenmacher, Marcel and Macke, Jakob},
  booktitle={International conference on machine learning},
  pages={2404--2414},
  year={2019},
  organization={PMLR}
}

@article{radev2023bayesflow,
  title={BayesFlow: Amortized Bayesian workflows with neural networks},
  author={Radev, Stefan T and Schmitt, Marvin and Schumacher, Lukas and Elsem{\"u}ller, Lasse and Pratz, Valentin and Sch{\"a}lte, Yannik and K{\"o}the, Ullrich and B{\"u}rkner, Paul-Christian},
  journal={arXiv preprint arXiv:2306.16015},
  year={2023}
}

@article{Benjamin,
title = {Transport map unadjusted {L}angevin algorithms: Learning and discretizing perturbed samplers},
journal = {Foundations of Data Science},
volume = {7},
number = {3},
pages = {705-736},
year = {2025},
author = {Benjamin J. Zhang and Youssef M. Marzouk and Konstantinos Spiliopoulos},
}

@article{cheng2026preconditioned,
  title={Preconditioned One-Step Generative Modeling for {B}ayesian Inverse Problems in Function Spaces},
  author={Cheng, Zilan and Wang, Li-Lian and Wang, Zhongjian},
  journal={arXiv preprint arXiv:2603.14798},
  year={2026}
}

@article{stuart2010inverse,
  title={Inverse problems: a {B}ayesian perspective},
  author={Stuart, Andrew M},
  journal={Acta numerica},
  volume={19},
  pages={451--559},
  year={2010},
  publisher={Cambridge University Press}
}

@article{patel2019bayesian,
  title={Bayesian inference with generative adversarial network priors},
  author={Patel, Dhruv and Oberai, Assad A},
  journal={arXiv preprint arXiv:1907.09987},
  year={2019}
}

@article{lunz2018adversarial,
  title={Adversarial regularizers in inverse problems},
  author={Lunz, Sebastian and {\"O}ktem, Ozan and Sch{\"o}nlieb, Carola-Bibiane},
  journal={Advances in neural information processing systems},
  volume={31},
  year={2018}
}

@article{hoffman2014no,
  title={The {No-U-Turn} sampler: adaptively setting path lengths in {H}amiltonian {M}onte {C}arlo},
  author={Hoffman, Matthew D and Gelman, Andrew and others},
  journal={Journal of Machine Learning Research},
  volume={15},
  number={1},
  pages={1593--1623},
  year={2014}
}

@article{chen2018neural,
  title={Neural ordinary differential equations},
  author={Chen, Ricky TQ and Rubanova, Yulia and Bettencourt, Jesse and Duvenaud, David K},
  journal={Advances in neural information processing systems},
  volume={31},
  year={2018}
}

@article{papamakarios2021normalizing,
  title={Normalizing flows for probabilistic modeling and inference},
  author={Papamakarios, George and Nalisnick, Eric and Rezende, Danilo Jimenez and Mohamed, Shakir and Lakshminarayanan, Balaji},
  journal={Journal of Machine Learning Research},
  volume={22},
  number={57},
  pages={1--64},
  year={2021}
}

@article{ben2024d,
  title={D-flow: Differentiating through flows for controlled generation},
  author={Ben-Hamu, Heli and Puny, Omri and Gat, Itai and Karrer, Brian and Singer, Uriel and Lipman, Yaron},
  journal={arXiv preprint arXiv:2402.14017},
  year={2024}
}

@book{hairer1993solving,
  title={Solving ordinary differential equations I: Nonstiff problems},
  author={Hairer, Ernst and Wanner, Gerhard and N{\o}rsett, Syvert P},
  year={1993},
  publisher={Springer}
}

@inproceedings{ma2024sit,
  title={Sit: Exploring flow and diffusion-based generative models with scalable interpolant transformers},
  author={Ma, Nanye and Goldstein, Mark and Albergo, Michael S and Boffi, Nicholas M and Vanden-Eijnden, Eric and Xie, Saining},
  booktitle={European Conference on Computer Vision},
  pages={23--40},
  year={2024},
  organization={Springer}
}

@InProceedings{purohit2024posterior,
    author    = {Purohit, Vishal and Repasky, Matthew and Lu, Jianfeng and Qiu, Qiang and Xie, Yao and Cheng, Xiuyuan},
    title     = {Consistency Posterior Sampling for Diverse Image Synthesis},
    booktitle = {Proceedings of the IEEE/CVF Conference on Computer Vision and Pattern Recognition (CVPR)},
    month     = {June},
    year      = {2025},
    pages     = {28327-28336}
}

@article{lu2026variance,
  title={On Variance Reduction in Learning Mean Flows},
  author={Lu, Juanwu and Wang, Ziran},
  journal={arXiv preprint arXiv:2605.09235},
  year={2026}
}

@inproceedings{korbak2022rl,
  title={{RL} with {KL} penalties is better viewed as {B}ayesian inference},
  author={Korbak, Tomasz and Perez, Ethan and Buckley, Christopher},
  booktitle={Findings of the Association for Computational Linguistics: EMNLP 2022},
  pages={1083--1091},
  year={2022}
}

@book{villani2009optimal,
  title={Optimal transport: old and new},
  author={Villani, C{\'e}dric and others},
  volume={338},
  year={2009},
  publisher={Springer}
}

@article{geyer1991markov,
  title={Markov chain {M}onte {C}arlo maximum likelihood},
  author={Geyer, Charles J},
  year={1991},
  publisher={Interface Foundation of North America}
}

@article{otto2000generalization,
  title={Generalization of an inequality by {T}alagrand and links with the logarithmic {S}obolev inequality},
  author={Otto, Felix and Villani, C{\'e}dric},
  journal={Journal of Functional Analysis},
  volume={173},
  number={2},
  pages={361--400},
  year={2000},
  publisher={Elsevier}
}

@article{chen2016accelerated,
  title={Accelerated dimension-independent adaptive Metropolis},
  author={Chen, Yuxin and Keyes, David and Law, Kody JH and Ltaief, Hatem},
  journal={SIAM Journal on Scientific Computing},
  volume={38},
  number={5},
  pages={S539--S565},
  year={2016},
  publisher={SIAM}
}

@article{betancourt2014optimizing,
  title={Optimizing the integrator step size for {H}amiltonian {M}onte {C}arlo},
  author={Betancourt, MJ and Byrne, Simon and Girolami, Mark},
  journal={arXiv preprint arXiv:1411.6669},
  year={2014}
}

@article{boffi2024flow,
  title={Flow map matching with stochastic interpolants: A mathematical framework for consistency models},
  author={Boffi, Nicholas M and Albergo, Michael S and Vanden-Eijnden, Eric},
  journal={arXiv preprint arXiv:2406.07507},
  year={2024}
}

@article{karthik2023if,
  title={If at first you don't succeed, try, try again: Faithful diffusion-based text-to-image generation by selection},
  author={Karthik, Shyamgopal and Roth, Karsten and Mancini, Massimiliano and Akata, Zeynep},
  journal={arXiv preprint arXiv:2305.13308},
  year={2023}
}
\bibliographystyle{iclr2027_conference}

\appendix
\section{Proofs and clarifications}
\subsection{Equivalence between $\LiMF$ and $\RiMF$}
\label{app:equvi}
Write $\LiMF(\theta)=\mathbb E\|A_\theta-V\|_2^2$, where
the predictor $$A_\theta :=
u_\theta(W_T,R,T)
+(T-R)\left[
\partial_tu_\theta(W_T,R,T)
+(\partial_wu_\theta)(W_T,R,T)\,v_\theta(W_T,T)
\right]$$ is measurable with respect to $(W_T,R,T)$.
Since $(R,T)$ is independent of $(X,Z)$,
\[
\mathbb E[V\mid W_T,R,T]=v(W_T,T).
\]
Hence
\[
\mathbb E\langle A_\theta-v(W_T,T),\,V-v(W_T,T)\rangle=0.
\]
Expanding the squared norm therefore gives
\[
\LiMF(\theta)
=\RiMF(\theta)
+\underbrace{\mathbb E\|V-v(W_T,T)\|_2^2}_{C},
\]
where $C$ is independent of $\theta$.
\subsection{Proof of relation \eqref{eq:nothing}}

\label{apx:nothing}

Since $\rj(\theta) = \lj(\theta) + C$, it holds $\inf \rj = \inf \lj + C$. Therefore, $\rj(\theta)=\lj(\theta) - \inf \lj + \inf \rj$.

\subsection{Details on Remark \ref{rmk:Gthetareg}}
\label{app:rmkGthetareg}

Set
\[
A:=\sup_{0\leq r\leq t\leq1}
\|\bar q_\theta(\cdot,r,t)\|_{L^2(p_t)},
\qquad
B:=\sup_{0\leq r\leq t\leq1}
\|\partial_r\bar q_\theta(\cdot,r,t)\|_{L^2(p_t)}.
\]
By hypothesis, $A,B<\infty$. For $0\leq r<s\leq t\leq1$,
the fundamental theorem of calculus and Minkowski's inequality give
\[
\|\bar q_\theta(\cdot,r,t)-\bar q_\theta(\cdot,s,t)\|_{L^2(p_t)}
\leq
\int_r^s
\|\partial_r\bar q_\theta(\cdot,a,t)\|_{L^2(p_t)}\,da
\leq B(s-r).
\]
Consequently,
\[
\left|
\|\bar q_\theta(\cdot,r,t)\|_{L^2(p_t)}^2
-\|\bar q_\theta(\cdot,s,t)\|_{L^2(p_t)}^2
\right|
\leq 2AB(s-r).
\]
Splitting the integrals defining $G_\theta(r)$ and $G_\theta(s)$
therefore yields
\begin{align*}
|G_\theta(r)-G_\theta(s)|
&\leq
\int_r^s
\|\bar q_\theta(\cdot,r,t)\|_{L^2(p_t)}^2\,dt\\
&\quad+
\int_s^1
\left|
\|\bar q_\theta(\cdot,r,t)\|_{L^2(p_t)}^2
-\|\bar q_\theta(\cdot,s,t)\|_{L^2(p_t)}^2
\right|\,dt\\
&\leq A^2(s-r)+2AB(1-s)(s-r)\\
&\leq (A^2+2AB)|s-r|.
\end{align*}

Now suppose that both $u_\theta$ and $v_\theta$ are finite
compositions of affine maps and coordinatewise SiLU activations,
with direct spatial and time inputs. The SiLU activation
$\rho(a)=a/(1+e^{-a})$ is smooth and has bounded first and
second derivatives. By the chain rule, these networks have
bounded first and second input derivatives and at most linear
growth. Since the time inputs lie in $[0,1]$, there is a finite
constant $C_\theta$ such that
\[
\|u_\theta(w,r,t)\|+\|v_\theta(w,t)\|
\leq C_\theta(1+\|w\|),
\]
and the first and second input derivatives of $u_\theta$ are
uniformly bounded.

Because $v$ and $v_\theta$ do not depend on $r$,
\[
\partial_r\bar q_\theta
=
\partial_r u_\theta-\partial_tu_\theta
-(\partial_wu_\theta)v_\theta
+(t-r)\left[
\partial_{rt}u_\theta
+(\partial_{rw}u_\theta)v_\theta
\right].
\]
The preceding network bounds imply, after enlarging $C_\theta$
if necessary,
\begin{align*}
\|\bar q_\theta(w,r,t)\|
&\leq C_\theta(1+\|w\|)+\|v(w,t)\|,\\
\|\partial_r\bar q_\theta(w,r,t)\|
&\leq C_\theta(1+\|w\|),
\end{align*}
uniformly over $0\leq r\leq t\leq1$.

Let $M_2:=\mathbb E\|X\|^2<\infty$ where $X \sim \pi_0$ (Assumption \ref{assm:prior-flow}). Independence of $X$ and
$Z\sim\mathcal N(0,I_d)$ gives
\[
\mathbb E\|W_t\|^2
=(1-t)^2M_2+t^2d
\leq M_2+d.
\]
Moreover, since $v(W_t,t)=\mathbb E[Z-X\mid W_t]$,
conditional Jensen's inequality gives
\[
\|v(\cdot,t)\|_{L^2(p_t)}^2
\leq \mathbb E\|Z-X\|^2
=M_2+d.
\]
These estimates establish $A,B<\infty$, proving the network
claim. The resulting constants may depend on the network
parameters.

\subsection{Details on Remark \ref{rmk:averagebound}}

\label{proof:rmkaveragebound}

Write $e_\theta(w,t):=v(w,t)-v_\theta(w,t)$. The operator-norm
bound implies, for $r<t$,
\[
(t-r)\|\partial_wu_\theta(w,r,t)e_\theta(w,t)\|
\leq B_u\|e_\theta(w,t)\|.
\]
Squaring, integrating against $p_t$, and summing over the
schedule intervals gives
\begin{align*}
&\sum_{k=1}^K\int_{t_{k-1}}^{t_k}(t-t_{k-1})^2
\|\partial_wu_\theta(\cdot,t_{k-1},t)
e_\theta(\cdot,t)\|_{L^2(p_t)}^2\,dt\\
&\qquad\leq B_u^2
\sum_{k=1}^K\int_{t_{k-1}}^{t_k}
\|e_\theta(\cdot,t)\|_{L^2(p_t)}^2\,dt
=B_u^2\mathcal R_{\mathrm{FM}}(\theta),
\end{align*}
where the last equality uses $T\sim\operatorname{Unif}[0,1]$
and the fact that the schedule intervals partition $[0,1]$.

If $u_\theta$ is uniformly
$L_u$-Lipschitz in $w$, then
$\|\partial_wu_\theta(w,r,t)\|_{\mathrm{op}}\leq L_u$.
Since $t-r\leq1$, the sufficient condition follows with
$B_u=L_u$.

Finally, \eqref{eq:jacobian} controls only the Jacobian acting
on $e_\theta$, averaged over space and time; it does not
require a uniform operator-norm bound. For example, when
$v_\theta=v$, its left-hand side vanishes.

For finite affine-SiLU networks with direct spatial and time
inputs, the spatial Jacobian is uniformly bounded (similar to the argument in Appendix \ref{app:rmkGthetareg}):
\[
L_u:=\sup_{w\in\mathbb R^d,\,(r,t)\in[0,1]^2}
\|\partial_wu_\theta(w,r,t)\|_{\mathrm{op}}<\infty.
\]

\subsection{Proof of Theorem~\ref{thm:fsimf}}
\label{proof:thm:fsimf}

\underline{Population risk control:}

By \eqref{eq:nothing} and the definition of the joint marginal risk,
\[
\RiMF(\theta)+\lambda\RFM(\theta)=\rj(\theta)=\delta.
\]
Therefore, $\RiMF(\theta)\leq\delta$ and
$\RFM(\theta)\leq\delta/\lambda$.

\underline{Recovering fixed-time slices:}

Independent uniform time sampling on $[0,1]^2$ implies
\[
\int_0^1G_\theta(r)\,dr
=\int_0^1\int_r^1
\|\bar q_\theta(\cdot,r,t)\|_{L^2(p_t)}^2\,dt\,dr
\leq \int_0^1\int_0^1
\|\bar q_\theta(\cdot,r,t)\|_{L^2(p_t)}^2\,dt\,dr = \RiMF(\theta).
\]
Fix $r\in[0,1]$ and $h\in(0,1]$. Choose an interval
$I=[a,b]\subset[0,1]$ such that $a\leq r\leq b$ and $b-a=h$.
By Assumption~\ref{assm:LG},
\[
G_\theta(r)\leq G_\theta(s)+L_G|s-r|,
\qquad s\in I.
\]
Averaging this inequality over $I$ gives
\[
G_\theta(r)
\leq \frac1h\int_I G_\theta(s)\,ds
+\frac{L_G}{h}\int_I|s-r|\,ds.
\]
The first term is at most $\delta/h$, since $G_\theta\geq0$
and $\int_0^1G_\theta(s)\,ds\leq\RiMF(\theta)\leq\delta$.
For the second term, note that
\[
\int_a^b|s-r|\,ds
=\frac{(r-a)^2+(b-r)^2}{2}
\leq\frac{((r-a)+(b-r))^2}{2}
=\frac{h^2}{2}.
\]
Consequently,
\begin{align}
\label{Gthetar}
    G_\theta(r)\leq\frac{\delta}{h}+\frac{L_Gh}{2}.
\end{align}
If $0<\delta\leq L_G/2$, choosing
$h=\sqrt{2\delta/L_G}\in(0,1]$ to minimize the bound in \eqref{Gthetar}, we get
\begin{equation}
G_\theta(r)\leq(2L_G\delta)^{1/2},
\qquad \forall r\in[0,1].
\label{eq:trace}
\end{equation}
If $\delta=0$, the same bound follows by letting $h\downarrow0$ in \eqref{Gthetar}.

\underline{Local transport error:}

Define
\begin{align*}
q_\theta(w,r,t)
&:=u_\theta(w,r,t)-v(w,t)
+(t-r)\left[
\partial_tu_\theta(w,r,t)
+\partial_wu_\theta(w,r,t)v(w,t)\right]\\
&=\bar q_\theta(w,r,t)
+(t-r)\partial_wu_\theta(w,r,t)
\bigl(v(w,t)-v_\theta(w,t)\bigr).
\end{align*}
Fix $0\leq r<t\leq1$ and, for $p_t$-almost every $w$, set (see Assumption \ref{assm:prior-flow} for the flow definition)
\[
w_s:=\varphi_{s,t}(w),
\qquad
D_s:=(s-r)u_\theta(w_s,r,s)-w_s,
\qquad s\in[r,t].
\]
Assumptions~\ref{assm:prior-flow} and~\ref{assm:u-regularity}
ensure that $D_s$ is absolutely continuous. The chain rule gives
$\frac{d}{ds}D_s=q_\theta(w_s,r,s)$ for almost every $s$.
Since $D_r=-\varphi_{r,t}(w)$ and
$D_t=(t-r)u_\theta(w,r,t)-w$, integration gives
\begin{equation}
(t-r)u_\theta(w,r,t)-w+\varphi_{r,t}(w)
=\int_r^t q_\theta(\varphi_{s,t}(w),r,s)\,ds.
\label{eq:characteristic-residual}
\end{equation}

For the fixed generation schedule, let
$F_k:=\varphi_{t_{k-1},t_k}$ be the exact step map, and define
\begin{align*}
Q_k&:=\int_{t_{k-1}}^{t_k}
\|\bar q_\theta(\cdot,t_{k-1},t)\|_{L^2(p_t)}^2\,dt,\\
J_{v,k}&:=\int_{t_{k-1}}^{t_k}(t-t_{k-1})^2
\|\partial_wu_\theta(\cdot,t_{k-1},t)
(v-v_\theta)(\cdot,t)\|_{L^2(p_t)}^2\,dt.
\end{align*}
Equation~\eqref{eq:characteristic-residual} gives
\[
F_{k,\theta}(w)-F_k(w)
=-\int_{t_{k-1}}^{t_k}
q_\theta(\varphi_{s,t_k}(w),t_{k-1},s)\,ds.
\]
Using $(\varphi_{s,t_k})_\#p_{t_k}=p_s$, Minkowski's inequality, and Cauchy-Schwarz in time, we obtain

\begin{align}
\|F_{k,\theta}-F_k\|_{L^2(p_{t_k})}
&\leq \int_{t_{k-1}}^{t_k}
\|q_\theta(\varphi_{s,t_k}(\cdot),t_{k-1},s)\|_{L^2(p_{t_k})}\,ds
\notag\\
&= \int_{t_{k-1}}^{t_k}
\|q_\theta(\cdot,t_{k-1},s)\|_{L^2(p_s)}\,ds
\notag\\
&\leq \int_{t_{k-1}}^{t_k}
\|\bar q_\theta(\cdot,t_{k-1},s)\|_{L^2(p_s)}\,ds
\notag\\
&\quad+\int_{t_{k-1}}^{t_k}(s-t_{k-1})
\|\partial_wu_\theta(\cdot,t_{k-1},s)
(v-v_\theta)(\cdot,s)\|_{L^2(p_s)}\,ds
\notag\\
&\leq \sqrt{h_k}\bigl(\sqrt{Q_k}+\sqrt{J_{v,k}}\bigr).
\label{eq:local-step-error}
\end{align}

\underline{Composing the step maps:}

The flow property gives
$S=F_1\circ\cdots\circ F_K=\varphi_{0,1}$.
Define the intermediate compositions
\[
H_k:=F_{1,\theta}\circ\cdots\circ F_{k,\theta}
\circ F_{k+1}\circ\cdots\circ F_K,
\qquad k=0,\ldots,K,
\]
where empty compositions are the identity. Thus $H_0=S$,
$H_K=S_\theta$, and
\[
S_\theta(z)-S(z)
=\sum_{k=1}^K\bigl(H_k(z)-H_{k-1}(z)\bigr).
\]
The maps $H_k$ and $H_{k-1}$ differ only at the $k$th step.
Their common inner composition is the exact flow
$F_{k+1}\circ\cdots\circ F_K=\varphi_{t_k,1}$,
while their common outer composition
$F_{1,\theta}\circ\cdots\circ F_{k-1,\theta}$
has Lipschitz constant at most
$\Lambda_k=\prod_{j<k}L_j$. Hence
\[
\|H_k(z)-H_{k-1}(z)\|
\leq\Lambda_k
\|F_{k,\theta}(\varphi_{t_k,1}(z))
-F_k(\varphi_{t_k,1}(z))\|.
\]
Since $(\varphi_{t_k,1})_\#\gamma=p_{t_k}$,
taking $L^2(\gamma)$ norms and applying the triangle inequality
yields
\[
\|S_\theta-S\|_{L^2(\gamma)}
\leq\sum_{k=1}^K\Lambda_k
\|F_{k,\theta}-F_k\|_{L^2(p_{t_k})}.
\]
Consequently,
\begin{align*}
\|S_\theta-S\|_{L^2(\gamma)}
&\leq\sum_{k=1}^K\Lambda_k
\|F_{k,\theta}-F_k\|_{L^2(p_{t_k})}\\
&\leq\sum_{k=1}^K\Lambda_k\sqrt{h_k}
\bigl(\sqrt{Q_k}+\sqrt{J_{v,k}}\bigr)\\
&\leq A_K(2L_G\delta)^{1/4}
+B_K\left(\sum_{k=1}^KJ_{v,k}\right)^{1/2}\\
&\leq A_K(2L_G\delta)^{1/4}
+\frac{B_uB_K}{\sqrt\lambda}\delta^{1/2}.
\end{align*}
The third line uses Cauchy--Schwarz over $k$ for the
$J_{v,k}$ terms. Squaring proves \eqref{eq:few-step}.
For $K=1$, we have $h_1=\Lambda_1=A_1=B_1=1$, so the same
argument gives the one-step transport bound. In this case there
is no outer learned map in the composition estimate, and no
Lipschitz condition on $F_{1,\theta}$ is needed.

Finally, if $u_\theta$ is uniformly $L_u$-Lipschitz in space, then
\[
L_k\leq1+h_kL_u,
\qquad
\Lambda_k\leq
\exp\left(L_u\sum_{j<k}h_j\right)\leq e^{L_u}.
\]
Since $\sum_kh_k=1$, Cauchy--Schwarz yields
$A_K\leq e^{L_u}\sqrt K$ and $B_K\leq e^{L_u}$.
Also $L_{S_\theta}\leq\prod_kL_k\leq e^{L_u}$, as used
in the data-space posterior bounds. All estimates concern the
fixed generation schedule.

\subsection{Proof of Theorem~\ref{prop:posterior-stability}}\label{proof:posterior-stability}
Write $e:=\mathcal E^{1/2}$, $a(z):=\ell_y(S_\theta(z))$ and
$b(z):=\ell_y(S(z))$. The weight assumptions give
\[
|Z_{\theta,y}-Z_y|\leq\int|a-b|\,d\gamma\leq L_ye.
\]
Hence $L_ye\leq Z_y/2$ ensures $Z_{\theta,y}\geq Z_y/2>0$.
Let $U_2(z)=1+\|z\|^2$ and use the weighted-variation convention
\[
\|\mu-\nu\|_{U_2}:=\sup_{|f|\leq U_2}\left|\int f\,d\mu-\int f\,d\nu\right|.
\]
For each such $f$, decompose the normalized densities as
\[
\int f\,d(\nu_\theta^y-\nu^y)
=\frac1{Z_{\theta,y}}\int f(a-b)\,d\gamma
+\frac{Z_y-Z_{\theta,y}}{Z_yZ_{\theta,y}}\int fb\,d\gamma.
\]
Since $\mathbb E_\gamma U_2=1+d$ and
$\mathbb E_\gamma U_2^2=d^2+4d+1$, Cauchy-Schwarz gives
\begin{align*}
\|\nu_\theta^y-\nu^y\|_{U_2}
&\leq\frac{L_ye}{Z_{\theta,y}}
\left[\sqrt{d^2+4d+1}+\frac{M_y(1+d)}{Z_y}\right]\\
&\leq\frac{C_y(d)}2e.
\end{align*}
For completeness, couple the common part of two measures identically and
couple their remaining positive parts. The inequality
$\|z-z'\|^2\leq2\|z\|^2+2\|z'\|^2$ then gives
\[
W_2^2(\mu,\nu)\leq2\int\|z\|^2\,d|\mu-\nu|(z)
\leq2\|\mu-\nu\|_{U_2}.
\]
This proves \eqref{eq:source-stability}. All source second moments are finite because the normalized
weights are bounded relative to $\gamma$.

For the data-space statement, first note that $S_\#\gamma=\pi_0$ implies
$S_\#\nu^y=\pi^y$ by the definition of a pushforward. Insert the intermediate
measure $(S_\theta)_\#\nu^y$. Lipschitz propagation and the triangle inequality give
\[
W_2\big((S_\theta)_\#\nu_\theta^y,\pi^y\big)
\leq L_{S_\theta}W_2(\nu_\theta^y,\nu^y)
+W_2\big((S_\theta)_\#\nu^y,S_\#\nu^y\big).
\]
The common-input coupling bounds the second term by
\[
W_2^2\big((S_\theta)_\#\nu^y,S_\#\nu^y\big)
\leq\mathbb E_{\nu^y}\|S_\theta-S\|^2
\leq\frac{M_y}{Z_y}\mathcal E,
\]
which proves the data-space stability. Substituting \eqref{eq:transport-error-bound} yields \eqref{eq:end-to-end-source}--\eqref{eq:end-to-end-data}. 

\subsection{Geometric convergence of the source samplers}
\label{app:sampler-convergence}

Fix $\theta$ and $y$, and write
\[
\omega(z):=\ell_y(S_\theta(z)),\qquad
\mu_j(\dd z):=Z_j^{-1}\omega(z)^{\beta_j}\gamma(\dd z),\qquad
Z_j:=\int\omega^{\beta_j}\,\dd\gamma,\qquad
\Pi:=\bigotimes_{j=0}^J\mu_j.
\]
Thus $\mu_0=\gamma$ and $\mu_J=\nu_\theta^y$.
Let $\eta_n$ be the joint law after $n$ production sweeps and
$\nu_n$ its cold marginal. We use
$\chi^2(\eta\|\Pi):=\int(\dd\eta/\dd\Pi-1)^2\,\dd\Pi$.

\begin{assumption}[Source weight and production sampler]
\label{assm:sampler-convergence}
The source weight is measurable and satisfies
$0<m\leq\omega(z)\leq M<\infty$ for every $z\in\mathbb R^d$,
where $m<M$. The ladder $0=\beta_0<\cdots<\beta_J=1$, $J\geq1$,
and all local-kernel parameters are fixed during production.
Each sweep independently applies the local kernels to all replicas,
then attempts each adjacent Metropolis swap once in a prescribed order;
this order may vary with the sweep index.
Every pCN angle satisfies $h_{\beta_j}\in(0,\pi/2]$.
For the hybrid sampler, $\Psi_\theta=-\log\omega$ is continuously
differentiable, and the cold update is the Metropolis-corrected
split HMC kernel of Section~\ref{sec:sampler}, with fresh independent
Gaussian momentum at each update. The initial joint law satisfies
$D_0:=\chi^2(\eta_0\|\Pi)<\infty$.
\end{assumption}

The positive lower bound is an additional condition for the convergence
result. It holds, for example, for exponentiated bounded rewards.
Geometric convergence of SPT+pCN is established by \citet{wangtest}.
The following statement gives explicit bounds for both local-kernel
choices and extends the guarantee to the hybrid sampler.

\begin{theorem}[Geometric convergence of SPT+pCN and SPT+hybrid]
\label{thm:sampler-convergence}
Under Assumption~\ref{assm:sampler-convergence}, set $\kappa:=M/m$.
For \emph{iMF+SPT+pCN}, define
\[
g_{\mathrm{pCN}}:=\min_{0\leq j\leq J}
\kappa^{-\beta_j}(1-\cos h_{\beta_j}),\qquad
q_{\mathrm{pCN}}:=1-g_{\mathrm{pCN}}\in(0,1).
\]
Then, for every integer $n\geq0$,
\begin{equation}
\chi^2(\eta_n\|\Pi)\leq D_0q_{\mathrm{pCN}}^{2n},\qquad
W_2(\nu_n,\nu_\theta^y)
\leq\sqrt{2\kappa D_0}\,q_{\mathrm{pCN}}^n.
\label{eq:pcn-geometric}
\end{equation}
For \emph{iMF+SPT+hybrid}, define
\[
g_{\mathrm{hot}}:=\min_{0\leq j<J}
\kappa^{-\beta_j}(1-\cos h_{\beta_j}),\qquad
a:=\kappa^{-(1-\beta_{J-1})},\qquad
q_{\mathrm{hyb}}:=1-g_{\mathrm{hot}}^2a\in(0,1).
\]
Then
\begin{equation}
\begin{aligned}
\chi^2(\eta_n\|\Pi)
&\leq D_0q_{\mathrm{hyb}}^{2\lfloor n/2\rfloor},\\
W_2(\nu_n,\nu_\theta^y)
&\leq\sqrt{2\kappa D_0}\,q_{\mathrm{hyb}}^{\lfloor n/2\rfloor}.
\end{aligned}
\label{eq:hybrid-geometric}
\end{equation}
The hybrid bounds remain valid if the cold update is replaced by any
fixed $\mu_J$-invariant Markov kernel. In particular, no separate
spectral-gap assumption on split HMC is required.
If $\eta_0=\gamma^{\otimes(J+1)}$, then
$D_0\leq\kappa^{\sum_{j=0}^J\beta_j}-1$.
\end{theorem}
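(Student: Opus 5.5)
The plan is to work in the Hilbert space $L^2(\Pi)$ and track the density $f_n:=\dd\eta_n/\dd\Pi$. Here $\chi^2(\eta_n\|\Pi)=\|f_n-1\|_{L^2(\Pi)}^2$. A sweep acts on densities through the adjoint of its Markov operator, and that adjoint is a composition of $\Pi$-reversible pieces: the local kernels and the adjacent Metropolis swaps. Each piece is therefore a self-adjoint contraction on $L^2(\Pi)$ that fixes constants. It suffices to bound operator norms on $L^2_0(\Pi)$, the mean-zero subspace, since the adjoint of a composition is the composition of adjoints in reverse order and the norms are unchanged. The $W_2$ statements come afterwards from $\chi^2$ via data processing and a moment bound.

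\emph{Step 1: pCN gap.} The pCN proposal $Q_h$ is the Mehler (Ornstein–Uhlenbeck) operator for $\gamma$. Its spectrum on $L^2_0(\gamma)$ is $\{\cos^k h:k\geq1\}$, so its Dirichlet-form gap is $1-\cos h$. The Metropolis kernel $P_j$ targeting $\mu_j$ has off-diagonal part at least $\kappa^{-\beta_j}Q_h(z,\dd z')$, because the acceptance ratio $(\omega(z')/\omega(z))^{\beta_j}\geq\kappa^{-\beta_j}$. I would compare Dirichlet forms, $\mathcal E_{P_j}(g)\geq\kappa^{-\beta_j}\mathcal E_{Q_h}(g)$, and use the ratio $\dd\mu_j/\dd\gamma\in[\kappa^{-\beta_j},\kappa^{\beta_j}]$ up to normalization. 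The constants must be handled carefully so that only the single factor $\kappa^{-\beta_j}$ survives. The aim is $\|P_j\|_{L^2_0(\mu_j)}\leq 1-\kappa^{-\beta_j}(1-\cos h_{\beta_j})$. Positivity of the pCN kernel helps here: the Mehler operator is positive, so controlling the gap also controls the norm.

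\emph{Step 2: product kernel.} The local step is the tensor product $\bigotimes_jP_j$. Decompose $L^2(\Pi)$ into tensor products of constants and mean-zero pieces. Every nonconstant component has at least one mean-zero factor, so the tensor product has norm at most $\max_j\|P_j\|_{L^2_0}=q_{\mathrm{pCN}}$ on $L^2_0(\Pi)$. Since the swaps are contractions, each sweep contracts $f-1$ by $q_{\mathrm{pCN}}$, which gives $\chi^2(\eta_n\|\Pi)\leq D_0q_{\mathrm{pCN}}^{2n}$.

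\emph{Step 3: the hybrid case (main obstacle).} The cold kernel is only a contraction, so a single sweep need not contract functions of $z_J$ alone. I would split $L^2_0(\Pi)=\mathcal H_{\mathrm{cold}}\oplus\mathcal H_{\mathrm{cold}}^\perp$, where $\mathcal H_{\mathrm{cold}}$ consists of mean-zero functions of $z_J$ only.
\begin{itemize}
\item On $\mathcal H_{\mathrm{cold}}^\perp$, the hot tensor factors contract by $1-g_{\mathrm{hot}}$, by the argument of Step 2.
\item On $\mathcal H_{\mathrm{cold}}$, the swap between levels $J-1$ and $J$ is accepted with probability at least $a=\kappa^{-(1-\beta_{J-1})}$. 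It therefore moves at least an $a$-fraction of the squared norm into components that depend on $z_{J-1}$.
\item The next sweep's hot pCN then contracts that portion.
\end{itemize}
Chaining these over two sweeps should give $\|T^2\|_{L^2_0}^2\leq 1-g_{\mathrm{hot}}^2a$, i.e. $\|T^2\|_{L^2_0}\leq q_{\mathrm{hyb}}$ up to the paper's normalization. That yields the $\lfloor n/2\rfloor$ exponent. Only $\mu_J$-invariance of the cold kernel is used, which explains the remark that any fixed $\mu_J$-invariant kernel works. The delicate part is tracking cross terms between the two subspaces, given that the swap order may vary with the sweep; I would handle this by a Pythagorean bookkeeping of $\|\cdot\|^2$.

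\emph{Step 4: to $W_2$ and initial bound.} By data processing, $\chi^2(\nu_n\|\mu_J)\leq\chi^2(\eta_n\|\Pi)$. Next use $W_2^2\leq 2\int\|z\|^2\,\dd|\nu_n-\mu_J|$, as in the proof of Theorem~\ref{prop:posterior-stability}. Cauchy–Schwarz bounds this by $2(\int\|z\|^4\dd\mu_J)^{1/2}\chi^{1/2}$. The factor $\int\|z\|^4\,\dd\mu_J$ is controlled by $\kappa$ times a Gaussian moment; the constants should be matched to $\sqrt{2\kappa D_0}$, taking square roots appropriately. Finally, for $\eta_0=\gamma^{\otimes(J+1)}$ the density $\dd\eta_0/\dd\Pi=\prod_jZ_j\omega^{-\beta_j}$ is at most $\kappa^{\sum_j\beta_j}$. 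Hence $\chi^2=\int(\dd\eta_0/\dd\Pi)\,\dd\eta_0-1\leq\kappa^{\sum_j\beta_j}-1$.
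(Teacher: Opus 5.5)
Your Step 4 cannot give the stated Wasserstein bound. The weighted total-variation estimate followed by Cauchy--Schwarz yields $W_2^2(\nu_n,\mu_J)\le 2\,(\mathbb E_{\mu_J}\|z\|^4)^{1/2}\,\chi^2(\nu_n\|\mu_J)^{1/2}$, so $W_2\lesssim(\chi^2)^{1/4}$. Substituting $\chi^2\le D_0q^{2n}$ gives decay $q^{n/2}$, with a dimension-dependent prefactor of order $(\kappa d^2D_0)^{1/4}$. This is not a matter of matching constants: $q^{n/2}\gg q^n$ for large $n$. To make $W_2$ linear in $\sqrt{\chi^2}$ you need a transport--entropy inequality for $\mu_J$, which is what the paper uses:
\begin{itemize}
\item Since $\dd\mu_J/\dd\gamma$ has sup-to-inf ratio at most $\kappa$, the Gaussian log-Sobolev inequality survives this bounded perturbation with constant $\kappa$.
\item Otto--Villani then gives $W_2^2(\nu,\mu_J)\le 2\kappa\KL(\nu\|\mu_J)\le 2\kappa\,\chi^2(\nu\|\mu_J)$.
\item Data processing passes from $\eta_n$ to $\nu_n$.
\end{itemize}
This yields the dimension-free $\sqrt{2\kappa D_0}\,q^n$.

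Step 3 also falls short of the statement. Your target $\|T^2\|^2\le 1-g_{\mathrm{hot}}^2a$ bounds the squared norm, so it only gives $\chi^2(\eta_n\|\Pi)\le D_0q_{\mathrm{hyb}}^{\lfloor n/2\rfloor}$. The theorem needs the two-sweep operator norm itself to be at most $q_{\mathrm{hyb}}$, and no normalization converts one into the other.

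The cross terms you defer are the whole difficulty. Put $r=1-g_{\mathrm{hot}}$. If you bound the four blocks of the swap operator relative to $\mathcal H_{\mathrm{cold}}\oplus\mathcal H_{\mathrm{cold}}^\perp$ separately, the majorant is the quadratic form of the $2\times2$ matrix with rows $(1-a,\,r)$ and $(r,\,r^2)$. Its norm tends to $1+r^2>1$ as $a\to0$, so you lose contraction altogether. The paper avoids this as follows. Let $\mathsf E$ be the projection onto functions of the cold coordinate; the local update $\mathsf L$ commutes with it. Then $\mathsf L=\mathsf T\mathsf D=\mathsf D\mathsf T$ with $\mathsf D=rI+(1-r)\mathsf E$ and $\mathsf T$ a contraction. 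Expanding $\mathsf D$ as this convex combination and using that $\mathsf S$, $\mathsf E\mathsf S$, $\mathsf S\mathsf E$ are contractions as whole operators gives
$\|\mathsf L\mathsf S\mathsf L\|\le\|\mathsf D\mathsf S\mathsf D\|\le r^2+2r(1-r)+(1-r)^2\|\mathsf E\mathsf S\mathsf E\|$.

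The last factor is at most $1-a$, which replaces your heuristic that an $a$-fraction of the squared norm is moved. Hot-only swaps are absorbed by $\mathsf E$ on either side, leaving $\mathsf E\mathsf S_{J-1}\mathsf E$. This is an independence Metropolis kernel with proposal $\mu_{J-1}$ and target $\mu_J$, and it minorizes $a\mu_J$.

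Step 1 has two smaller problems.
\begin{itemize}
\item Bounding the acceptance ratio by $\kappa^{-\beta_j}$ and then changing measure between $\mu_j$ and $\gamma$ costs $\kappa^{-2\beta_j}$. Instead, with $a_j=\dd\mu_j/\dd\gamma$, write the accepted part as $\min\{a_j(z),a_j(z')\}\,\gamma(\dd z)Q(z,\dd z')$. Then only $\inf a_j/\sup a_j\ge\kappa^{-\beta_j}$ appears, and $Z_j$ cancels.
\item Positivity must be shown for the Metropolized kernel, not just the Mehler proposal. The layer-cake identity $\min\{a(z),a(z')\}=\int_0^\infty\mathbf 1_{\{a(z)\ge s\}}\mathbf 1_{\{a(z')\ge s\}}\,\dd s$ transfers positivity of $Q$ to it.
\end{itemize}
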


\begin{proof}
All operator norms below are on the centered space
$L_0^2(\Pi):=\{f\in L^2(\Pi):\int f\,\dd\Pi=0\}$,
unless another measure is specified.
Each local kernel and each Metropolis swap preserves $\Pi$, so each
is an $L^2(\Pi)$ contraction. Their composition therefore preserves
the joint target, whose cold marginal is $\nu_\theta^y$.

\paragraph{Contraction of a pCN update.}
Let $\mathsf Q_j$ be the Gaussian pCN proposal kernel and
$\mathsf A_j$ its Metropolis correction for $\mu_j$.
The Gaussian kernel is self-adjoint and positive on $L^2(\gamma)$,
with centered norm $\cos h_{\beta_j}$.
Writing $a_j:=\dd\mu_j/\dd\gamma$, its accepted transition measure is
\[
\min\{a_j(z),a_j(z')\}\,\gamma(\dd z)\mathsf Q_j(z,\dd z').
\]
Hence comparison of Dirichlet forms and variances gives
\begin{align*}
\langle f,(I-\mathsf A_j)f\rangle_{\mu_j}
&\geq (\inf a_j)\langle f,(I-\mathsf Q_j)f\rangle_\gamma\\
&\geq\frac{\inf a_j}{\sup a_j}
(1-\cos h_{\beta_j})\operatorname{Var}_{\mu_j}(f)\\
&\geq\kappa^{-\beta_j}(1-\cos h_{\beta_j})
\operatorname{Var}_{\mu_j}(f).
\end{align*}
The corrected kernel is also positive. Indeed, the identity
\[
\min\{a_j(z),a_j(z')\}
=\int_0^\infty
\mathbf 1_{\{a_j(z)\geq s\}}\mathbf 1_{\{a_j(z')\geq s\}}\,\dd s
\]
expresses the accepted part of its quadratic form as an integral of
nonnegative quadratic forms of $\mathsf Q_j$; the rejection part
is nonnegative as well. Reversibility and positivity therefore imply
\begin{equation}
\|\mathsf A_j\|_{L_0^2(\mu_j)\to L_0^2(\mu_j)}
\leq1-\kappa^{-\beta_j}(1-\cos h_{\beta_j}).
\label{eq:pcn-operator-contraction}
\end{equation}

\paragraph{All-pCN sweeps.}
When every replica uses pCN, tensorization of
\eqref{eq:pcn-operator-contraction} bounds the centered norm of the
joint local update by $q_{\mathrm{pCN}}$.
The subsequent swap sweep is a contraction, so the centered norm
after $n$ sweeps is at most $q_{\mathrm{pCN}}^n$.
Applying the adjoint operator to $\dd\eta_0/\dd\Pi-1$ proves the
first bound in \eqref{eq:pcn-geometric}.

\paragraph{Hybrid sweeps.}
Let $\mathsf H$ be the cold invariant kernel and write
$\mathsf L=(\bigotimes_{j<J}\mathsf A_j)\otimes\mathsf H$
for the local update.
Let $\mathsf E$ be the orthogonal projection onto functions of the
cold coordinate, obtained by integrating all hot coordinates against
$\bigotimes_{j<J}\mu_j$. The operators $\mathsf L$ and $\mathsf E$
commute, and \eqref{eq:pcn-operator-contraction} gives
\[
\|\mathsf L\mathsf E\|\leq1,\qquad
\|\mathsf L(I-\mathsf E)\|\leq r,
\qquad r:=1-g_{\mathrm{hot}}\in[0,1).
\]
For a complete swap sweep $\mathsf S$, every swap involving only hot
replicas satisfies $\mathsf E\mathsf S_j=\mathsf S_j\mathsf E=\mathsf E$.
Since the edge $(J-1,J)$ is attempted once,
$\mathsf E\mathsf S\mathsf E=\mathsf E\mathsf S_{J-1}\mathsf E$.
On functions of the cold coordinate this is an independence
Metropolis kernel with target $\mu_J$ and proposal $\mu_{J-1}$.
Moreover,
\[
\frac{\dd\mu_J}{\dd\mu_{J-1}}(z)
=\frac{Z_{J-1}}{Z_J}\omega(z)^{1-\beta_{J-1}}
\leq\kappa^{1-\beta_{J-1}}=a^{-1}.
\]
Its transition kernel thus minorizes $a\mu_J$, yielding
$\|\mathsf E\mathsf S\mathsf E\|\leq1-a$ on $L_0^2(\Pi)$.

Set $\mathsf D:=\mathsf E+r(I-\mathsf E)$.
For $r>0$, the operator
$\mathsf T:=\mathsf L\mathsf E+r^{-1}\mathsf L(I-\mathsf E)$
is a contraction and satisfies $\mathsf L=\mathsf T\mathsf D
=\mathsf D\mathsf T$. For $r=0$ the same factorization holds with
$\mathsf T=\mathsf L$. Consequently,
\begin{align*}
\|\mathsf L\mathsf S\mathsf L\|
&\leq\|\mathsf D\mathsf S\mathsf D\|\\
&\leq r^2+2r(1-r)+(1-r)^2(1-a)\\
&=1-g_{\mathrm{hot}}^2a=q_{\mathrm{hyb}}.
\end{align*}
Here we expanded $\mathsf D=rI+(1-r)\mathsf E$ and used that
$\mathsf S$, $\mathsf E\mathsf S$, and $\mathsf S\mathsf E$
are contractions. Each production sweep acts on functions as
$\mathsf L\mathsf S_n$, so every two consecutive sweeps contract
the centered norm by at most $q_{\mathrm{hyb}}$, regardless of their
prescribed swap orders. Applying the adjoint product to the initial
density difference proves the $\chi^2$ bound in
\eqref{eq:hybrid-geometric}. This argument only uses invariance of
$\mathsf H$, which proves the assertion for general cold kernels.

\paragraph{Cold-marginal Wasserstein bounds.}
The density of $\mu_J$ relative to $\gamma$ has upper-to-lower ratio
at most $\kappa$. The Gaussian logarithmic Sobolev inequality,
bounded perturbation, and the implication from logarithmic Sobolev
to transport inequalities give \citep{otto2000generalization}
\[
W_2^2(\nu,\mu_J)\leq2\kappa\KL(\nu\|\mu_J)
\leq2\kappa\chi^2(\nu\|\mu_J);
\]
see also \citet[Lemma~A.10]{wangtest}.
Conditional Jensen's inequality gives
$\chi^2(\nu_n\|\mu_J)\leq\chi^2(\eta_n\|\Pi)$.
Combining these inequalities with the joint bounds proves
\eqref{eq:pcn-geometric} and \eqref{eq:hybrid-geometric}.
In particular, the latter has the form $C\rho^n$ with
$\rho=\sqrt{q_{\mathrm{hyb}}}$ and
$C=\sqrt{2\kappa D_0/q_{\mathrm{hyb}}}$.

Finally, for independent Gaussian initialization,
\[
D_0+1
=\prod_{j=0}^J\left[
\left(\int\omega^{\beta_j}\,\dd\gamma\right)
\left(\int\omega^{-\beta_j}\,\dd\gamma\right)\right]
\leq\kappa^{\sum_{j=0}^J\beta_j}.
\]
\end{proof}

\begin{corollary}[End-to-end convergence]
\label{cor:end-to-end-convergence}
Suppose the assumptions of Theorem~\ref{thm:fsimf},
Proposition~\ref{prop:posterior-stability}, and
Theorem~\ref{thm:sampler-convergence} hold. Assume also that
$S_\theta$ is $L_{S_\theta}$-Lipschitz,
$0\leq\delta\leq L_G/2$, and $L_yb_K(\delta)\leq Z_y/2$.
Define
\[
s_n:=\sqrt{2\kappa D_0}
\begin{cases}
q_{\mathrm{pCN}}^n, & \text{for iMF+SPT+pCN},\\
q_{\mathrm{hyb}}^{\lfloor n/2\rfloor},
& \text{for iMF+SPT+hybrid}.
\end{cases}
\]
Then the data-space output law satisfies
\begin{equation}
\begin{aligned}
W_2\big((S_\theta)_\#\nu_n,\pi^y\big)
\leq{}&L_{S_\theta}s_n
+L_{S_\theta}C_y(d)^{1/2}b_K(\delta)^{1/2}\\
&+\sqrt{\frac{M_y}{Z_y}}b_K(\delta).
\end{aligned}
\label{eq:complete-convergence}
\end{equation}
\end{corollary}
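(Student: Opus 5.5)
The plan is to chain three earlier bounds through the triangle inequality for $W_2$ in data space. I insert the intermediate measure $(S_\theta)_\#\nu_\theta^y$, the pushforward of the exact learned source posterior, and write
\[
W_2\big((S_\theta)_\#\nu_n,\pi^y\big)
\leq W_2\big((S_\theta)_\#\nu_n,(S_\theta)_\#\nu_\theta^y\big)
+W_2\big((S_\theta)_\#\nu_\theta^y,\pi^y\big).
\]
The first term is the sampling error and the second is the transport-induced posterior error.

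\textbf{Sampling term.} Since $S_\theta$ is $L_{S_\theta}$-Lipschitz, pushing forward an optimal coupling of $(\nu_n,\nu_\theta^y)$ through $S_\theta\times S_\theta$ gives
\[
W_2\big((S_\theta)_\#\nu_n,(S_\theta)_\#\nu_\theta^y\big)\leq L_{S_\theta}W_2(\nu_n,\nu_\theta^y).
\]
Theorem~\ref{thm:sampler-convergence} bounds $W_2(\nu_n,\nu_\theta^y)$ by exactly $s_n$ in both cases: \eqref{eq:pcn-geometric} for pCN and \eqref{eq:hybrid-geometric} for the hybrid sampler. This gives the term $L_{S_\theta}s_n$.

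\textbf{Posterior term.} The hypotheses $\delta\leq L_G/2$ and $L_yb_K(\delta)\leq Z_y/2$ are exactly those of Corollary~\ref{cor:end-to-end}. Its data-space bound \eqref{eq:end-to-end-data} then gives
\[
W_2\big((S_\theta)_\#\nu_\theta^y,\pi^y\big)\leq L_{S_\theta}C_y(d)^{1/2}b_K(\delta)^{1/2}+\sqrt{M_y/Z_y}\,b_K(\delta).
\]
Summing the two terms yields \eqref{eq:complete-convergence}.

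\textbf{Main obstacle: compatibility of the hypotheses.} The objects in the two results must be the same, and a few points need checking.
\begin{itemize}
\item The measure $\nu_\theta^y$ targeted by the sampler (via $\omega=\ell_y\circ S_\theta$) must coincide with the one in Theorem~\ref{prop:posterior-stability}.
\item $Z_{\theta,y}>0$ must hold. This is guaranteed by the corollary, and also directly by $\omega\geq m>0$.
\item All measures involved must lie in $\mathcal P_2$. The pushforward $(S_\theta)_\#\nu_n$ has finite second moment because $S_\theta$ has linear growth and $\nu_n$ has finite second moment, the latter coming from $\chi^2(\eta_n\|\Pi)<\infty$ together with the Gaussian tails of $\Pi$.
\item Theorem~\ref{thm:fsimf} requires $\mathcal E\leq b_K(\delta)^2$. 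Combined with the hypothesis $L_yb_K(\delta)\leq Z_y/2$, this gives $L_y\mathcal E^{1/2}\leq Z_y/2$, as Theorem~\ref{prop:posterior-stability} requires. Both posterior bounds are monotone in $\mathcal E$, so substituting $b_K(\delta)^2$ for $\mathcal E$ is legitimate.
\end{itemize}
Once these checks are in place, the proof is a two-line triangle inequality.
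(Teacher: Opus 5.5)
Your proposal is correct and follows the same route the paper intends: a triangle inequality through $(S_\theta)_\#\nu_\theta^y$, the Lipschitz pushforward of the sampler bound $W_2(\nu_n,\nu_\theta^y)\leq s_n$ from Theorem~\ref{thm:sampler-convergence}, and the data-space bound \eqref{eq:end-to-end-data} of Corollary~\ref{cor:end-to-end} for the remaining term. The paper gives no separate written proof, so your compatibility checks fill in exactly the details it leaves implicit: the same target $\mu_J=\nu_\theta^y$, finite second moments, and $L_y\mathcal E^{1/2}\leq L_yb_K(\delta)\leq Z_y/2$ (Theorem~\ref{thm:fsimf} supplies $\mathcal E\leq b_K(\delta)^2$ as a conclusion rather than requiring it).
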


\section{Additional experiments}
\label{app:experiments}

\subsection{Empirical loss-to-map-error relationship}
\label{app:loss_map_error}

We examine the relationship between the marginal joint risk and transport-map error in Theorem~3.7 using the two-dimensional banana prior
\[
X=\bigl(U,\;0.62(U^2-1)+0.30V\bigr),
\qquad U,V\overset{\mathrm{iid}}{\sim}\mathcal{N}(0,1).
\]
For independent $X\sim\pi_0$ and $Z\sim\mathcal{N}(0,I_2)$, we use the interpolation $W_t=(1-t)X+tZ$ and marginal velocity $v(w,t)=\mathbb{E}[Z-X\mid W_t=w]$. The reference map $S=\varphi_{0,1}$ is the transport induced by this marginal velocity, rather than the explicit triangular map used to generate banana samples.

\paragraph{Numerical reference.}
Conditioning on the first coordinate of $W_t$ gives a Gaussian conditional distribution for $U$. Integrating out $V$ analytically reduces evaluation of $v(w,t)$ to one-dimensional Gaussian quadrature, with weights determined by the second coordinate of $W_t$. We use 128-point Gauss--Hermite quadrature and integrate the marginal ODE from $t=1$ to $t=0$ using DOP853 in double precision, with relative and absolute tolerances $10^{-8}$ and $10^{-10}$. Doubling the quadrature order changes the reference velocities by $6.9\times10^{-11}$ RMS over the risk-evaluation inputs. On 128 source samples, doubling the quadrature order and tightening both ODE tolerances tenfold change the reference maps by $8.3\times10^{-15}$ and $8.7\times10^{-9}$ RMS, respectively. These sensitivity checks indicate that numerical reference error is negligible at the observed map-error scale.

\paragraph{Training and evaluation.}
We train an iMF model with three hidden SiLU layers of width 128 and separate average-velocity and auxiliary-velocity heads on 20,000 fixed prior samples. Training uses Adam with learning rate $10^{-3}$, weight decay $10^{-5}$, batch size 512, and 16,000 updates. We use independent data--noise coupling, independent unsorted uniform times $(r,t)\in[0,1]^2$, and auxiliary-loss weight $\lambda=1$. The figure reports a single training run with seed 0. At initialization and every 500 updates, we estimate
\[
\delta=R_{\mathrm{iMF}}+\lambda R_{\mathrm{FM}}
\]
directly from the marginal residuals using 8,192 held-out interpolation samples and the numerical reference velocity. Thus, $\widehat{\delta}$ estimates the marginal risk appearing in Theorem~3.7, rather than the raw training loss containing irreducible regression noise. We also estimate the squared map error
\[
\widehat E_K
=\frac{1}{2048}\sum_{i=1}^{2048}
\bigl\|S_{\theta,K}(z_i)-S(z_i)\bigr\|_2^2,
\qquad z_i\sim\mathcal{N}(0,I_2),
\]
for $K\in\{1,6\}$ uniformly spaced generation steps. The evaluation draws are independent of the training dataset and fixed across checkpoints.

\paragraph{Results.}
Figure~\ref{fig:loss_map_error} shows that decreasing marginal risk is generally accompanied by decreasing map error for both generation schedules. Over training, $\widehat{\delta}$ decreases from $2.743$ to $0.0653$, while the squared map error decreases from $0.554$ to $0.0198$ for one-step generation and from $0.554$ to $0.00732$ for six-step generation. The relationship is not strictly monotonic across checkpoints. These results provide empirical support for the loss-to-map-error connection in Theorem~3.7, but do not establish its asymptotic rate or uniform control of the regularity constants, which may change during training.

\begin{figure}[t]
    \centering
    \includegraphics[width=0.5\linewidth]{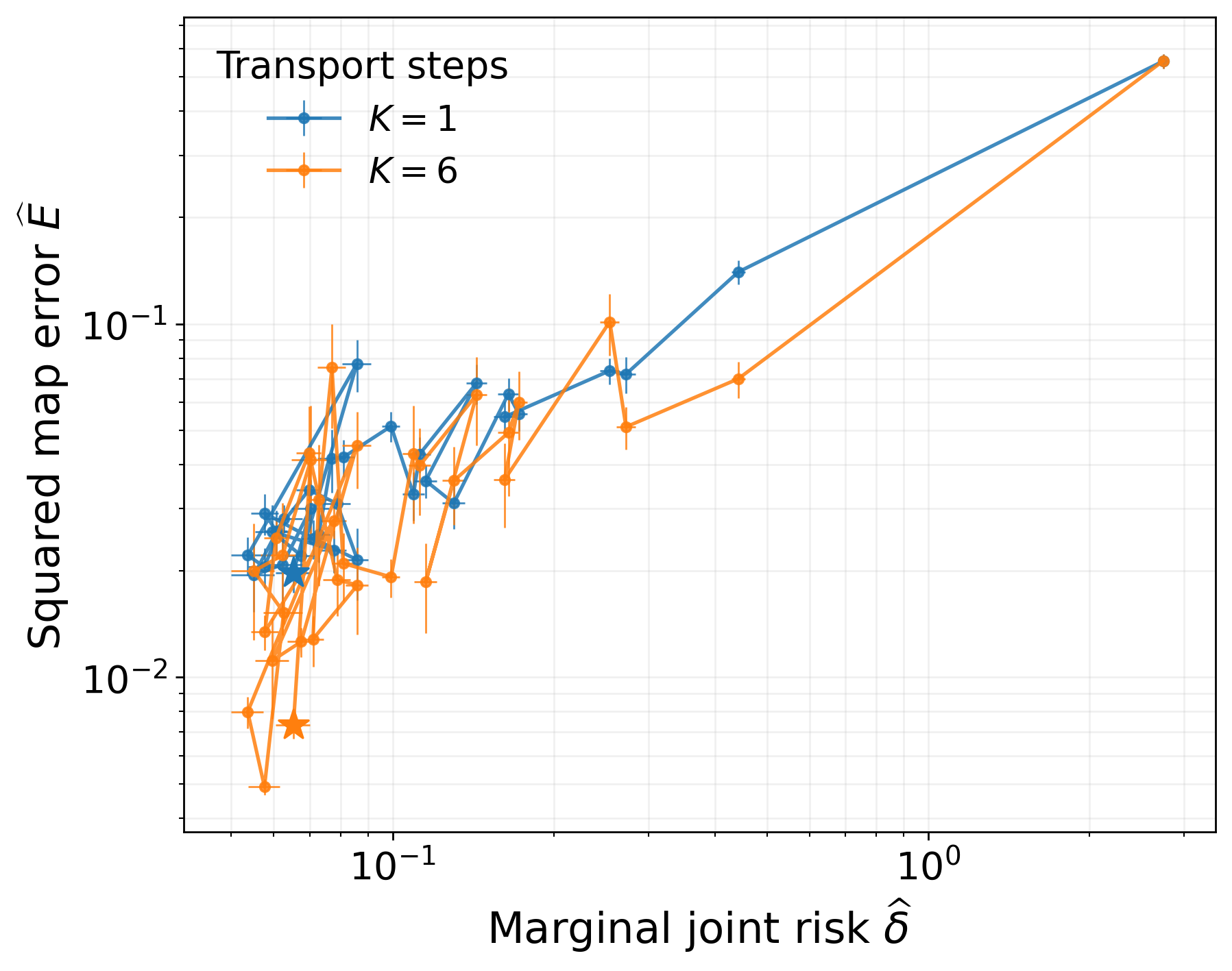}
    \caption{Marginal joint risk versus squared transport-map error on the two-dimensional banana prior for one- and six-step iMF generation. Points represent training checkpoints, connected in training order; stars mark the final checkpoint. Error bars show one Monte Carlo standard error, conditional on the fitted model and numerical reference, and do not represent variability across training runs.}
    \label{fig:loss_map_error}
\end{figure}

\subsection{Synthetic experiments}
\label{app:synthetic}

\paragraph{Prior training.}
For each scenario, all models are trained on the same fixed dataset of 20,000 prior samples using four-hidden-layer SiLU networks of width 512, with 16,000 optimizer updates at learning rate $10^{-3}$. The iMF model uses raw time inputs $(r,t)$ and separate average-velocity and auxiliary-velocity heads; FM and diffusion use four Fourier time harmonics. Both iMF and FM use independent data-noise coupling. For iMF, we sample $r,t$ independently from $\mathrm{Unif}[0,1]$, and set the auxiliary-loss weight to $\lambda=1$. We use Adam for iMF with weight decay $10^{-5}$ and AdamW for FM and diffusion with zero weight decay, both with $(\beta_1,\beta_2)=(0.9,0.999)$. Gradient clipping at norm 5 is applied only to FM and diffusion. Figure~\ref{fig:prior} shows that all three models capture the main geometric structure of the banana and sine priors.

\begin{figure}[t]
    \centering
    \begin{subfigure}[t]{0.85\linewidth}
        \centering
        \includegraphics[width=\linewidth]{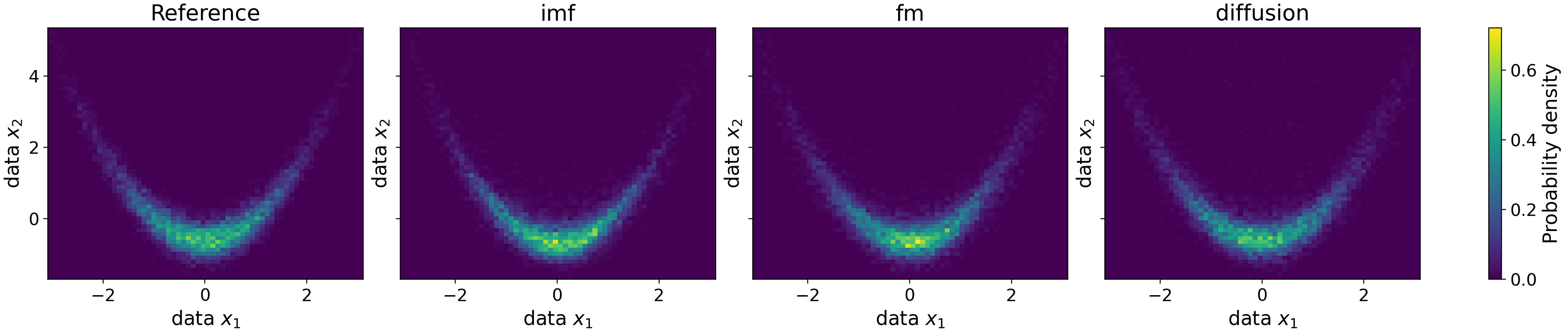}
        \caption{Banana prior.}
        \label{fig:synthetic_banana}
    \end{subfigure}
    \hfill
    \begin{subfigure}[t]{0.85\linewidth}
        \centering
        \includegraphics[width=\linewidth]{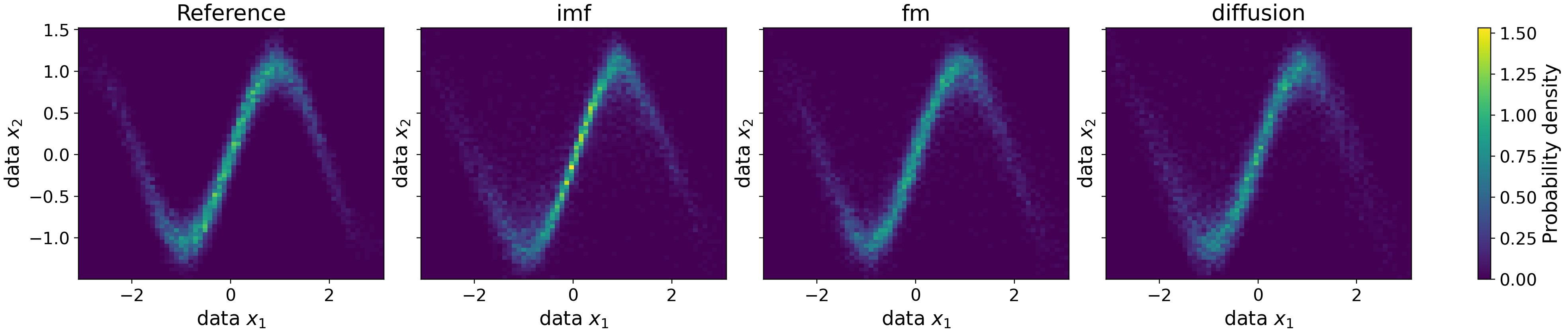}
        \caption{Sine prior.}
        \label{fig:synthetic_sine}
    \end{subfigure}

    \caption{
    Learning the priors by iMF, FM, and diffusion model.
    }
    \label{fig:prior}
\end{figure}

\paragraph{Posterior sampling.}
The two iMF samplers share a six-step transport with uniformly spaced times, while FM+SPT+pCN uses 100 RK4 steps (400 network evaluations per transport). All SPT methods use ten parallel ladders with 24 inverse temperatures $\beta_\ell=(\ell/23)^2$, $\ell=0,\ldots,23$, initialized from $\mathcal{N}(0,I_{32})$. The pCN variants update every replica with pCN; the hybrid replaces the coldest update with split HMC using six integration steps and initial step size $0.05$. During 500 adaptation sweeps, proposal parameters are shared across ladders and updated separately at each temperature using the observed acceptance fraction across the ten ladders. We apply additive updates to the logit pCN noise scale and log HMC step size, proportional to the difference between observed and target acceptance, with respective gains $0.10/\sqrt{1+s/30}$ and $0.055/\sqrt{1+s/30}$ at sweep $s=0,\ldots,499$. We target acceptance rates of $0.30$ for pCN and $0.75$ for split HMC. These are practical choices consistent with the pCN tuning range $0.3$--$0.5$ used by \citet{chen2016accelerated} and the general (leapfrog) HMC range $0.6$-$0.9$ discussed by \citet{betancourt2014optimizing}. The hottest pCN noise scale remains fixed at $0.999$. Each sweep includes odd and even adjacent-swap passes, with their order alternating across sweeps. After adaptation, all proposal parameters are frozen, 500 additional burn-in sweeps are discarded, and 600 consecutive cold states per ladder are retained without thinning, yielding 6,000 samples per method and seed. DPS generates the same number of samples using 1,000 DDPM steps with a linear noise schedule from $10^{-4}$ to $2\times10^{-2}$. Scaling DPS guidance by $\sigma_y^{-2}$ caused numerical instability in our experiments. Following the residual-dependent step-size scheme of \citet[Appendix D.1]{chung2022diffusion} and its official implementation, we use guidance $-0.3\nabla_{x_t}\|G(\hat{x}_0(x_t))-y\|_2$, without explicit noise-variance scaling. Sampling is repeated with seeds $0,1,2,3$; full implementation settings are provided with the released code. Figures~\ref{fig:post_banana} and~\ref{fig:post_sine} compare sample histograms with numerical posterior references. Both iMF- and FM-based SPT recover the main posterior well, while DPS exhibits systematic deviations from the reference distributions.

\paragraph{Evaluation metrics.}
Let $d$ be the dimension, $C$ the number of ladders, and $N$ the number of retained samples per ladder. Writing $\rho_{j,c}(\ell)$ for the lag-$\ell$ autocorrelation of coordinate $j$ in ladder $c$, the mean coordinate ESS is
\[
\mathrm{Coord.\ ESS}
=
\frac{1}{d}\sum_{j=1}^{d}\sum_{c=1}^{C}
\frac{N}{1+2\sum_{\ell=1}^{\infty}\rho_{j,c}(\ell)},
\qquad
\mathrm{Coord.\ ESS/s}
=
\frac{\mathrm{Coord.\ ESS}}{T_{\mathrm{sampling}}}.
\]
For distributions $\mu,\nu$ on $\mathbb{R}^{d}$, the sliced Wasserstein distance is
\[
\mathrm{SW}_2(\mu,\nu)
=
\left(
\int_{\mathbb{S}^{d-1}}
W_2^2\bigl((P_\omega)_\#\mu,(P_\omega)_\#\nu\bigr)
\,d\sigma(\omega)
\right)^{1/2},
\qquad P_\omega(x)=\omega^\top x,
\]
where $\sigma$ is the uniform probability measure on the unit sphere.
For $J$ two-dimensional blocks and a common partition
$\{B_{j,b}\}_{b=1}^{B}$ of each block's first-coordinate marginal, define
$p_{j,b}=\mu\{x:x_{j,1}\in B_{j,b}\}$ and
$q_{j,b}=\nu\{x:x_{j,1}\in B_{j,b}\}$. Then
\[
\mathrm{Bin\ TV}(\mu,\nu)
=
\frac{1}{J}\sum_{j=1}^{J}
\frac12\sum_{b=1}^{B}|p_{j,b}-q_{j,b}|.
\]

\begin{figure}[t]
    \centering
    \begin{subfigure}[t]{0.95\linewidth}
        \centering
        \includegraphics[width=\linewidth]{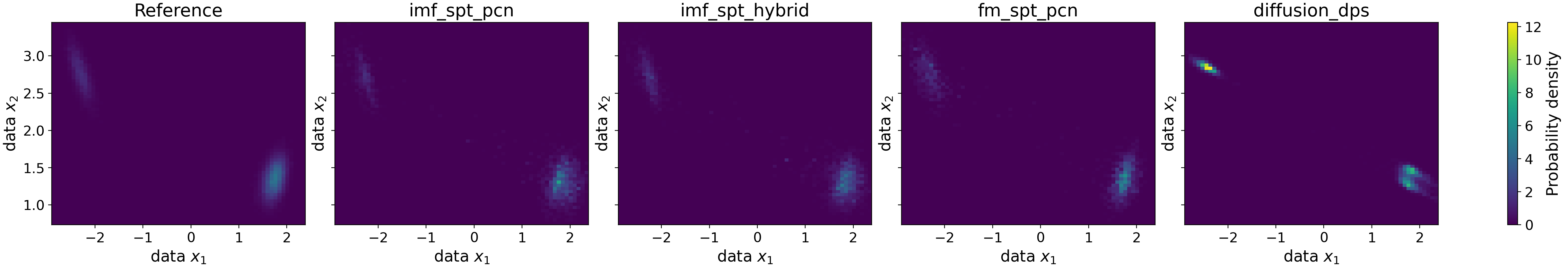}
    \end{subfigure}
    \hfill
    \begin{subfigure}[t]{0.95\linewidth}
        \centering
        \includegraphics[width=\linewidth]{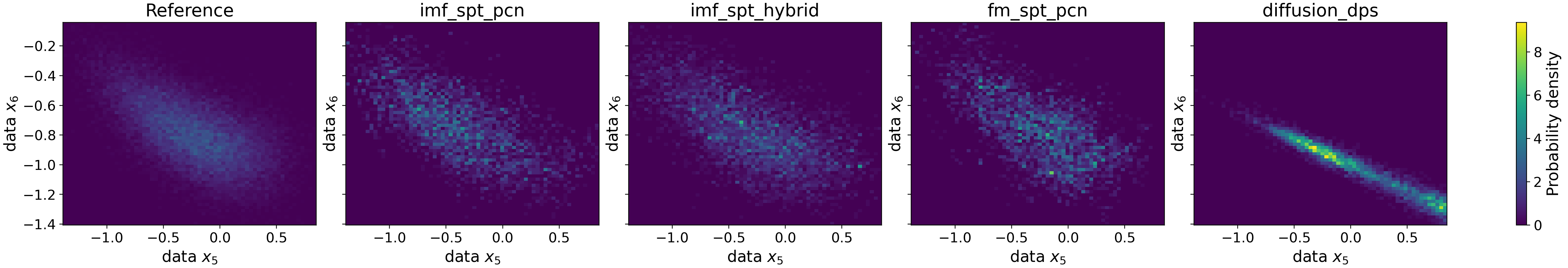}
    \end{subfigure}

    \caption{
    Posterior sample histograms on the banana prior.
    }
    \label{fig:post_banana}
\end{figure}

\begin{figure}[t]
    \centering
    \begin{subfigure}[t]{0.95\linewidth}
        \centering
        \includegraphics[width=\linewidth]{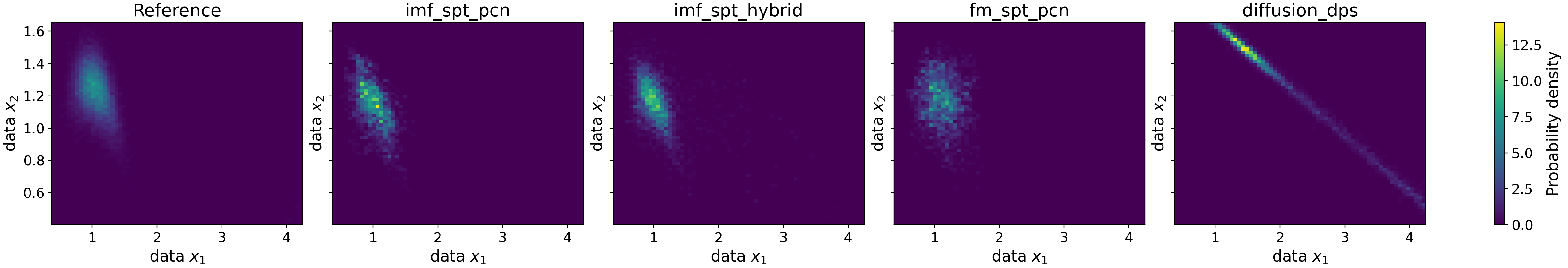}
    \end{subfigure}
    \hfill
    \begin{subfigure}[t]{0.95\linewidth}
        \centering
        \includegraphics[width=\linewidth]{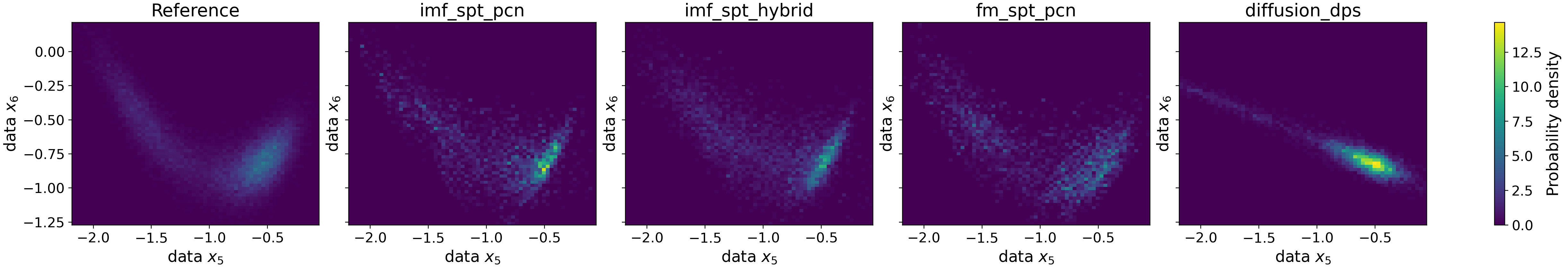}
    \end{subfigure}

    \caption{
    Posterior sample histograms on the sine prior.
    }
    \label{fig:post_sine}
\end{figure}

\subsection{CLIP-guided ImageNet experiments}
\label{app:imagenet}
\subsubsection{Experiment details}

 We use the pretrained iMF-XL/2\footnote{\url{https://github.com/Lyy-iiis/imeanflow/tree/torch}} with one transport step, $\omega=8$, and guidance interval $[0.4,0.65]$, and frozen SiT-XL/2\footnote{\url{https://huggingface.co/sairights/sit-xl-2-256x256-sde-cfg/resolve/5a15dcc/pretrained_models/SiT-XL-2-256x256.safetensors}} with 125 Heun steps (250 velocity evaluations) and classifier-free guidance scale $4$.

For each class-prompt pair, we use eight independent ladders of eight replicas with linear inverse temperatures $\beta_k=k/7$, initialized from $\mathcal N(0,I)$. The iMF methods run 250 adaptation sweeps, 250 burn-in sweeps, and 1,000 retained sweeps per ladder, yielding 8,000 retained cold states. At each adaptation sweep, we average the local proposal acceptance rates over all eight ladders. We then make progressively smaller updates to the pCN proposal scale to target $0.30$ acceptance and, for iMF+SPT+hybrid, to the split-HMC step size at the coldest replica to target $0.75$ acceptance. The resulting proposal parameters are fixed before burn-in. iMF+SPT+hybrid uses six-step split HMC at the coldest replica and pCN at all remaining replicas. SiT+SPT+pCN is run after the two iMF methods under a prompt-specific budget: its sampling budget equals the measured iMF+SPT+hybrid sampling time, with an additional one-hour allowance for loading, preflight, scoring, and export. Its available sampling time is allocated as $1/6$, $1/6$, and $2/3$ to adaptation, burn-in, and retained sampling; only complete sweeps are performed.

\subsubsection{Full results}
\label{fullimagenet}

Table~\ref{tab:clip_guidance_full} reports the complete results, and Figures~\ref{fig:goldenmeadow}-\ref{fig:reward_final_aircraft_carrier_watercolor_tau200_r8_l8_v1} display samples from the first two ladders for each method.
\begingroup
\small
\setlength{\tabcolsep}{4pt}
\begin{longtable}{@{}lccc@{}}
\caption{CLIP reward over retained samples and sampling cost.}
\label{tab:clip_guidance_full}\\
\toprule
Method & Mean CLIP reward $\uparrow$ & Sampling time (h) & Mean sweep time (s) \\
\midrule
\endfirsthead
\multicolumn{4}{c}{\tablename\ \thetable{} (continued)} \\
\toprule
Method & Mean CLIP reward $\uparrow$ & Sampling time (h) & Mean sweep time (s) \\
\midrule
\endhead
\midrule
\multicolumn{4}{r}{Continued on next page} \\
\endfoot
\bottomrule
\endlastfoot
\multicolumn{4}{@{}p{0.94\textwidth}@{}}{\textbf{Golden retriever}: \textit{``A photograph of a golden retriever running through fallen autumn leaves."}} \\*
Best-of-K & 0.2864 & 2.49 & --\\*
SiT+SPT+pCN & 0.3122 & 10.51 & 370.89 \\*
iMF+SPT+pCN & 0.3809 & 2.40 & 5.75 \\*
iMF+SPT+hybrid & \textbf{0.3824} & 10.95 & 26.28 \\
\midrule
\multicolumn{4}{@{}p{0.94\textwidth}@{}}{\textbf{Golden retriever}: \textit{``A beautiful professional photograph of a golden retriever in a sunlit meadow, sharp focus, natural colors."}} \\*
Best-of-K & 0.3073 & 2.48 & --\\*
SiT+SPT+pCN & 0.3368 & 10.79 & 373.28 \\*
iMF+SPT+pCN & 0.3741 & 2.43 & 5.83 \\*
iMF+SPT+hybrid & \textbf{0.3754} & 11.26 & 27.02 \\
\midrule
\multicolumn{4}{@{}p{0.94\textwidth}@{}}{\textbf{Golden retriever}: \textit{``A colorful painting of a golden retriever standing in falling snow."}} \\*
Best-of-K & 0.2885 & 2.45 & -- \\*
SiT+SPT+pCN & 0.3355 & 10.96 & 372.17 \\*
iMF+SPT+pCN & 0.3945 & 2.50 & 6.01 \\*
iMF+SPT+hybrid & \textbf{0.4011} & 11.50 & 27.58 \\
\midrule
\multicolumn{4}{@{}p{0.94\textwidth}@{}}{\textbf{Golden retriever}: \textit{``A photograph of a golden retriever swimming in a clear blue lake."}} \\*
Best-of-K & 0.3064 & 2.45 & -- \\*
SiT+SPT+pCN & 0.3675 & 10.69 & 369.97 \\*
iMF+SPT+pCN & \textbf{0.3847} & 2.45 & 5.88 \\*
iMF+SPT+hybrid & 0.3845 & 11.16 & 26.69 \\
\midrule
\multicolumn{4}{@{}p{0.94\textwidth}@{}}{\textbf{Bald eagle}: \textit{``A photograph of a bald eagle perched on a branch surrounded by red and golden autumn leaves."}} \\*
Best-of-K & 0.3086 & 2.50 & -- \\*
SiT+SPT+pCN & 0.3156 & 10.73 & 371.46 \\*
iMF+SPT+pCN & 0.3743 & 2.46 & 5.90 \\*
iMF+SPT+hybrid & \textbf{0.3781} & 11.20 & 26.87 \\
\midrule
\multicolumn{4}{@{}p{0.94\textwidth}@{}}{\textbf{Bald eagle}: \textit{``A photograph of a bald eagle catching a fish from a lake, with splashing water."}} \\*
Best-of-K & 0.3247 & 2.59 & -- \\*
SiT+SPT+pCN & 0.3407 & 10.78 & 376.60 \\*
iMF+SPT+pCN & \textbf{0.3625} & 2.46 & 5.90 \\*
iMF+SPT+hybrid & 0.3596 & 11.17 & 26.81 \\
\midrule
\multicolumn{4}{@{}p{0.94\textwidth}@{}}{\textbf{Bald eagle}: \textit{``A photograph of a bald eagle soaring above snow-capped mountains, wings spread wide."}} \\*
Best-of-K & 0.3278 & 2.49 & -- \\*
SiT+SPT+pCN & 0.3481 & 10.70 & 370.27 \\*
iMF+SPT+pCN & \textbf{0.3936} & 2.44 & 5.85 \\*
iMF+SPT+hybrid & 0.3914 & 11.15 & 26.76 \\
\midrule
\multicolumn{4}{@{}p{0.94\textwidth}@{}}{\textbf{Bald eagle}: \textit{``A colorful painting of a bald eagle perched on a pine branch in falling snow."}} \\*
Best-of-K & 0.2984 & 2.58 & -- \\*
SiT+SPT+pCN & 0.3139 & 10.74 & 371.58 \\*
iMF+SPT+pCN & \textbf{0.3727} & 2.41 & 5.77 \\*
iMF+SPT+hybrid & 0.3713 & 11.16 & 26.78 \\
\midrule
\multicolumn{4}{@{}p{0.94\textwidth}@{}}{\textbf{Mink}: \textit{``A photograph of a mink beside a clear forest stream, surrounded by green moss."}} \\*
Best-of-K & 0.3629 & 2.75 & -- \\*
SiT+SPT+pCN & 0.3946 & 10.72 & 371.04 \\*
iMF+SPT+pCN & \textbf{0.4190} & 2.45 & 5.89 \\*
iMF+SPT+hybrid & 0.4112 & 11.12 & 26.70 \\
\midrule
\multicolumn{4}{@{}p{0.94\textwidth}@{}}{\textbf{Mink}: \textit{``A colorful painting of a mink standing in falling snow."}} \\*
Best-of-K & 0.3160 & 2.49 & -- \\*
SiT+SPT+pCN & 0.3487 & 10.47 & 369.64 \\*
iMF+SPT+pCN & \textbf{0.4186} & 2.36 & 5.66 \\*
iMF+SPT+hybrid & 0.4162 & 10.99 & 26.37 \\
\midrule
\multicolumn{4}{@{}p{0.94\textwidth}@{}}{\textbf{Castle}: \textit{``A colorful painting of a medieval castle covered in snow."}} \\*
Best-of-K & 0.2489 & 2.48 & -- \\*
SiT+SPT+pCN & 0.2797 & 10.53 & 371.82 \\*
iMF+SPT+pCN & \textbf{0.3618} & 2.42 & 5.79 \\*
iMF+SPT+hybrid & 0.3569 & 10.92 & 26.21 \\
\midrule
\multicolumn{4}{@{}p{0.94\textwidth}@{}}{\textbf{Castle}: \textit{``A photograph of a medieval castle on a hill at sunrise, surrounded by green countryside."}} \\*
Best-of-K & 0.3023 & 2.46 & -- \\*
SiT+SPT+pCN & 0.3476 & 10.96 & 372.13 \\*
iMF+SPT+pCN & 0.3820 & 2.52 & 6.05 \\*
iMF+SPT+hybrid & \textbf{0.3849} & 11.40 & 27.35 \\
\midrule
\multicolumn{4}{@{}p{0.94\textwidth}@{}}{\textbf{Aircraft carrier}: \textit{``A photograph of an aircraft carrier sailing across the ocean at sunset."}} \\*
Best-of-K & 0.2879 & 2.49 & -- \\*
SiT+SPT+pCN & 0.3286 & 10.38 & 373.72 \\*
iMF+SPT+pCN & \textbf{0.3667} & 2.42 & 5.80 \\*
iMF+SPT+hybrid & 0.3659 & 10.90 & 26.16 \\
\midrule
\multicolumn{4}{@{}p{0.94\textwidth}@{}}{\textbf{Aircraft carrier}: \textit{``A watercolor painting of an aircraft carrier on a calm blue sea."}} \\*
Best-of-K & 0.2956 & 2.52 & -- \\*
SiT+SPT+pCN & 0.3055 & 10.63 & 367.95 \\*
iMF+SPT+pCN & 0.3736 & 2.49 & 5.97 \\*
iMF+SPT+hybrid & \textbf{0.3774} & 11.07 & 26.55 \\
\end{longtable}
\endgroup

\begin{figure}[!htb]
    \centering
    % Left: golden retriever
    \begin{minipage}[t]{0.48\linewidth}
        \vspace{0pt}
        \captionsetup{type=figure}
        \centering

        \begin{subfigure}{\linewidth}
            \centering
            \includegraphics[width=\linewidth]{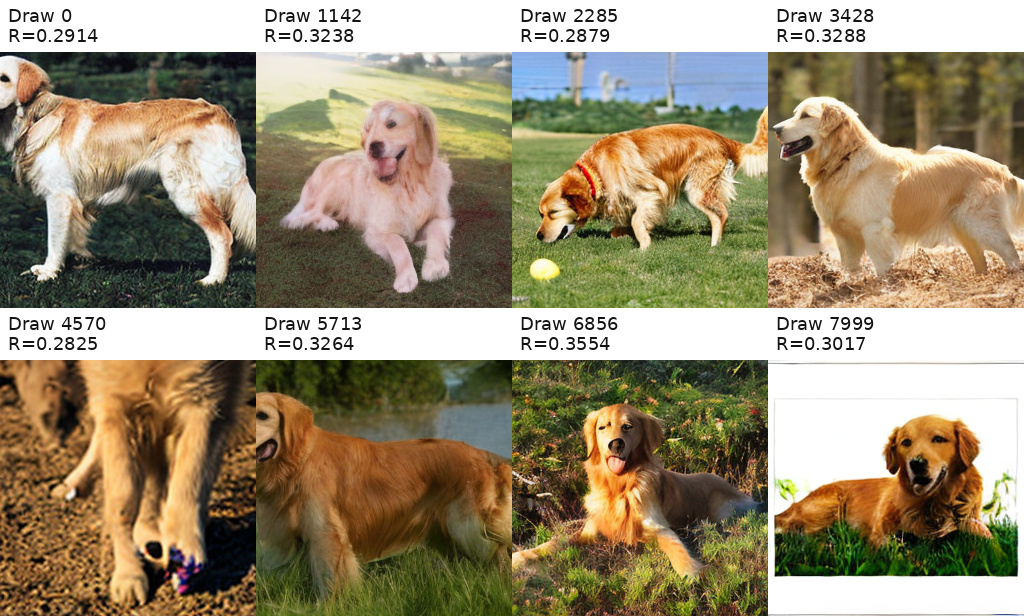}
            \caption{best-of-K.}
            \label{fig:reward_final_golden_retriever_meadow_tau200_r8_l8_v1:imf_best_of_12}
        \end{subfigure}
        \par\medskip

        \begin{subfigure}{\linewidth}
            \centering
            \includegraphics[width=\linewidth]{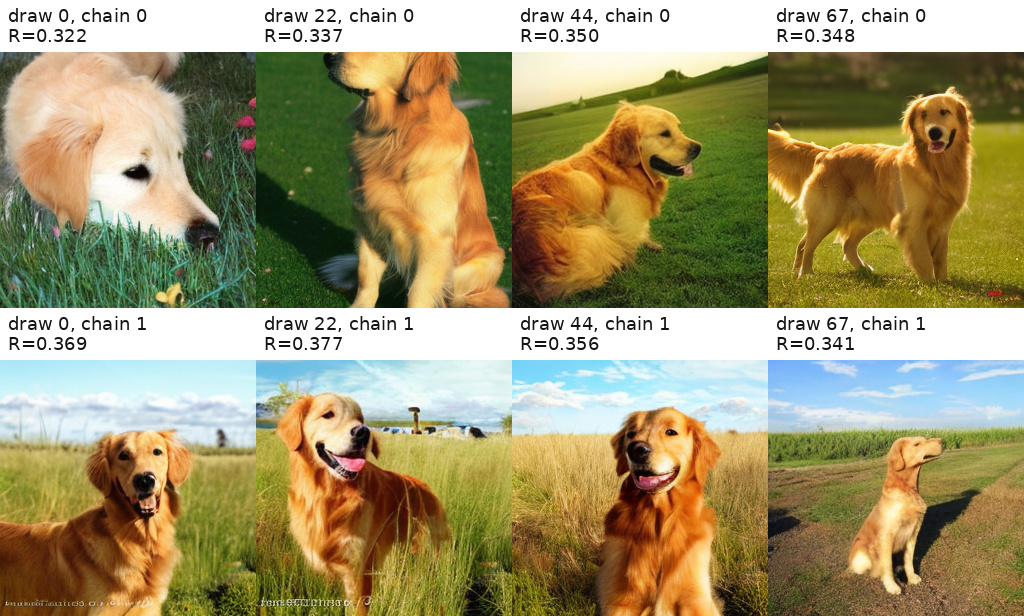}
            \caption{SiT + SPT + pCN.}
            \label{fig:reward_final_golden_retriever_meadow_tau200_r8_l8_v1:sit_spt_pcn}
        \end{subfigure}
        \par\medskip

        \begin{subfigure}{\linewidth}
            \centering
            \includegraphics[width=\linewidth]{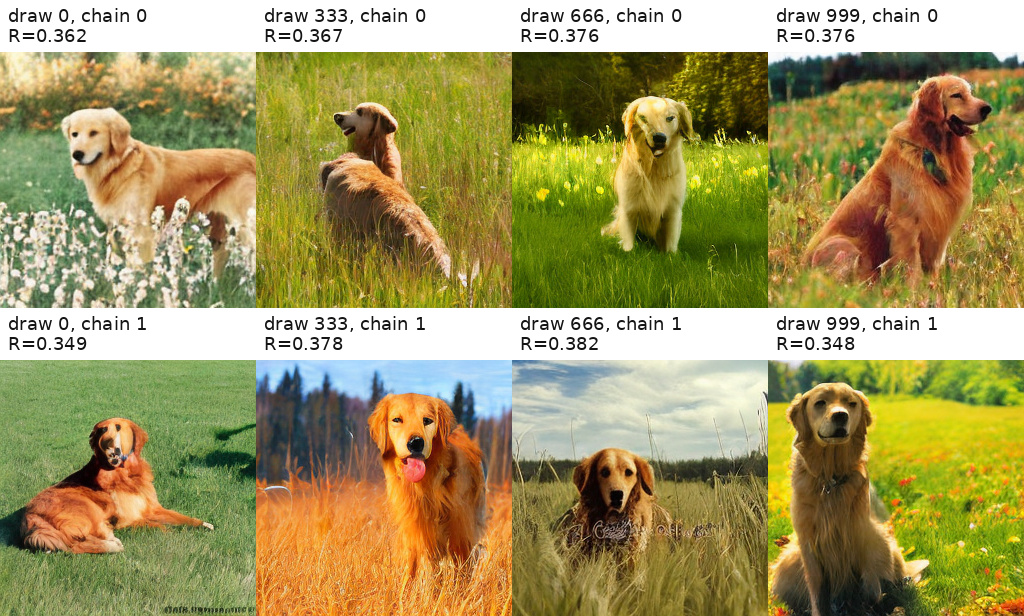}
            \caption{iMF + SPT + pCN.}
            \label{fig:reward_final_golden_retriever_meadow_tau200_r8_l8_v1:imf_spt_pcn}
        \end{subfigure}
        \par\medskip

        \begin{subfigure}{\linewidth}
            \centering
            \includegraphics[width=\linewidth]{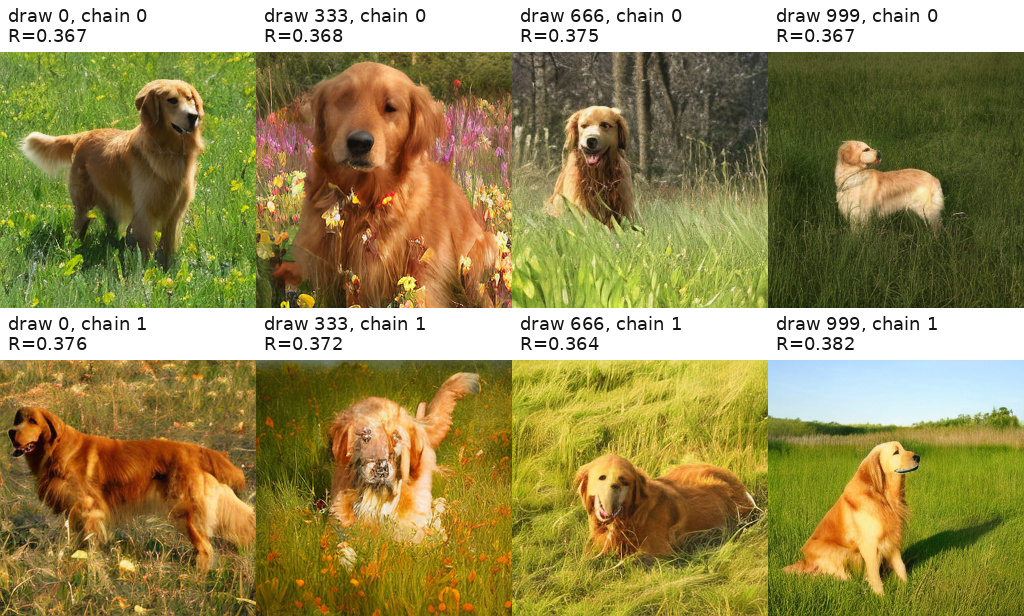}
            \caption{iMF + SPT + hybrid.}
            \label{fig:reward_final_golden_retriever_meadow_tau200_r8_l8_v1:imf_spt_hybrid}
        \end{subfigure}

        \caption{Class: golden retriever, prompt: \textit{``A beautiful professional photograph of a golden retriever in a sunlit meadow, sharp focus, natural colors."}}
        \label{fig:goldenmeadow}
    \end{minipage}%
    \hfill
    % Right: bald eagle
    \begin{minipage}[t]{0.48\linewidth}
        \vspace{0pt}
        \captionsetup{type=figure}
        \centering

        \begin{subfigure}{\linewidth}
            \centering
            \includegraphics[width=\linewidth]{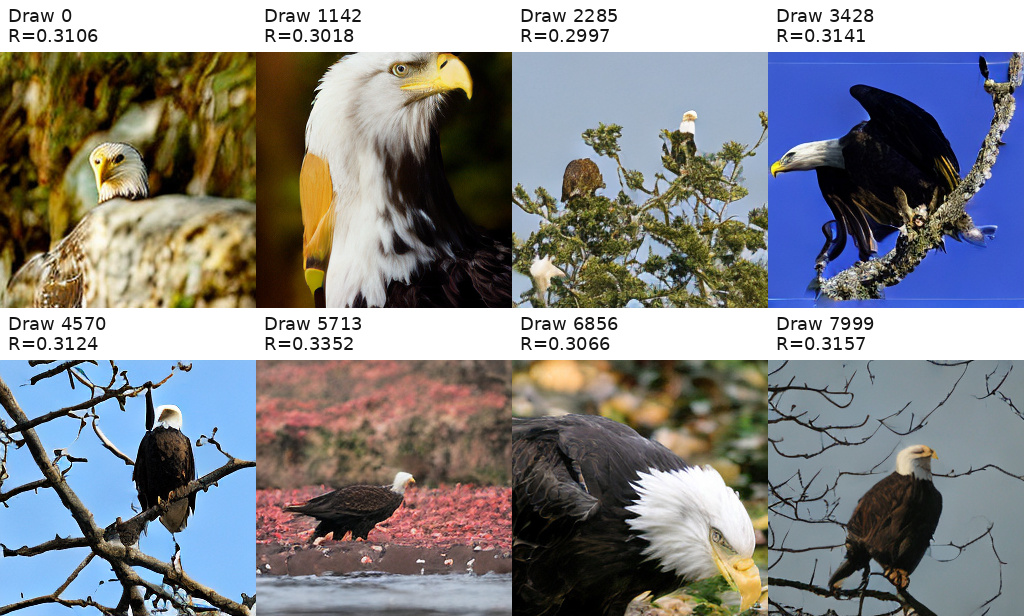}
            \caption{best-of-K.}
            \label{fig:reward_final_bald_eagle_autumn_perched_tau200_r8_l8_v1:imf_best_of_12}
        \end{subfigure}
        \par\medskip

        \begin{subfigure}{\linewidth}
            \centering
            \includegraphics[width=\linewidth]{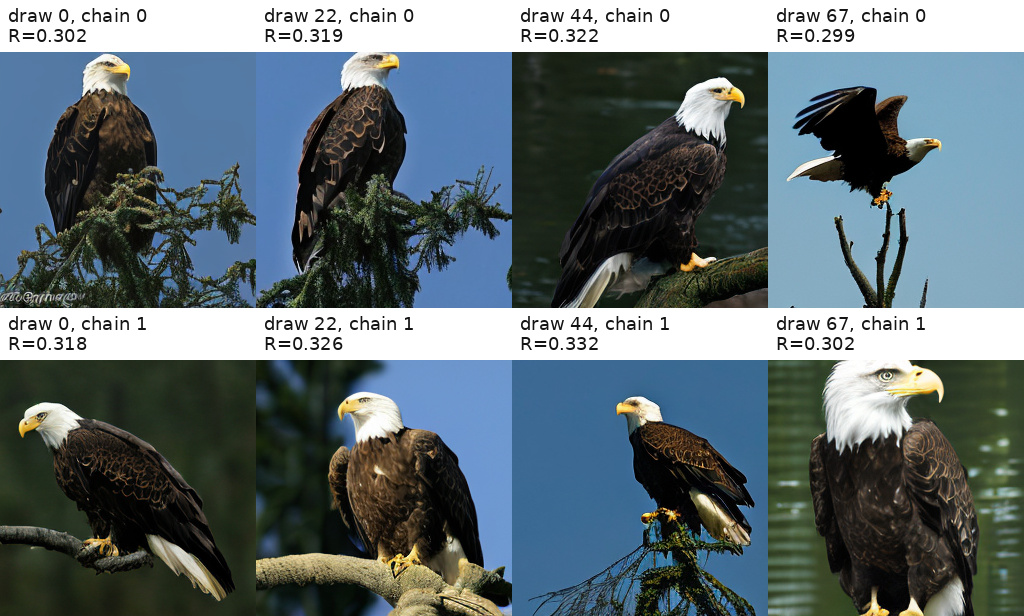}
            \caption{SiT + SPT + pCN.}
            \label{fig:reward_final_bald_eagle_autumn_perched_tau200_r8_l8_v1:sit_spt_pcn}
        \end{subfigure}
        \par\medskip

        \begin{subfigure}{\linewidth}
            \centering
            \includegraphics[width=\linewidth]{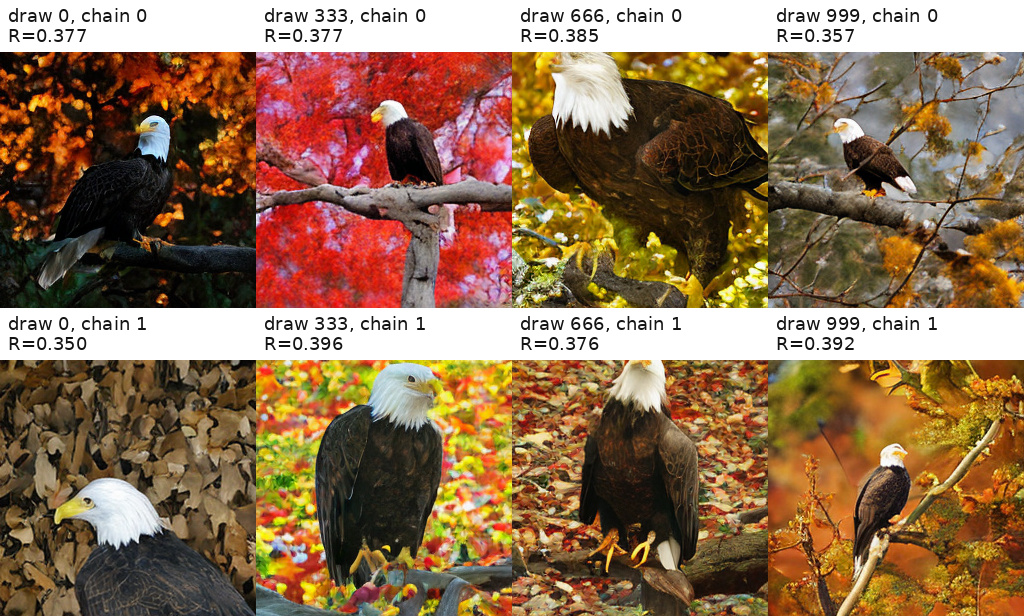}
            \caption{iMF + SPT + pCN.}
            \label{fig:reward_final_bald_eagle_autumn_perched_tau200_r8_l8_v1:imf_spt_pcn}
        \end{subfigure}
        \par\medskip

        \begin{subfigure}{\linewidth}
            \centering
            \includegraphics[width=\linewidth]{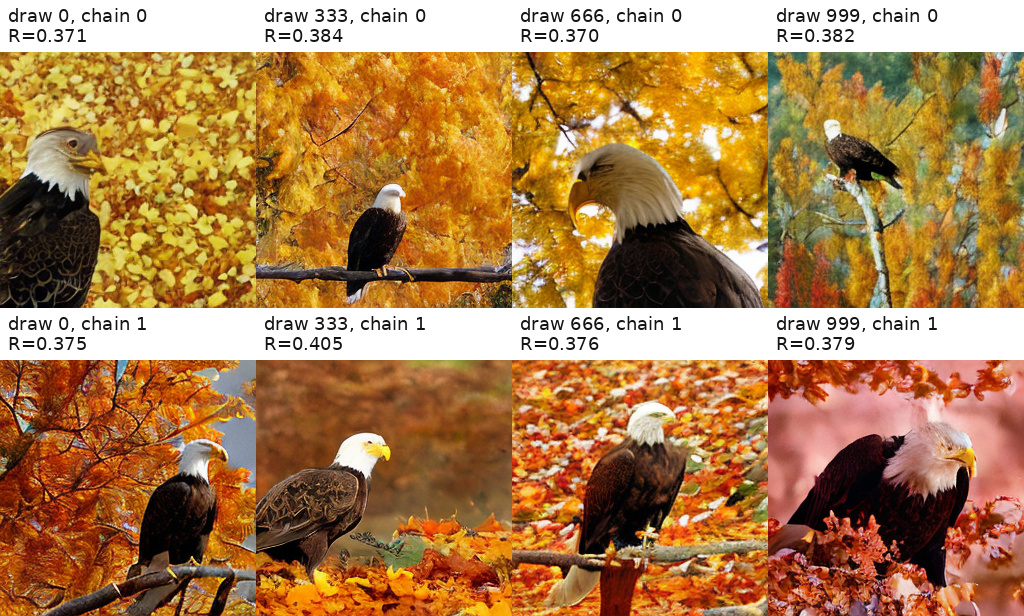}
            \caption{iMF + SPT + hybrid.}
            \label{fig:reward_final_bald_eagle_autumn_perched_tau200_r8_l8_v1:imf_spt_hybrid}
        \end{subfigure}

        \caption{Class: bald eagle, prompt: \textit{``A photograph of a bald eagle perched on a branch surrounded by red and golden autumn leaves."}}
        \label{fig:eagleautum}
    \end{minipage}
\end{figure}

\begin{figure}[!htb]
    \centering
    % Left: golden retriever snow painting
    \begin{minipage}[t]{0.48\linewidth}
        \vspace{0pt}
        \captionsetup{type=figure}
        \centering

        \begin{subfigure}{\linewidth}
            \centering
            \includegraphics[width=\linewidth]{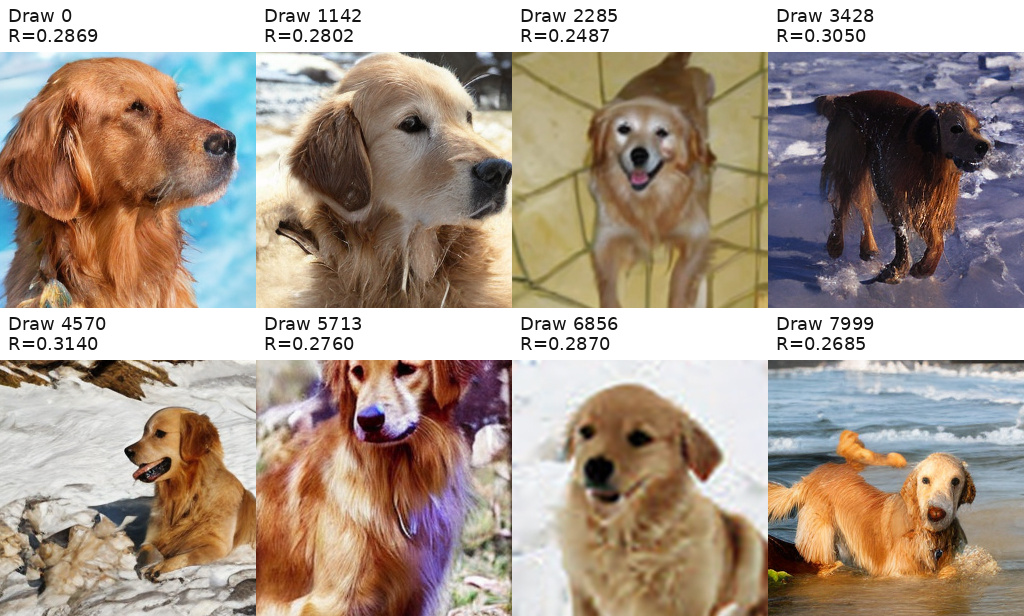}
            \caption{best-of-K.}
            \label{fig:reward_final_golden_retriever_snow_painting_tau200_r8_l8_v1:imf_best_of_12}
        \end{subfigure}
        \par\medskip

        \begin{subfigure}{\linewidth}
            \centering
            \includegraphics[width=\linewidth]{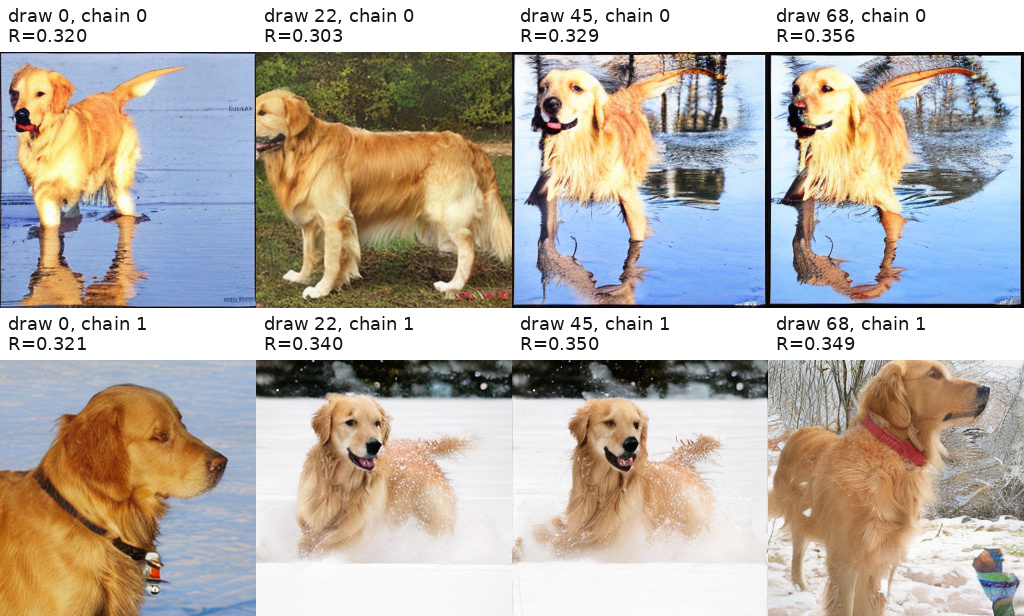}
            \caption{SiT + SPT + pCN.}
            \label{fig:reward_final_golden_retriever_snow_painting_tau200_r8_l8_v1:sit_spt_pcn}
        \end{subfigure}
        \par\medskip

        \begin{subfigure}{\linewidth}
            \centering
            \includegraphics[width=\linewidth]{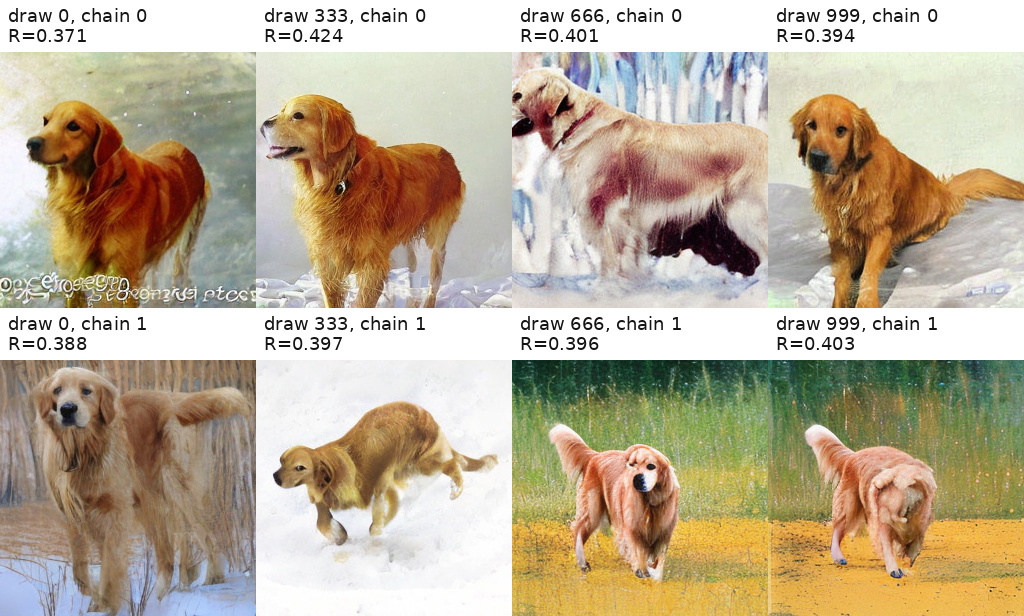}
            \caption{iMF + SPT + pCN.}
            \label{fig:reward_final_golden_retriever_snow_painting_tau200_r8_l8_v1:imf_spt_pcn}
        \end{subfigure}
        \par\medskip

        \begin{subfigure}{\linewidth}
            \centering
            \includegraphics[width=\linewidth]{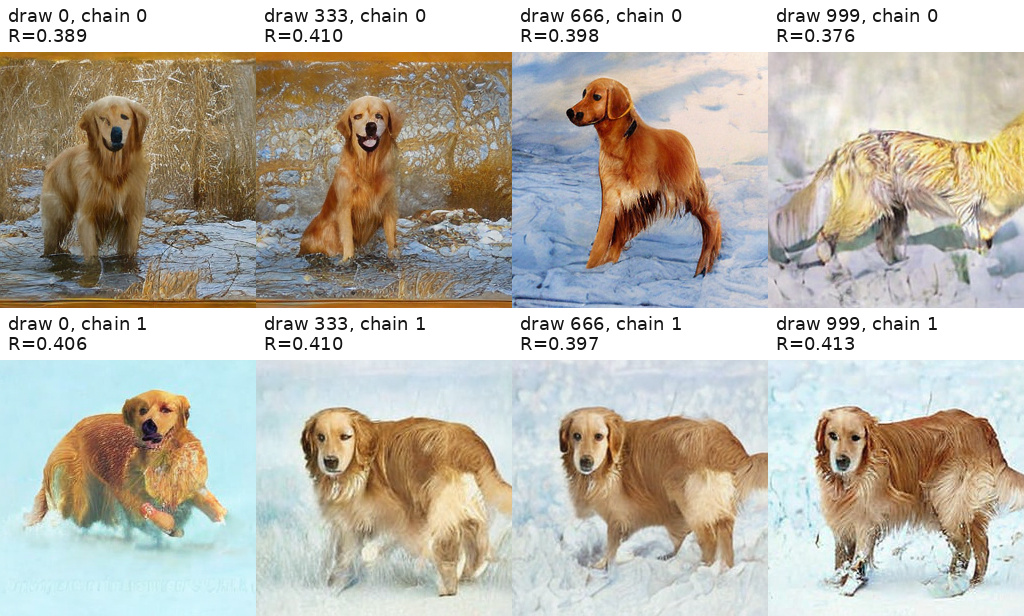}
            \caption{iMF + SPT + hybrid.}
            \label{fig:reward_final_golden_retriever_snow_painting_tau200_r8_l8_v1:imf_spt_hybrid}
        \end{subfigure}

        \caption{Class: golden retriever, prompt: \textit{``A colorful painting of a golden retriever standing in falling snow.''}}
        \label{fig:reward_final_golden_retriever_snow_painting_tau200_r8_l8_v1}
    \end{minipage}%
    \hfill
    % Right: bald eagle snow painting
    \begin{minipage}[t]{0.48\linewidth}
        \vspace{0pt}
        \captionsetup{type=figure}
        \centering

        \begin{subfigure}{\linewidth}
            \centering
            \includegraphics[width=\linewidth]{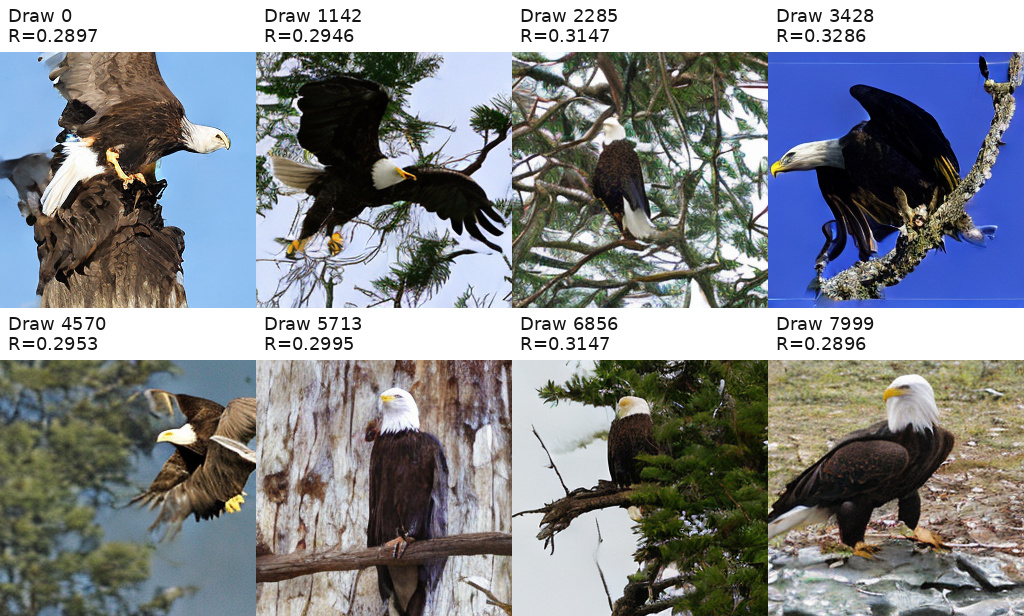}
            \caption{best-of-K.}
            \label{fig:reward_final_bald_eagle_snow_painting_tau200_r8_l8_v1:imf_best_of_12}
        \end{subfigure}
        \par\medskip

        \begin{subfigure}{\linewidth}
            \centering
            \includegraphics[width=\linewidth]{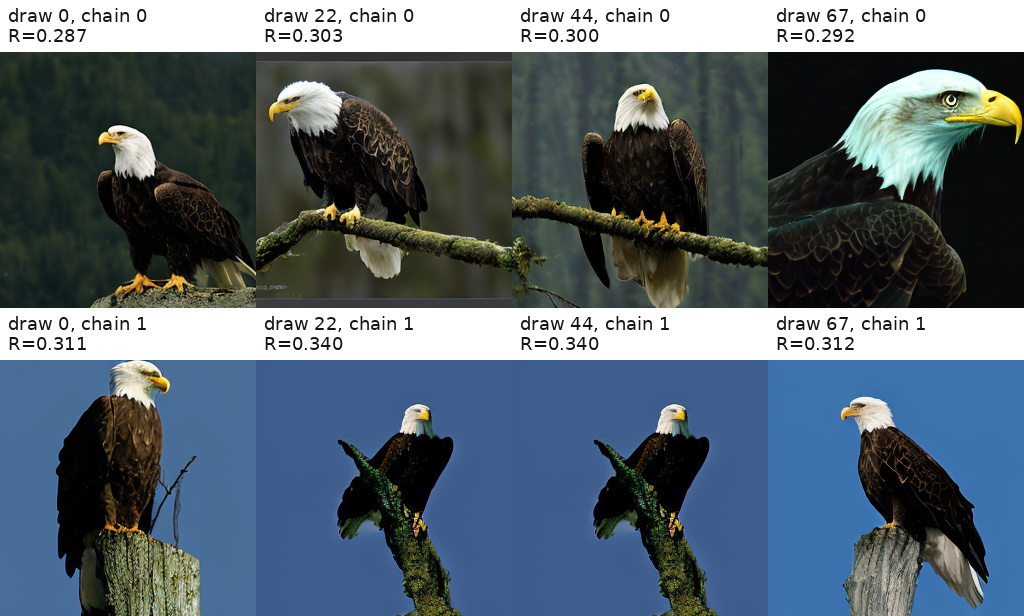}
            \caption{SiT + SPT + pCN.}
            \label{fig:reward_final_bald_eagle_snow_painting_tau200_r8_l8_v1:sit_spt_pcn}
        \end{subfigure}
        \par\medskip

        \begin{subfigure}{\linewidth}
            \centering
            \includegraphics[width=\linewidth]{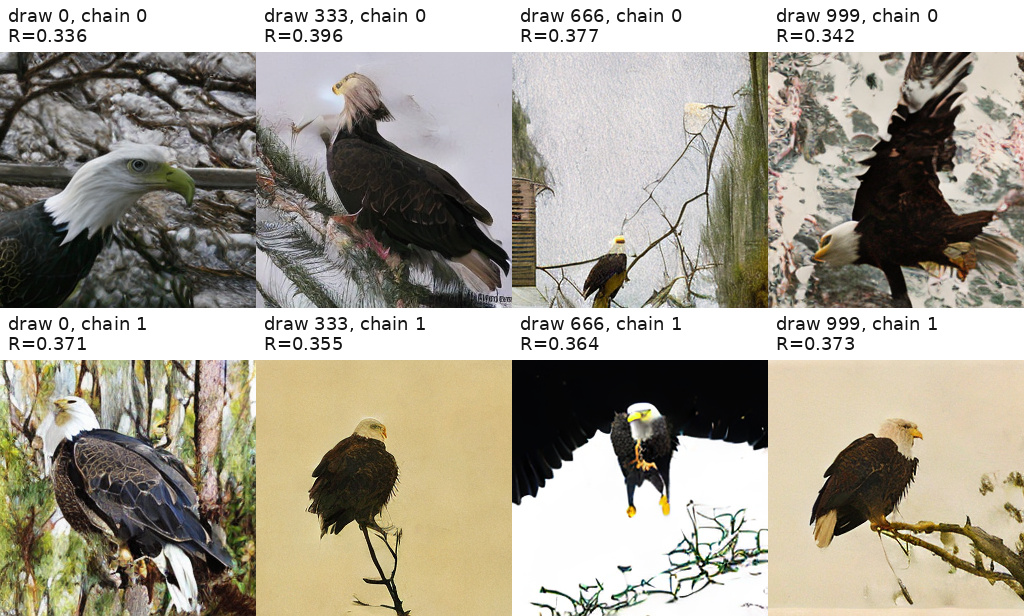}
            \caption{iMF + SPT + pCN.}
            \label{fig:reward_final_bald_eagle_snow_painting_tau200_r8_l8_v1:imf_spt_pcn}
        \end{subfigure}
        \par\medskip

        \begin{subfigure}{\linewidth}
            \centering
            \includegraphics[width=\linewidth]{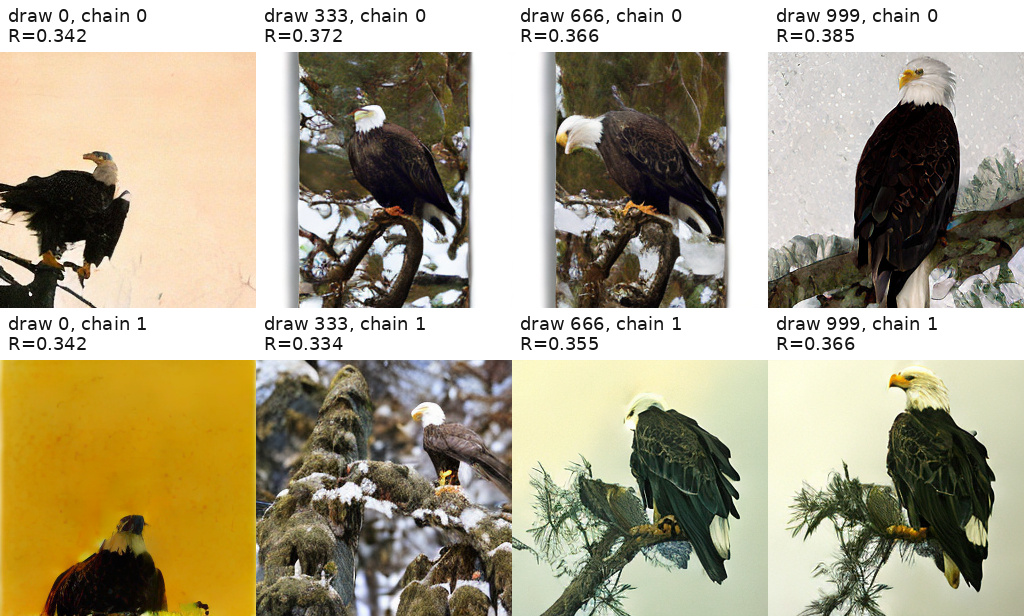}
            \caption{iMF + SPT + hybrid.}
            \label{fig:reward_final_bald_eagle_snow_painting_tau200_r8_l8_v1:imf_spt_hybrid}
        \end{subfigure}

        \caption{Class: bald eagle, prompt: \textit{``A colorful painting of a bald eagle perched on a pine branch in falling snow.''}}
        \label{fig:reward_final_bald_eagle_snow_painting_tau200_r8_l8_v1}
    \end{minipage}
\end{figure}

\begin{figure}[!htb]
    \centering
    % Left: golden retriever swimming
    \begin{minipage}[t]{0.48\linewidth}
        \vspace{0pt}
        \captionsetup{type=figure}
        \centering

        \begin{subfigure}{\linewidth}
            \centering
            \includegraphics[width=\linewidth]{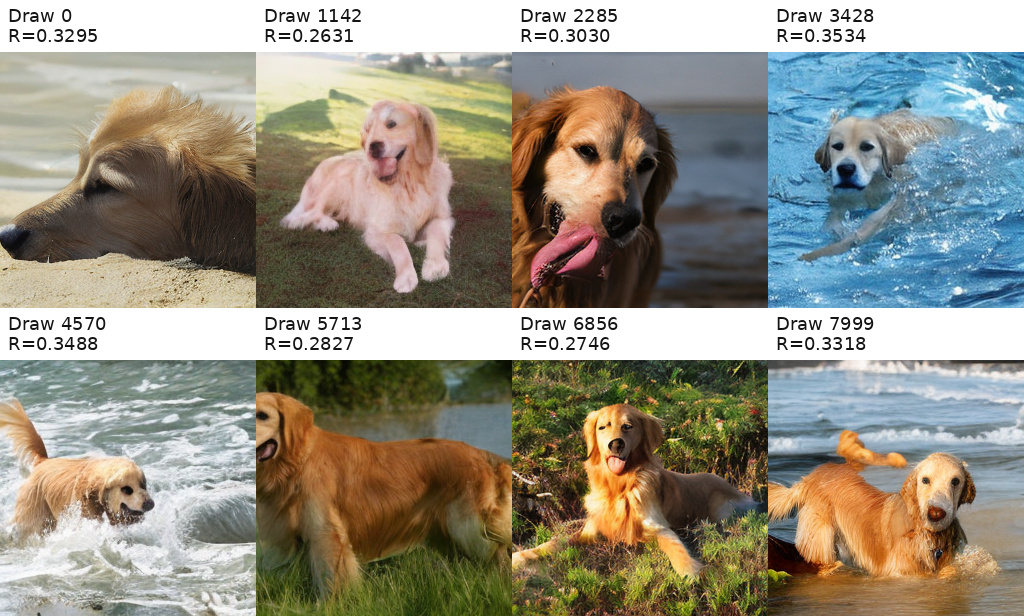}
            \caption{best-of-K.}
            \label{fig:reward_final_golden_retriever_swimming_tau200_r8_l8_v1:imf_best_of_12}
        \end{subfigure}
        \par\medskip

        \begin{subfigure}{\linewidth}
            \centering
            \includegraphics[width=\linewidth]{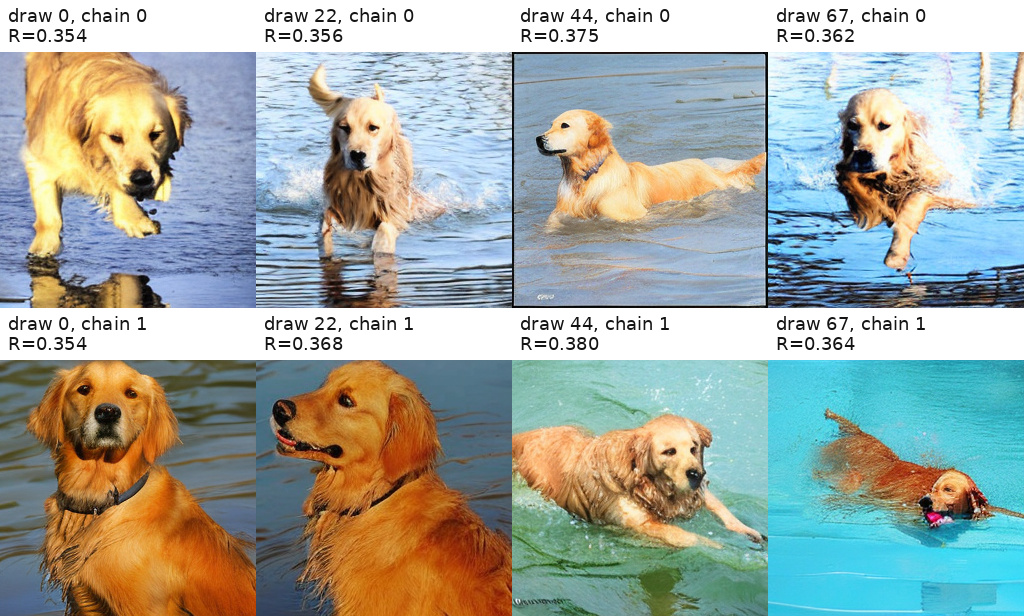}
            \caption{SiT + SPT + pCN.}
            \label{fig:reward_final_golden_retriever_swimming_tau200_r8_l8_v1:sit_spt_pcn}
        \end{subfigure}
        \par\medskip

        \begin{subfigure}{\linewidth}
            \centering
            \includegraphics[width=\linewidth]{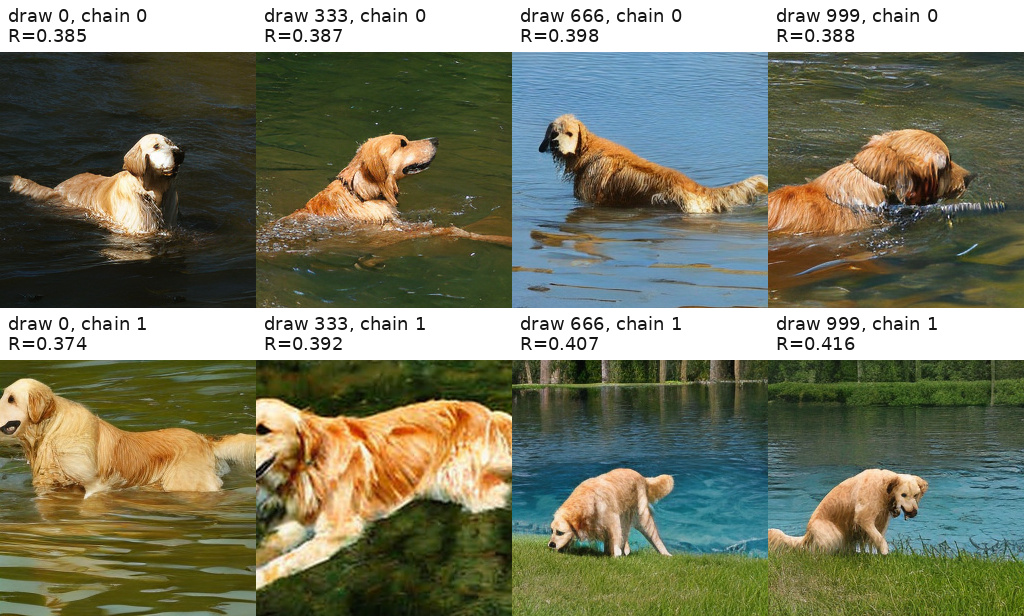}
            \caption{iMF + SPT + pCN.}
            \label{fig:reward_final_golden_retriever_swimming_tau200_r8_l8_v1:imf_spt_pcn}
        \end{subfigure}
        \par\medskip

        \begin{subfigure}{\linewidth}
            \centering
            \includegraphics[width=\linewidth]{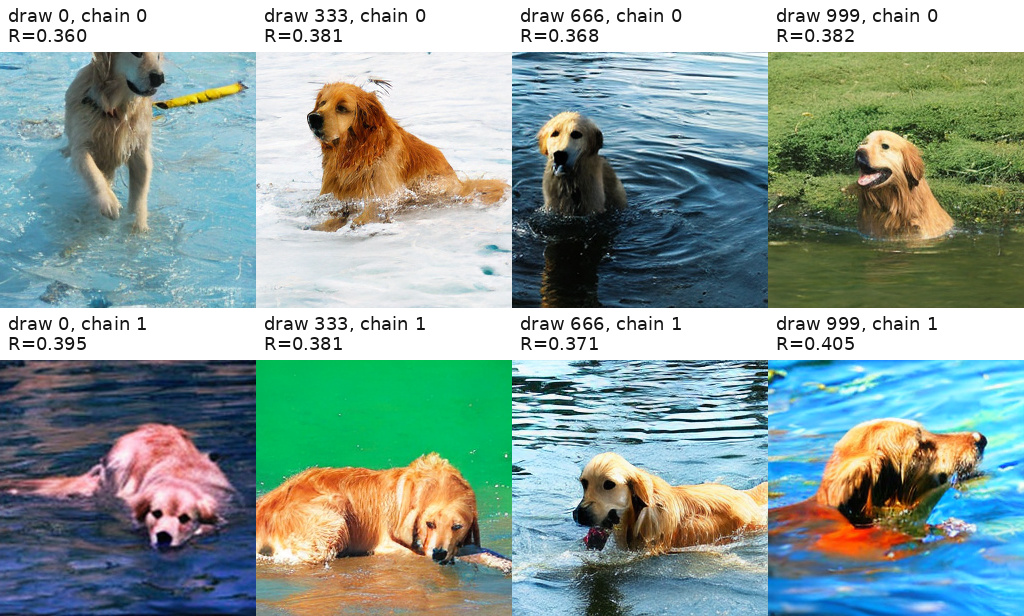}
            \caption{iMF + SPT + hybrid.}
            \label{fig:reward_final_golden_retriever_swimming_tau200_r8_l8_v1:imf_spt_hybrid}
        \end{subfigure}

        \caption{Class: golden retriever, prompt: \textit{``A photograph of a golden retriever swimming in a clear blue lake.''}}
        \label{fig:reward_final_golden_retriever_swimming_tau200_r8_l8_v1}
    \end{minipage}%
    \hfill
    % Right: bald eagle fishing
    \begin{minipage}[t]{0.48\linewidth}
        \vspace{0pt}
        \captionsetup{type=figure}
        \centering

        \begin{subfigure}{\linewidth}
            \centering
            \includegraphics[width=\linewidth]{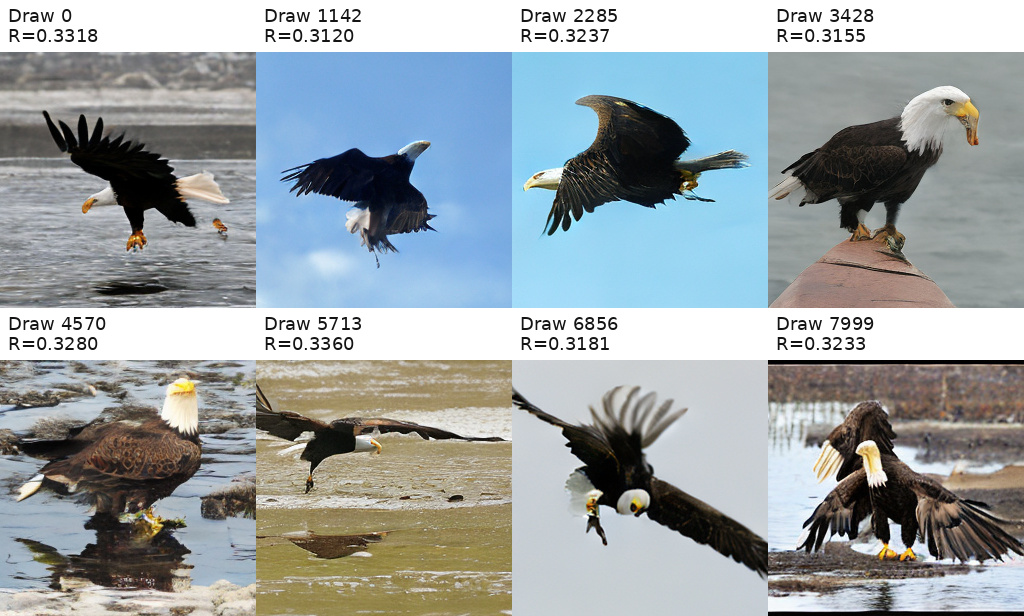}
            \caption{best-of-K.}
            \label{fig:reward_final_bald_eagle_fishing_tau200_r8_l8_v1:imf_best_of_12}
        \end{subfigure}
        \par\medskip

        \begin{subfigure}{\linewidth}
            \centering
            \includegraphics[width=\linewidth]{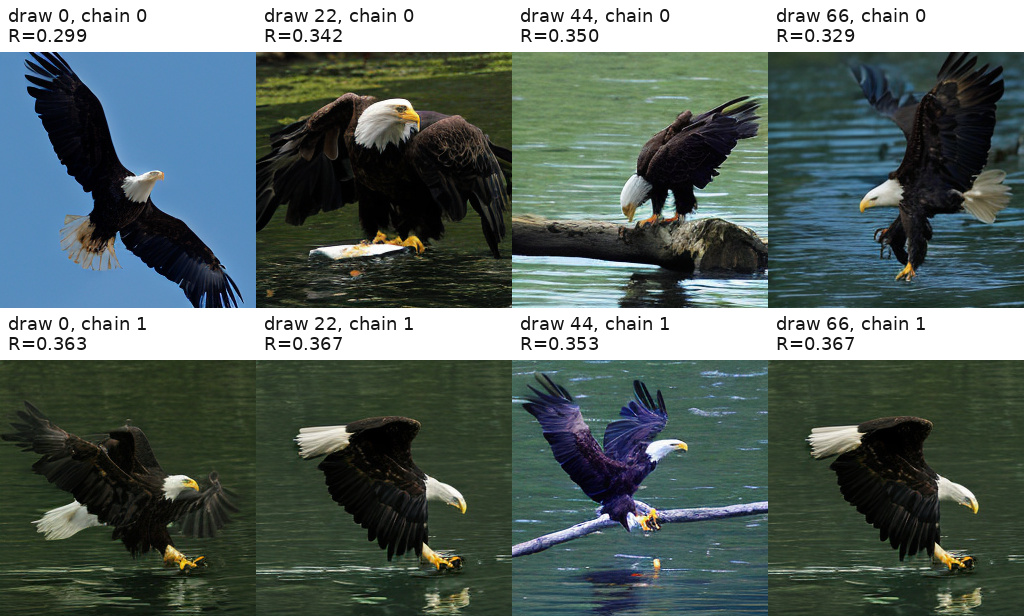}
            \caption{SiT + SPT + pCN.}
            \label{fig:reward_final_bald_eagle_fishing_tau200_r8_l8_v1:sit_spt_pcn}
        \end{subfigure}
        \par\medskip

        \begin{subfigure}{\linewidth}
            \centering
            \includegraphics[width=\linewidth]{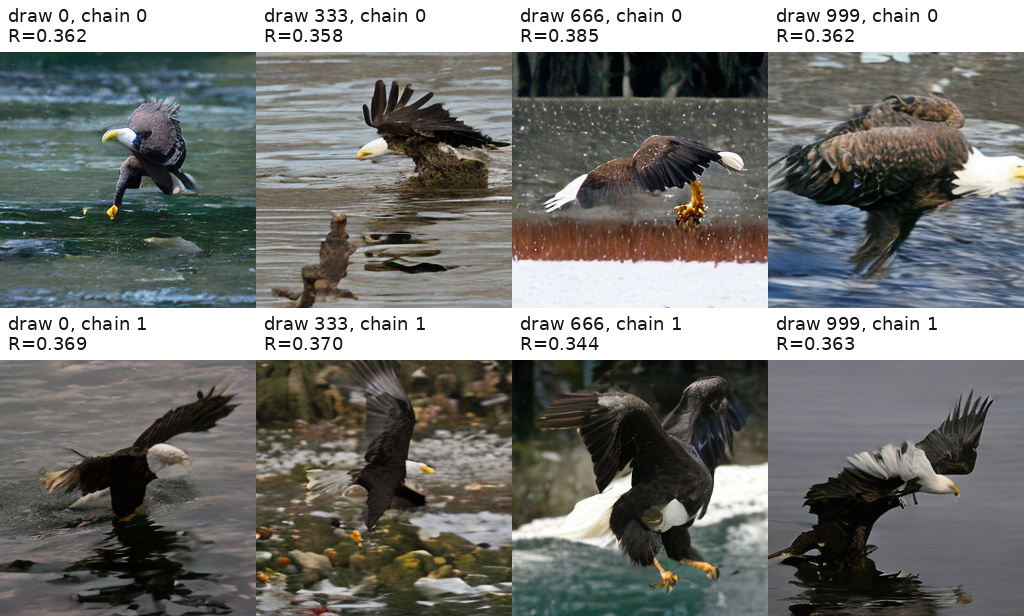}
            \caption{iMF + SPT + pCN.}
            \label{fig:reward_final_bald_eagle_fishing_tau200_r8_l8_v1:imf_spt_pcn}
        \end{subfigure}
        \par\medskip

        \begin{subfigure}{\linewidth}
            \centering
            \includegraphics[width=\linewidth]{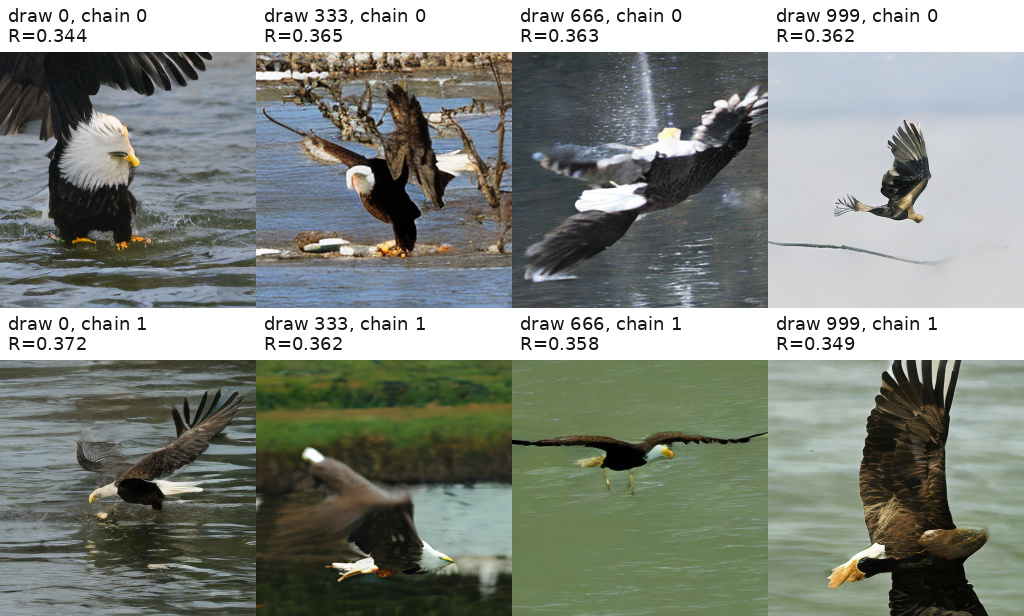}
            \caption{iMF + SPT + hybrid.}
            \label{fig:reward_final_bald_eagle_fishing_tau200_r8_l8_v1:imf_spt_hybrid}
        \end{subfigure}

        \caption{Class: bald eagle, prompt: \textit{``A photograph of a bald eagle catching a fish from a lake, with splashing water.''}}
        \label{fig:reward_final_bald_eagle_fishing_tau200_r8_l8_v1}
    \end{minipage}
\end{figure}

\begin{figure}[!htb]
    \centering
    % Left: mink forest stream
    \begin{minipage}[t]{0.48\linewidth}
        \vspace{0pt}
        \captionsetup{type=figure}
        \centering

        \begin{subfigure}{\linewidth}
            \centering
            \includegraphics[width=\linewidth]{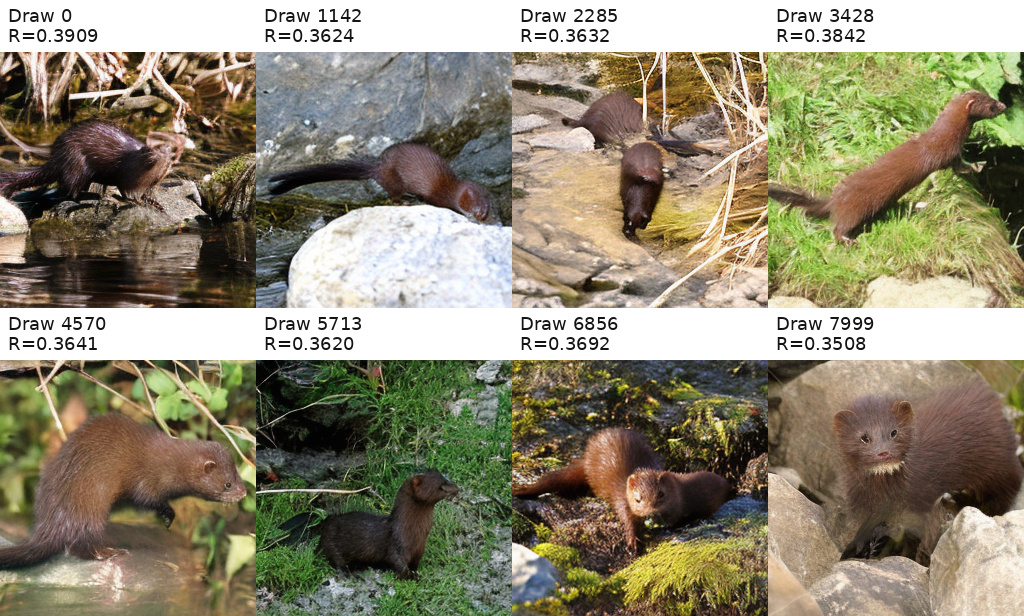}
            \caption{best-of-K.}
            \label{fig:reward_final_mink_forest_stream_tau200_r8_l8_v1:imf_best_of_12}
        \end{subfigure}
        \par\medskip

        \begin{subfigure}{\linewidth}
            \centering
            \includegraphics[width=\linewidth]{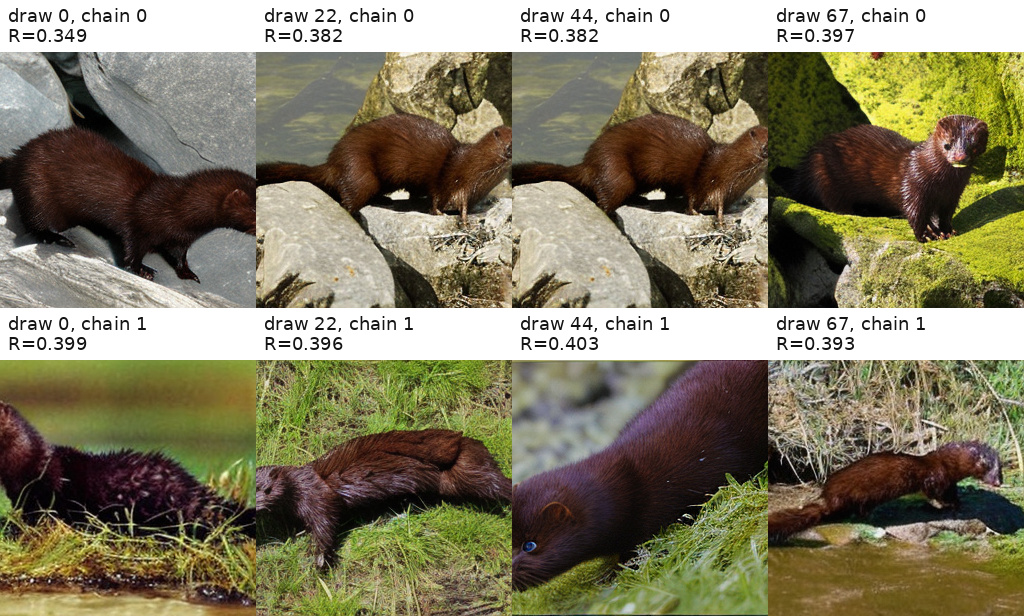}
            \caption{SiT + SPT + pCN.}
            \label{fig:reward_final_mink_forest_stream_tau200_r8_l8_v1:sit_spt_pcn}
        \end{subfigure}
        \par\medskip

        \begin{subfigure}{\linewidth}
            \centering
            \includegraphics[width=\linewidth]{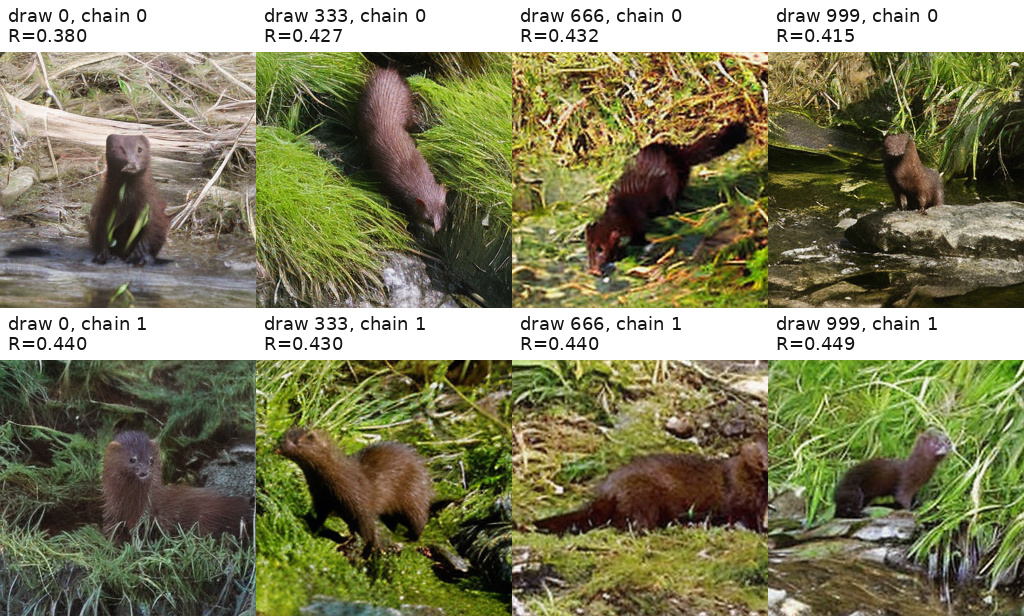}
            \caption{iMF + SPT + pCN.}
            \label{fig:reward_final_mink_forest_stream_tau200_r8_l8_v1:imf_spt_pcn}
        \end{subfigure}
        \par\medskip

        \begin{subfigure}{\linewidth}
            \centering
            \includegraphics[width=\linewidth]{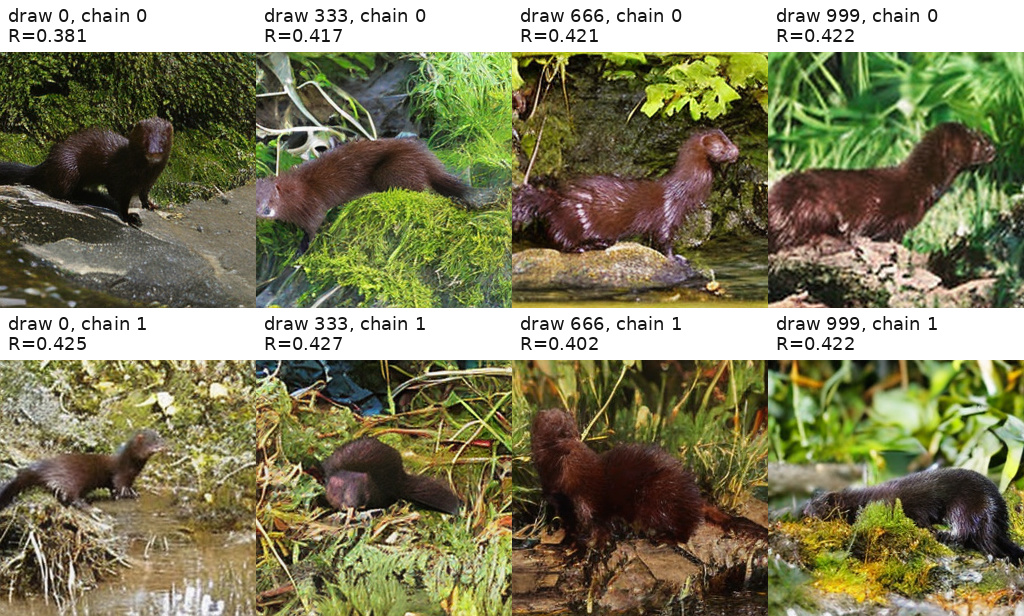}
            \caption{iMF + SPT + hybrid.}
            \label{fig:reward_final_mink_forest_stream_tau200_r8_l8_v1:imf_spt_hybrid}
        \end{subfigure}

        \caption{Class: mink, prompt: \textit{``A photograph of a mink beside a clear forest stream, surrounded by green moss.''}}
        \label{fig:reward_final_mink_forest_stream_tau200_r8_l8_v1}
    \end{minipage}%
    \hfill
    % Right: mink snow painting
    \begin{minipage}[t]{0.48\linewidth}
        \vspace{0pt}
        \captionsetup{type=figure}
        \centering

        \begin{subfigure}{\linewidth}
            \centering
            \includegraphics[width=\linewidth]{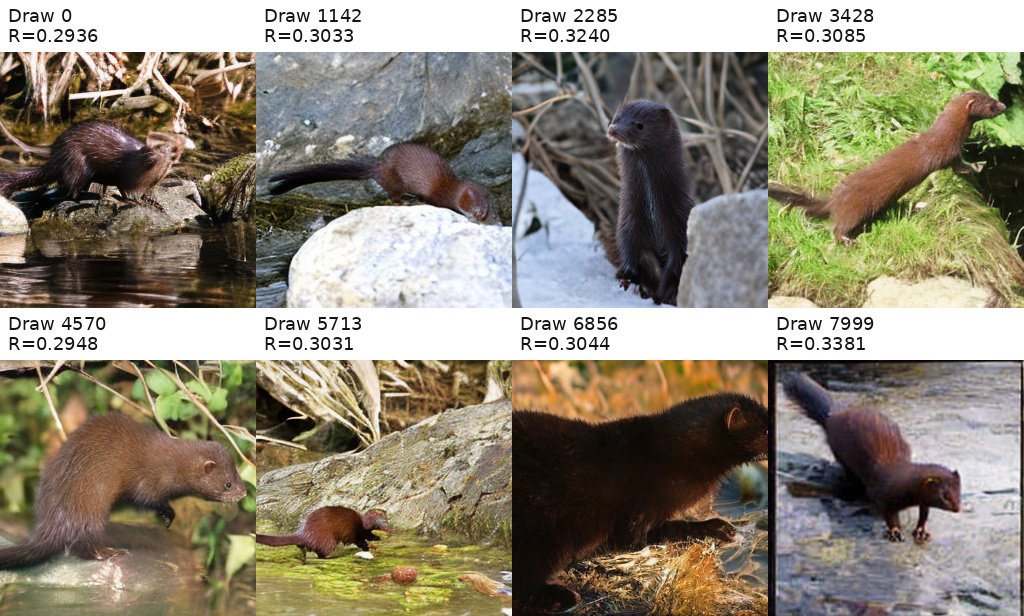}
            \caption{best-of-K.}
            \label{fig:reward_final_mink_snow_painting_tau200_r8_l8_v1:imf_best_of_12}
        \end{subfigure}
        \par\medskip

        \begin{subfigure}{\linewidth}
            \centering
            \includegraphics[width=\linewidth]{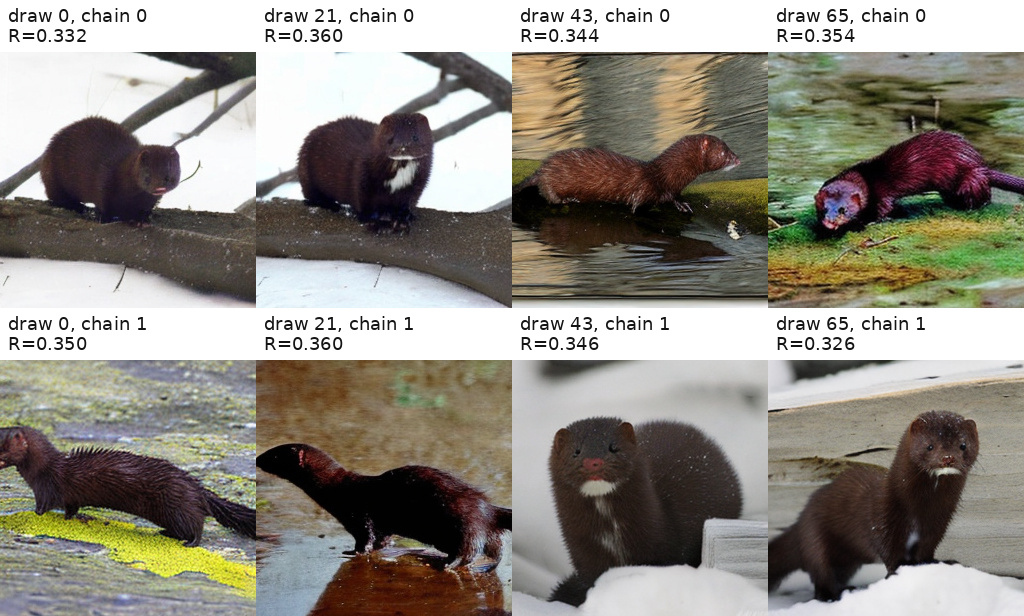}
            \caption{SiT + SPT + pCN.}
            \label{fig:reward_final_mink_snow_painting_tau200_r8_l8_v1:sit_spt_pcn}
        \end{subfigure}
        \par\medskip

        \begin{subfigure}{\linewidth}
            \centering
            \includegraphics[width=\linewidth]{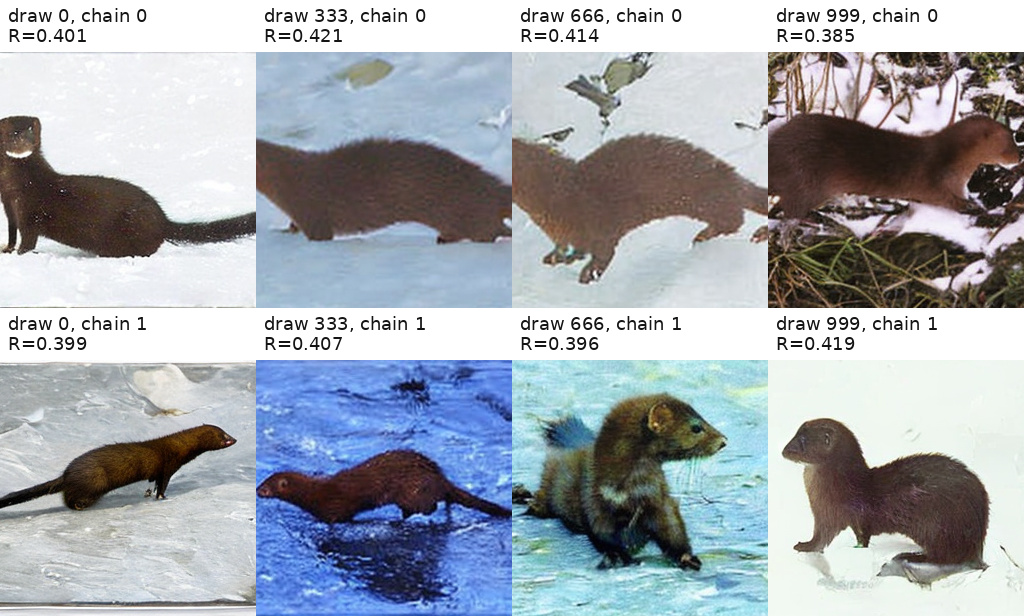}
            \caption{iMF + SPT + pCN.}
            \label{fig:reward_final_mink_snow_painting_tau200_r8_l8_v1:imf_spt_pcn}
        \end{subfigure}
        \par\medskip

        \begin{subfigure}{\linewidth}
            \centering
            \includegraphics[width=\linewidth]{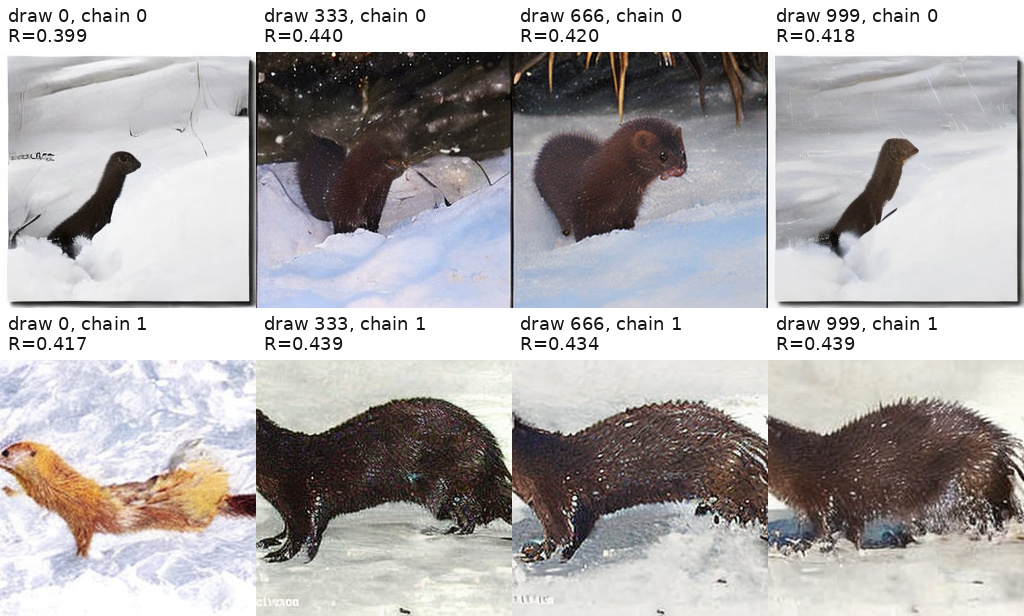}
            \caption{iMF + SPT + hybrid.}
            \label{fig:reward_final_mink_snow_painting_tau200_r8_l8_v1:imf_spt_hybrid}
        \end{subfigure}

        \caption{Class: mink, prompt: \textit{``A colorful painting of a mink standing in falling snow.''}}
        \label{fig:reward_final_mink_snow_painting_tau200_r8_l8_v1}
    \end{minipage}
\end{figure}

\begin{figure}[!htb]
    \centering
    % Left: castle sunrise
    \begin{minipage}[t]{0.48\linewidth}
        \vspace{0pt}
        \captionsetup{type=figure}
        \centering

        \begin{subfigure}{\linewidth}
            \centering
            \includegraphics[width=\linewidth]{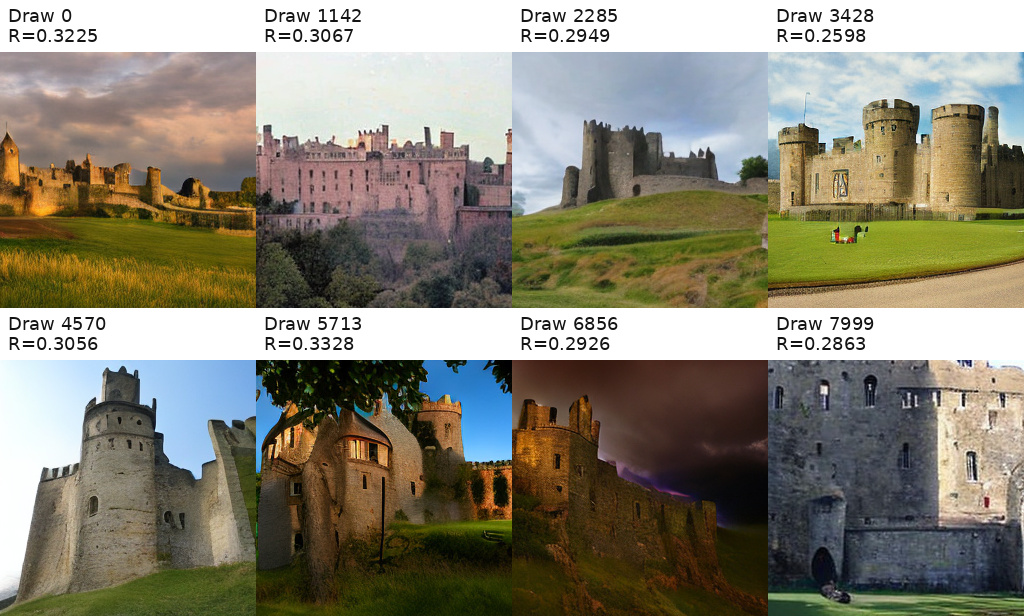}
            \caption{best-of-K.}
            \label{fig:reward_final_castle_sunrise_tau200_r8_l8_v1:imf_best_of_12}
        \end{subfigure}
        \par\medskip

        \begin{subfigure}{\linewidth}
            \centering
            \includegraphics[width=\linewidth]{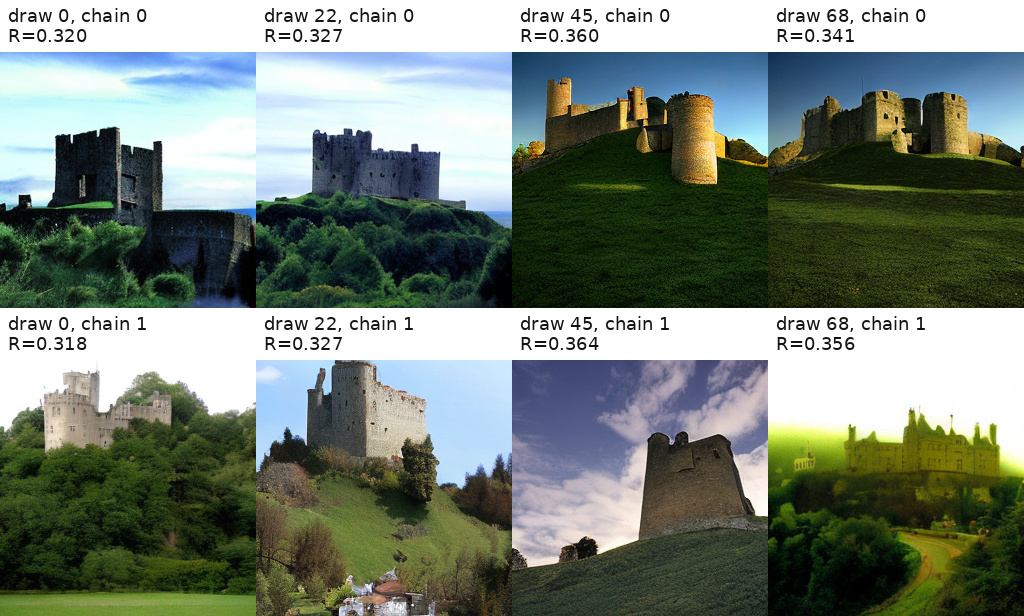}
            \caption{SiT + SPT + pCN.}
            \label{fig:reward_final_castle_sunrise_tau200_r8_l8_v1:sit_spt_pcn}
        \end{subfigure}
        \par\medskip

        \begin{subfigure}{\linewidth}
            \centering
            \includegraphics[width=\linewidth]{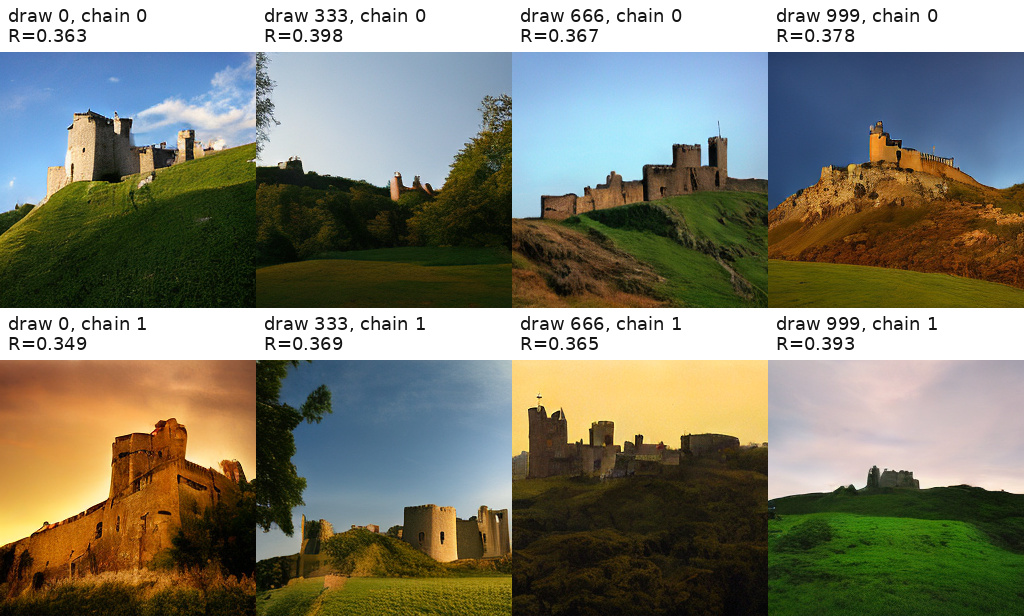}
            \caption{iMF + SPT + pCN.}
            \label{fig:reward_final_castle_sunrise_tau200_r8_l8_v1:imf_spt_pcn}
        \end{subfigure}
        \par\medskip

        \begin{subfigure}{\linewidth}
            \centering
            \includegraphics[width=\linewidth]{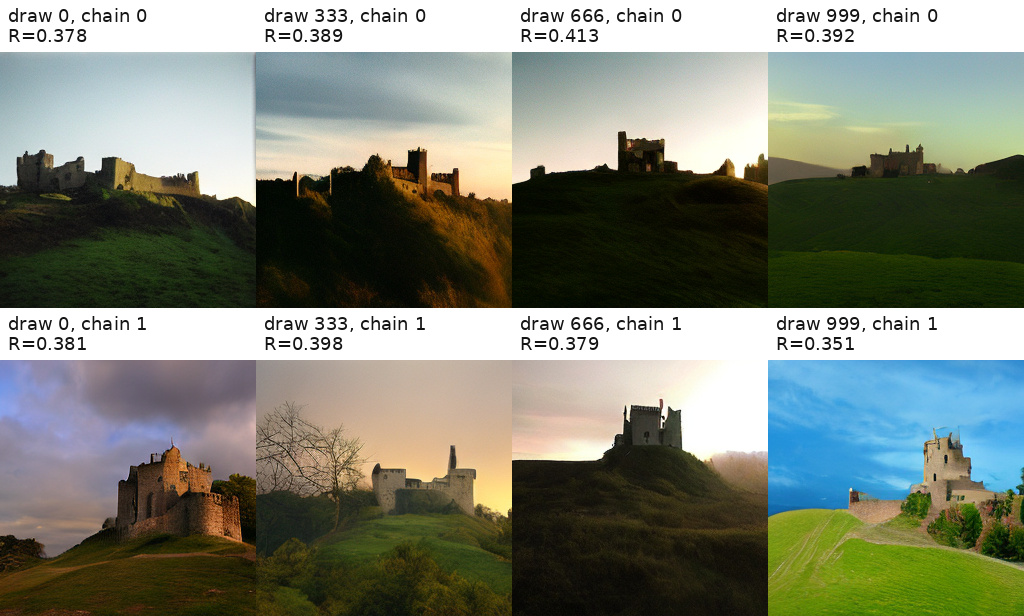}
            \caption{iMF + SPT + hybrid.}
            \label{fig:reward_final_castle_sunrise_tau200_r8_l8_v1:imf_spt_hybrid}
        \end{subfigure}

        \caption{Class: castle, prompt: \textit{``A photograph of a medieval castle on a hill at sunrise, surrounded by green countryside.''}}
        \label{fig:reward_final_castle_sunrise_tau200_r8_l8_v1}
    \end{minipage}%
    \hfill
    % Right: castle snow painting
    \begin{minipage}[t]{0.48\linewidth}
        \vspace{0pt}
        \captionsetup{type=figure}
        \centering

        \begin{subfigure}{\linewidth}
            \centering
            \includegraphics[width=\linewidth]{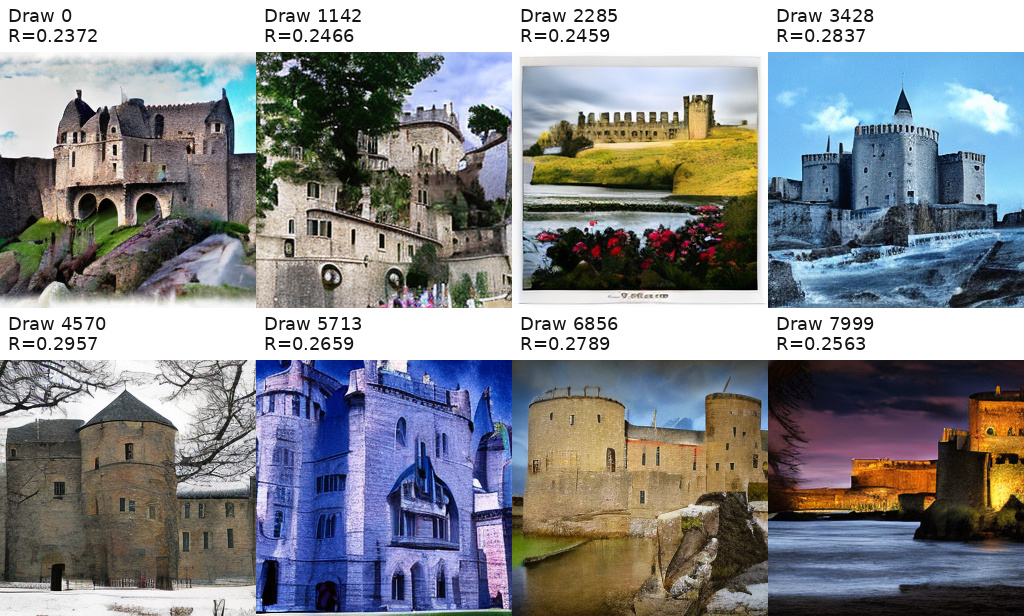}
            \caption{best-of-K.}
            \label{fig:reward_final_castle_snow_painting_tau200_r8_l8_v1:imf_best_of_12}
        \end{subfigure}
        \par\medskip

        \begin{subfigure}{\linewidth}
            \centering
            \includegraphics[width=\linewidth]{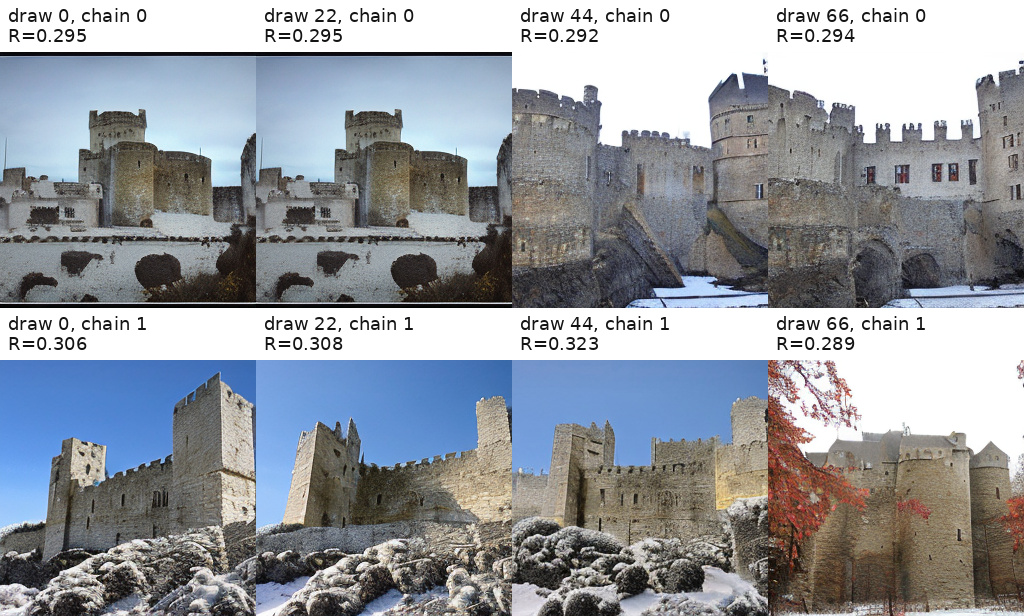}
            \caption{SiT + SPT + pCN.}
            \label{fig:reward_final_castle_snow_painting_tau200_r8_l8_v1:sit_spt_pcn}
        \end{subfigure}
        \par\medskip

        \begin{subfigure}{\linewidth}
            \centering
            \includegraphics[width=\linewidth]{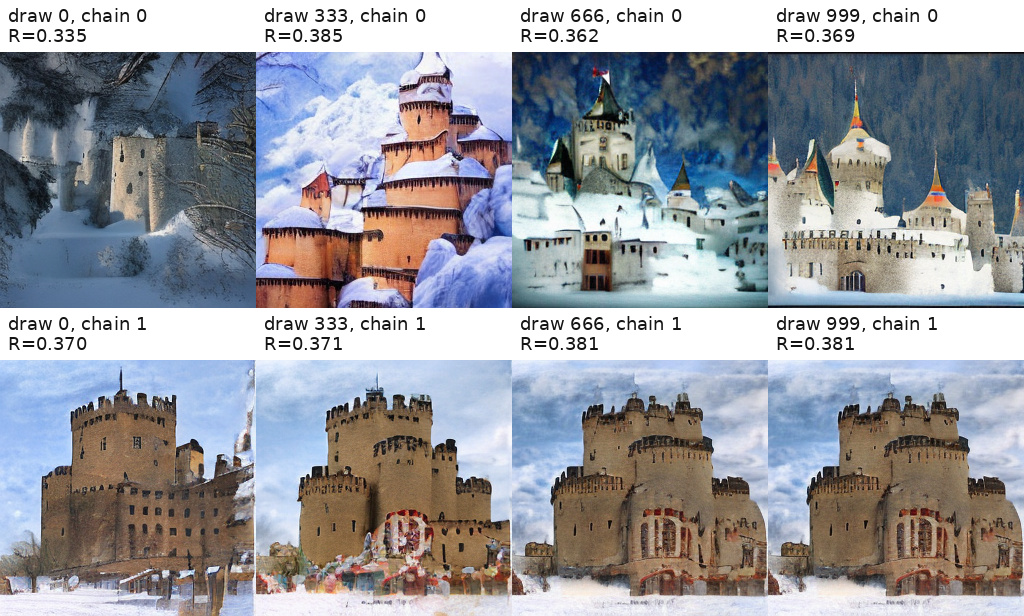}
            \caption{iMF + SPT + pCN.}
            \label{fig:reward_final_castle_snow_painting_tau200_r8_l8_v1:imf_spt_pcn}
        \end{subfigure}
        \par\medskip

        \begin{subfigure}{\linewidth}
            \centering
            \includegraphics[width=\linewidth]{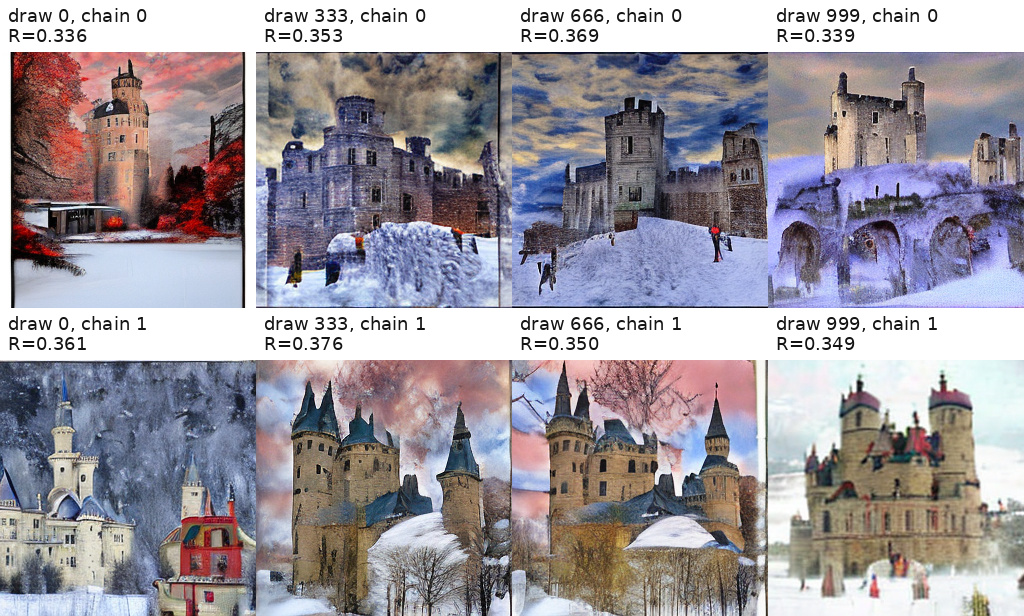}
            \caption{iMF + SPT + hybrid.}
            \label{fig:reward_final_castle_snow_painting_tau200_r8_l8_v1:imf_spt_hybrid}
        \end{subfigure}

        \caption{Class: castle, prompt: \textit{``A colorful painting of a medieval castle covered in snow.''}}
        \label{fig:reward_final_castle_snow_painting_tau200_r8_l8_v1}
    \end{minipage}
\end{figure}

\begin{figure}[!htb]
    \centering
    % Left: aircraft carrier sunset ocean
    \begin{minipage}[t]{0.48\linewidth}
        \vspace{0pt}
        \captionsetup{type=figure}
        \centering

        \begin{subfigure}{\linewidth}
            \centering
            \includegraphics[width=\linewidth]{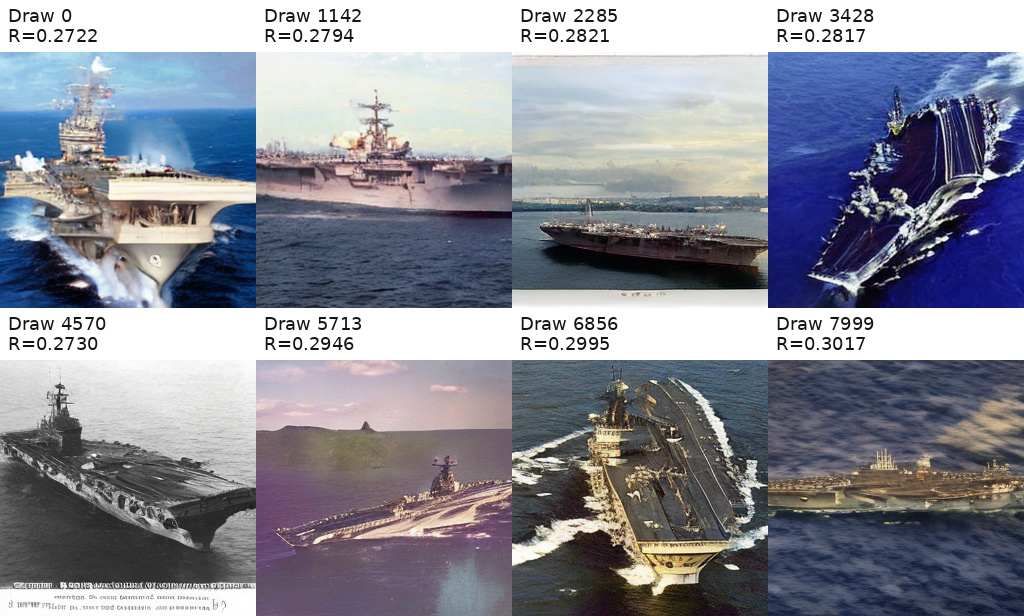}
            \caption{best-of-K.}
            \label{fig:reward_final_aircraft_carrier_sunset_ocean_tau200_r8_l8_v1:imf_best_of_12}
        \end{subfigure}
        \par\medskip

        \begin{subfigure}{\linewidth}
            \centering
            \includegraphics[width=\linewidth]{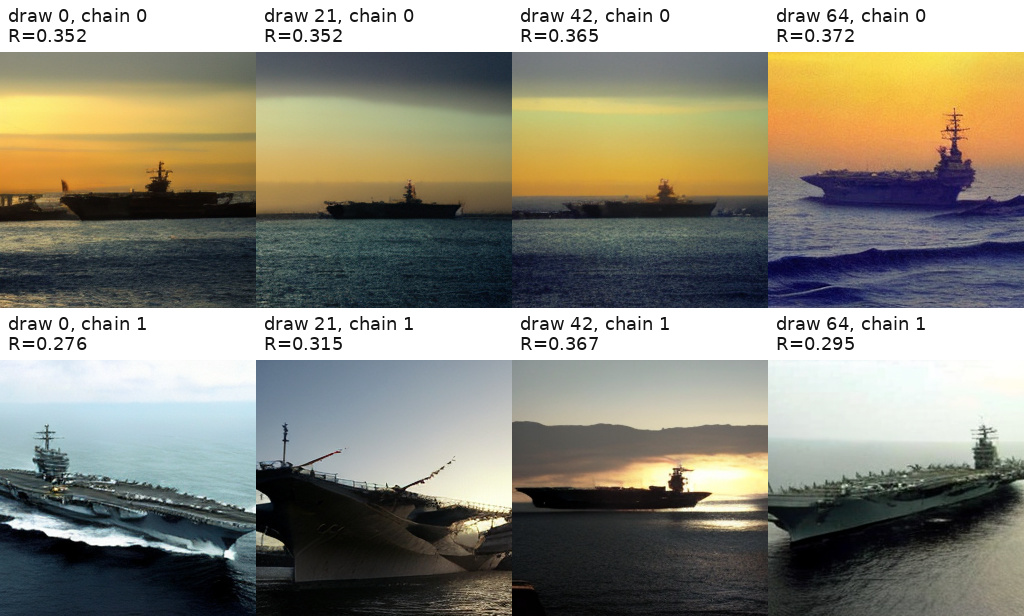}
            \caption{SiT + SPT + pCN.}
            \label{fig:reward_final_aircraft_carrier_sunset_ocean_tau200_r8_l8_v1:sit_spt_pcn}
        \end{subfigure}
        \par\medskip

        \begin{subfigure}{\linewidth}
            \centering
            \includegraphics[width=\linewidth]{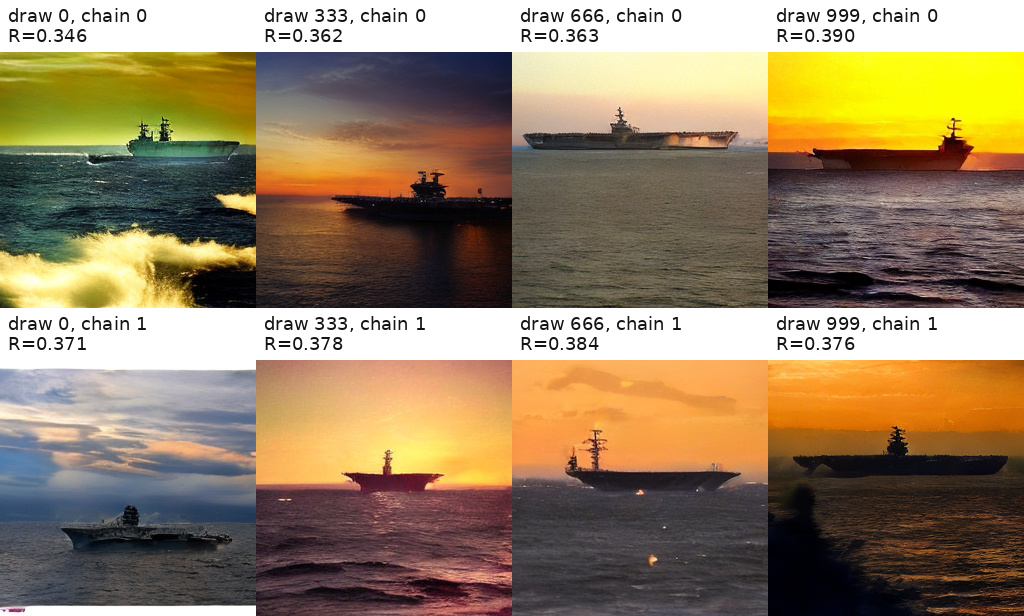}
            \caption{iMF + SPT + pCN.}
            \label{fig:reward_final_aircraft_carrier_sunset_ocean_tau200_r8_l8_v1:imf_spt_pcn}
        \end{subfigure}
        \par\medskip

        \begin{subfigure}{\linewidth}
            \centering
            \includegraphics[width=\linewidth]{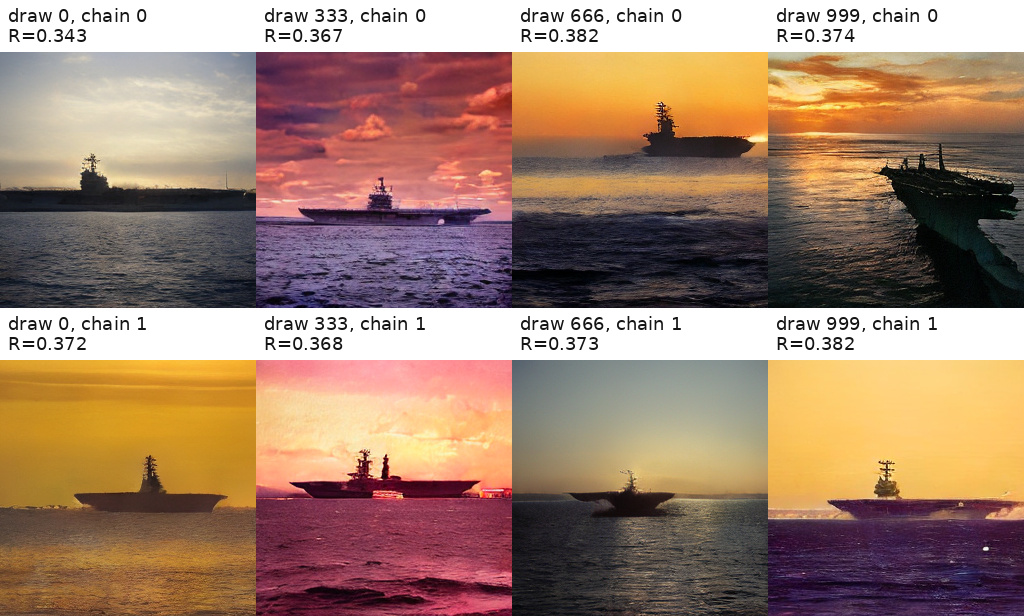}
            \caption{iMF + SPT + hybrid.}
            \label{fig:reward_final_aircraft_carrier_sunset_ocean_tau200_r8_l8_v1:imf_spt_hybrid}
        \end{subfigure}

        \caption{Class: aircraft carrier, prompt: \textit{``A photograph of an aircraft carrier sailing across the ocean at sunset.''}}
        \label{fig:reward_final_aircraft_carrier_sunset_ocean_tau200_r8_l8_v1}
    \end{minipage}%
    \hfill
    % Right: aircraft carrier watercolor
    \begin{minipage}[t]{0.48\linewidth}
        \vspace{0pt}
        \captionsetup{type=figure}
        \centering

        \begin{subfigure}{\linewidth}
            \centering
            \includegraphics[width=\linewidth]{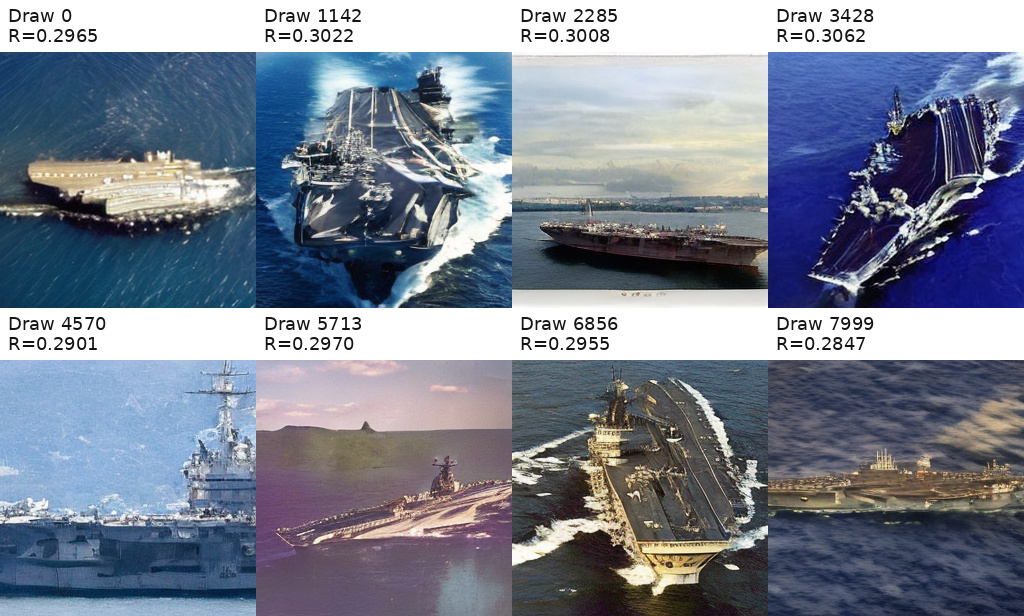}
            \caption{best-of-K.}
            \label{fig:reward_final_aircraft_carrier_watercolor_tau200_r8_l8_v1:imf_best_of_12}
        \end{subfigure}
        \par\medskip

        \begin{subfigure}{\linewidth}
            \centering
            \includegraphics[width=\linewidth]{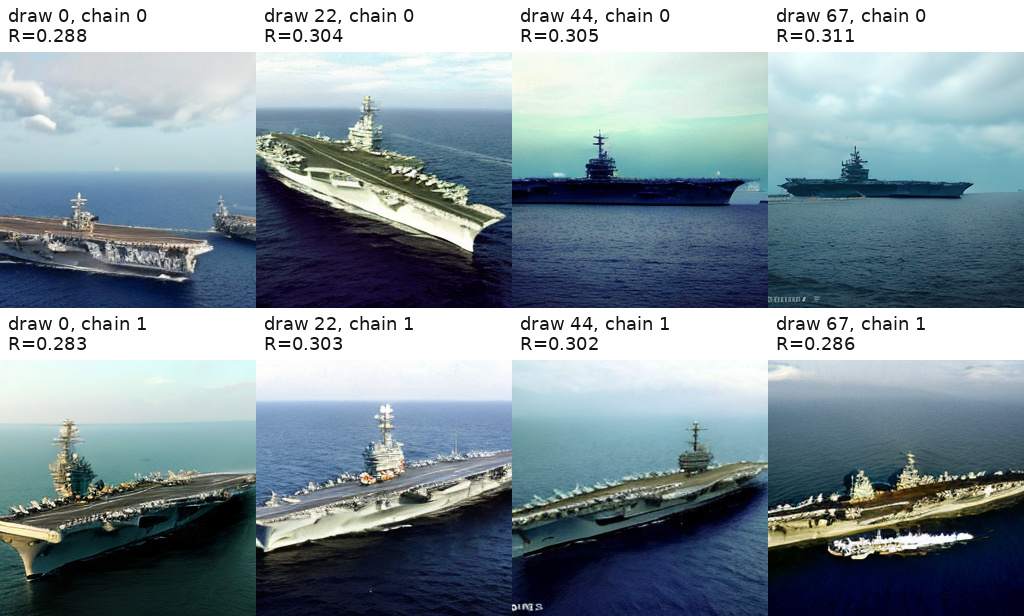}
            \caption{SiT + SPT + pCN.}
            \label{fig:reward_final_aircraft_carrier_watercolor_tau200_r8_l8_v1:sit_spt_pcn}
        \end{subfigure}
        \par\medskip

        \begin{subfigure}{\linewidth}
            \centering
            \includegraphics[width=\linewidth]{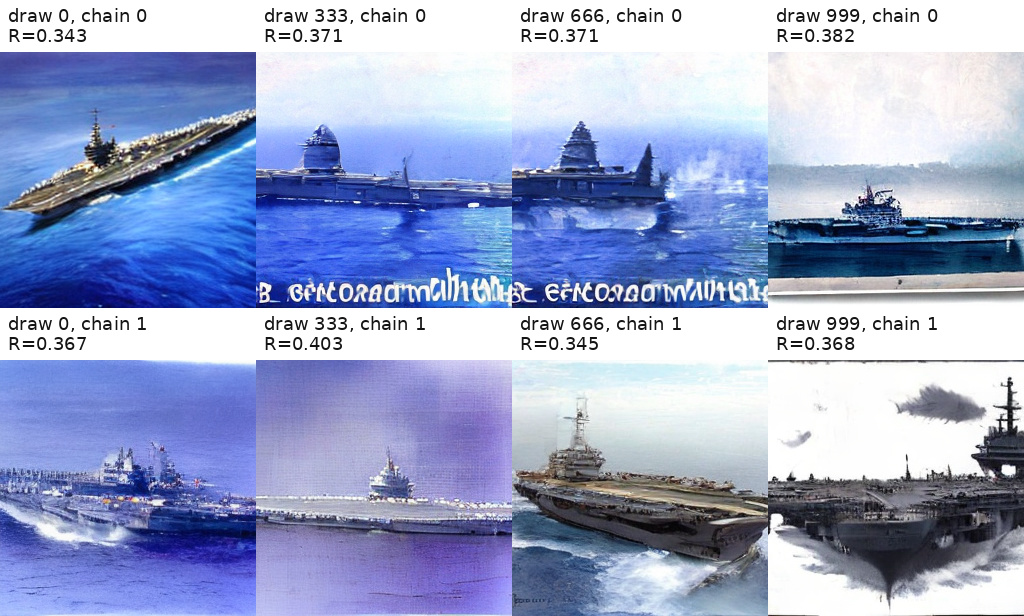}
            \caption{iMF + SPT + pCN.}
            \label{fig:reward_final_aircraft_carrier_watercolor_tau200_r8_l8_v1:imf_spt_pcn}
        \end{subfigure}
        \par\medskip

        \begin{subfigure}{\linewidth}
            \centering
            \includegraphics[width=\linewidth]{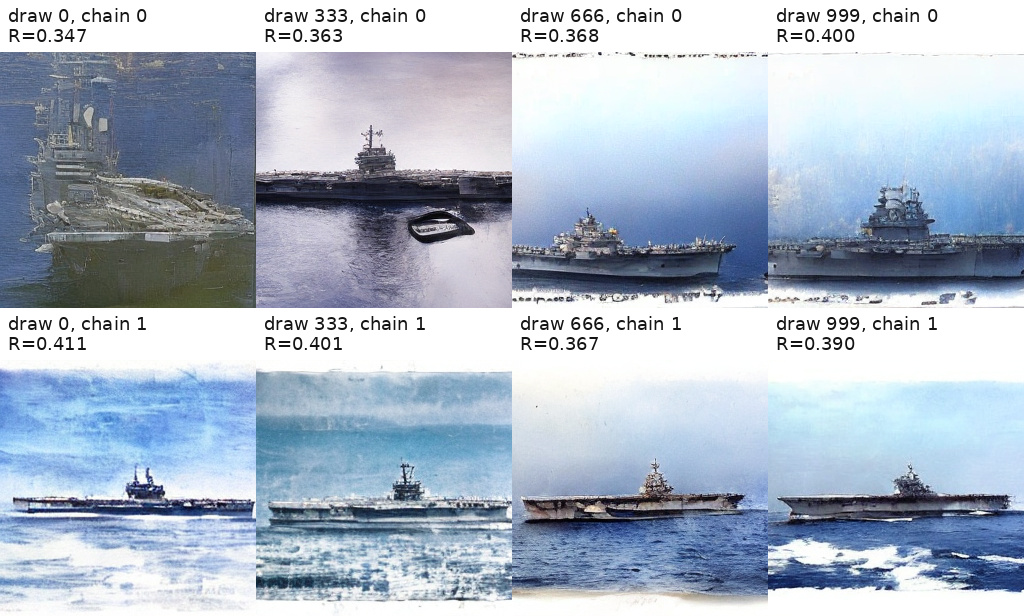}
            \caption{iMF + SPT + hybrid.}
            \label{fig:reward_final_aircraft_carrier_watercolor_tau200_r8_l8_v1:imf_spt_hybrid}
        \end{subfigure}

        \caption{Class: aircraft carrier, prompt: \textit{``A watercolor painting of an aircraft carrier on a calm blue sea.''}}
        \label{fig:reward_final_aircraft_carrier_watercolor_tau200_r8_l8_v1}
    \end{minipage}
\end{figure}

\end{document}